\long\def\comment#1{}
\newfont{\bbb}{msbm10 scaled 700}
\newfont{\bb}{msbm10 scaled 1100}
\newcommand{\CC}{\mbox{\bb C}}
\newcommand{\RR}{\mbox{\bb R}}
\newcommand{\mbs}[1]{\bm{#1}}
\newcommand{\vect}[1]{{\lowercase{\mbs{#1}}}}
\newcommand{\mat}[1]{{\uppercase{\mbs{#1}}}}
\newcommand{\Pmatrix}[1]{\begin{array}{ll}#1\end{array}}
\newcommand{\T}{{\scriptscriptstyle\mathsf{T}}}
\renewcommand{\H}{\scriptscriptstyle\mathsf{H}}
\renewcommand{\Re}[1][]{\ifthenelse{\isempty{#1}}{\operatorname{Re}}{\operatorname{Re}\left(#1\right)}}
\renewcommand{\Im}[1][]{\ifthenelse{\isempty{#1}}{\operatorname{Im}}{\operatorname{Im}\left(#1\right)}}
\newcommand{\av}{\vect{a}}
\newcommand{\ev}{\vect{e}}
\newcommand{\iv}{\vect{i}}
\newcommand{\jv}{\vect{j}}
\newcommand{\kv}{\vect{k}}
\newcommand{\mv}{\vect{m}}
\newcommand{\nv}{\vect{n}}
\newcommand{\tv}{\vect{t}}
\newcommand{\uv}{\vect{u}}
\newcommand{\vv}{\vect{v}}
\newcommand{\xv}{\vect{x}}
\newcommand{\zv}{\vect{z}}
\newcommand{\omegav}{\hbox{\boldmath$\omega$}}
\newcommand{\Deltam}{\hbox{\boldmath$\Delta$}}
\newcommand{\Pim}{\hbox{\boldmath$\Pi$}}
\newcommand{\Am}{\mat{a}}
\newcommand{\Bm}{\mat{b}}
\newcommand{\Cm}{\mat{c}}
\newcommand{\Em}{\mat{e}}
\newcommand{\Fm}{\mat{f}}
\newcommand{\Jm}{\mat{j}}
\newcommand{\Km}{\mat{k}}
\newcommand{\Lm}{\mat{l}}
\newcommand{\Pm}{\mat{p}}
\newcommand{\Rm}{\mat{r}}
\newcommand{\Tm}{\mat{t}}
\newcommand{\Xm}{\mat{x}}
\newcommand{\Ym}{\mat{y}}
\newcommand{\Bc}{{\mathcal B}}
\newcommand{\Lc}{{\mathcal L}}
\newcommand{\Mc}{{\mathcal M}}
\newcommand{\Id}{\mat{{I}}}
\newcommand{\one}{\mat{\mathrm{1}}}
\newcommand{\zero}{\mat{\mathrm{0}}}
\newcommand{\CN}[1][]{\ifthenelse{\isempty{#1}}{\mathcal{N}_{\mathbb{C}}}{\mathcal{N}_{\mathbb{C}}\left(#1\right)}}
\renewcommand{\P}[1][]{\ifthenelse{\isempty{#1}}{\mathbb{P}}{\mathbb{P}\left(#1\right)}}
\newcommand{\E}[1][]{\ifthenelse{\isempty{#1}}{\mathbb{E}}{\mathbb{E}\left(#1\right)}}
\renewcommand{\det}[1][]{\ifthenelse{\isempty{#1}}{\mathrm{det}}{\mathrm{det}\left(#1\right)}}
\newcommand{\trace}[1][]{\ifthenelse{\isempty{#1}}{\mathrm{tr}}{\mathrm{tr}\left(#1\right)}}
\newcommand{\rank}[1][]{\ifthenelse{\isempty{#1}}{\mathrm{rank}}{\mathrm{rank}\left(#1\right)}}
\newcommand{\diag}[1][]{\ifthenelse{\isempty{#1}}{\mathrm{diag}}{\mathrm{diag}\left(#1\right)}}
\DeclarePairedDelimiter\abs{\lvert}{\rvert}
\DeclarePairedDelimiter\Abs{\lvert}{\rvert^2}
\DeclarePairedDelimiter\norm{\lVert}{\rVert}
\DeclarePairedDelimiter\Norm{\lVert}{\rVert^2}
\DeclarePairedDelimiter\normf{\lVert}{\rVert_{\mathrm{F}}}
\DeclarePairedDelimiter\Normf{\lVert}{\rVert^2_{\mathrm{F}}}
\renewcommand{\Re}{{\rm Re}}
\renewcommand{\Im}{{\rm Im}}
\renewcommand{\vec}{{\rm vec}}
\DeclareMathAlphabet{\mathcal}{OMS}{cmsy}{m}{n}
\newcommand{\defeq}{\triangleq}
\newtheorem{remark}{Remark}
\newtheorem{theorem}{Theorem}
\newtheorem{lemma}{Lemma}
\renewcommand{\circ}{\mathrm{circ}}
\title{Asymptotic Singular Value Distribution of Linear Convolutional Layers}
\author{%
  Xinping Yi\\
  Department of Electrical Engineering and Electronics\\
  University of Liverpool\\
  \texttt{xinping.yi@liverpool.ac.uk} 
}
\renewcommand{\H}{\scriptscriptstyle\mathsf{H}}
\begin{document}

\maketitle

\begin{abstract}
In convolutional neural networks, the linear transformation of multi-channel two-dimensional convolutional layers with \emph{linear} convolution is a block matrix with doubly Toeplitz blocks. 
Although a ``wrapping around'' operation can transform linear convolution to a circular one, by which the singular values can be approximated with reduced computational complexity by those of a block matrix with doubly circulant blocks, the accuracy of such an approximation is not guaranteed.
In this paper, we propose to inspect such a linear transformation matrix through its asymptotic spectral representation - the spectral density matrix - by which we develop a simple singular value approximation method with improved accuracy over the circular approximation, as well as upper bounds for spectral norm with reduced computational complexity. Compared with the circular approximation, we obtain moderate improvement with a subtle adjustment of the singular value distribution. We also demonstrate that the spectral norm upper bounds are effective spectral regularizers for improving generalization performance in ResNets.
\end{abstract}

\section{Introduction}
\label{intro}
The last decade has witnessed the success of convolutional neural networks (CNNs) in various artificial intelligence applications, such as computer vision and natural language processing.
In CNNs, convolutional layers perform efficient linear transformation from their input data through a linear or circular convolutional operation. Inspecting such a linear transformation lends itself to theoretical understanding of the behaviors of convolutional layers in CNNs with respect to, e.g., stability, generalization performance, and gradient explosion or vanishing effects.

Such a linear transformation of convolutional layers plays the same role as the weight matrix of the fully-connected layers. As a powerful means to matrix analysis, spectral methods have been applied to understand the properties of the weight matrices of deep neural networks through inspecting the behavior of their singular values, such as \citet{bartlett2017spectrally,sedghi2018the,Singla2019,miyato2018spectral,yoshida2017spectral,farnia2018generalizable,neyshabur2017pac,roth2019adversarial,Long2020Generalization}, to name just a few. 

The singular values play a key role in spectral analysis of deep neural networks, where spectral norm is the largest singular value and Frobenius norm involves all singular values of the weight matrices. 
It has been shown in \citet{neyshabur2017pac} that generalization error is upper bounded by spectral and Frobenius norms of the weight matrices of the layers, for which suppressing singular values can reduce the gap and therefore enhance the generalization performance. 
In addition, for CNNs, spectral regularization has been also applied to convolutional layers so as to guide the training process by e.g., clipping singular values within an interval to avoid explosion or vanishing of gradients \citet{sedghi2018the}, and bounding spectral norms to enhance generalization performance and robustness against adversarial examples \citet{yoshida2017spectral,Singla2019,miyato2018spectral}.

However, as the size of such linear transformation matrices grows with the input size of the layers, it is computationally challenging to find singular values. The straightforward singular value decomposition (SVD) incurs huge computational burden, which is even worse when singular values are required to be computed during the training process to guide spectral regularization and normalization \citet{miyato2018spectral,yoshida2017spectral}.
Fortunately, the structures of the linear transformation matrices can be exploited to reduce the computational complexity of SVDs. Of particular relevance is the work by \citet{sedghi2018the,bibi2018deep}, in which the linear convolutional layer is treated as a circular one by a ``wrapping round'' operation. In doing so, the linear transformation matrices are endowed with a circulant structure, by which efficient methods were proposed to compute a circular approximation of the convolutional layers with substantially reduced complexity. To further reduce computational complexity, upper bounds of spectral norm of the circular convolutional layers were derived in \citet{Singla2019} at the expense of degraded accuracy.

As a matter of fact, such a ``wrapping round'' operation is not always endowed in many convolutional layers, for which a linear, rather than circular, convolutional operation is applied. With such a linear convolution, the linear transformation matrix has a Toeplitz structure, which includes the circulant one as a special case. This has been pointed out by a number of previous works, e.g., \citet{goodfellow2016deep,wang2019orthogonal,appuswamy2016structured}, that the two-dimensional single-channel convolutional layer results in a doubly block Toeplitz matrix. A question then arises as to how close is the circular approximation to the exact linear Toeplitz case.\footnote{Although some theoretical analysis bounded the gap between large Toeplitz and circulant matrices \citet{ZhuTIT2017}, it seems only applied to Hermitian matrices (or symmetric for real matrices). The linear transformation matrices of linear convolution are {\em asymmetric} real matrices, which are {\em non-Hermitian} matrices.} This motivates the current work.

\paragraph{Our Contributions}
In this paper, we consider the linear convolutional layers, with main focus on the multi-channel two-dimensional linear convolution with stride size of 1, so that the linear transformation matrix is a block matrix with each block being a doubly Toeplitz matrix. By rows and columns permutation, we construct an alternative representation as a doubly block Toeplitz matrix with each element being a matrix, for which the singular values of both representations are identical. 
As such, we propose a spectral representation of the linear transformation matrix
by a spectral density matrix, 
by which the spectral analysis of the former can be alternatively done on the latter. Specifically, the main contributions are three-fold:
\begin{itemize}
\vspace{-2pt}
   \item The singular value distribution of linear transformation matrix of CNNs is cast to that of its spectral density matrix, thanks to an extension of the celebrated Szeg\"o Theorem for Hermitian Toeplitz matrices to non-Hermitian block doubly Toeplitz matrices. In doing so, the asymptotic spectral analysis of the linear convolutional layers can be alternatively done by inspecting the corresponding spectral density matrix. The circular convolution by ``wrapping around'' is a special case of such a spectral representation, by which the singular values can also be produced by uniformly sampling the spectral density matrix.
    \item By treating singular values of the spectral density matrix as random variables, the individual singular value distribution can be quantified by a quantile function. As such, we propose a simple yet effective algorithm to compute singular values of linear convolutional layers by subtly adjusting the singular value distribution obtained from the circular approximation.
    \item To upper-bound the spectral norm of the linear transformation matrix, we instead upper-bound that of its corresponding spectral density matrix. As a consequence, we come up with three spectral norm bounds that can be used for spectral regularization.
    \vspace{-2pt}
\end{itemize}
Experimental results demonstrate the superior accuracy of our singular value approximation method and the effectiveness of spectral norm bounds for regularization with respect to generalization in practical CNN models, e.g., ResNets. Notations and preliminaries can be found in Section \ref{sec:prelim}.

\section{Convolutional Neural Networks}
\subsection{Linear Convolutional Layer}
We consider multiple-channel two-dimensional {\em linear} convolutional layers with {\em arbitrary padding} schemes in CNNs before applying activation functions and pooling. For ease of presentation, we first consider the stride size 1, and the extension to larger stride size will be discussed in Section \ref{sec:discussion}.

Let the input be $\Xm \in \mathbb{R}^{c_{in}\times n \times n}$ and the linear convolutional filter be $\Km \in \mathbb{R}^{c_{out} \times c_{in} \times h \times w}$ with $h,w \le n$, where $n,h,w,c_{in},c_{out}$ are input size, filter height, filter width, the numbers of input and output channels, respectively. For convenience, we let the output $\Ym$ have the same size as the input $\Xm$ by arbitrary padding strategies,
and abuse $\Xm$ as the input with padding. 
By applying linear convolution of the filter $\Km$ to the input $\Xm$, the output $\Ym \in \mathbb{R}^{c_{out}\times n \times n}$ can be given by
\begin{align}
\Ym_{c,r,s}=\sum_{d=1}^{c_{in}} \sum_{p=1}^n \sum_{q=1}^n \Xm_{d,r+p,s+q} \Km_{c,d,p,q}
\end{align}
for $r,s \in [n]$ and $c \in [c_{out}]$ where $\Km_{c,d,p,q}=0$ if $p,q$ exceed the ranges of $h,w$.
A compact form of the above input-output relation can be rewritten as
\begin{align}
    \vec(\Ym) = \Am \vec(\Xm),
\end{align}
where $\Am \in \mathbb{R}^{c_{out}n^2 \times c_{in} n^2}$ is the linear transformation matrix of the convolutional layer. 
For the general case with multiple-input and multiple-output channels, the linear transformation can be represented as a $c_{out}\times c_{in}$ block matrix, i.e.,
\begin{align} \label{eq:M-1}
\Am = \begin{bmatrix}
\Am_{1,1} & \Am_{1,2} & \dots & \Am_{1,c_{in}}\\
\Am_{2,1} & \Am_{2,2} & \dots & \Am_{2,c_{in}}\\
\vdots & \vdots & & \vdots\\
\Am_{c_{out},1} & \Am_{c_{out},2} & \dots & \Am_{c_{out},c_{in}}
\end{bmatrix},
\end{align}
where each block is a doubly Toeplitz matrix, i.e., $[\Am_{c,d}]_{i_1,j_1}=\Am_{i_1-j_1}^{c,d}$ with $[\Am_{k}^{c,d}]_{i_2,j_2}=a_{k,i_2-j_2}^{c,d}$ (See a concrete representation in Section \ref{sec:prelim-toe}). In matrix analysis, $\Am$ is usually referred to as multi-block multi-level (doubly) Toeplitz matrix.
For $k\in [-h_1:h_2]$ and $l \in [-w_1:w_2]$,
we have
\begin{align} \label{eq:M4}
a_{k,l}^{c,d} = \Km_{c,d,h_1+k+1,w_1+l+1},
\end{align}
for all $c \in [c_{out}]$ and $d \in [c_{in}]$.

\subsection{Alternative Representation}
\label{sec:section2.2}
For ease of spectral analysis, we transform $\Am$ into a multi-level block Toeplitz matrix (whose entries of the last level are matrices) via vec-permutation operation \citet{henderson1981vec}, for which the matrix spectrum keeps unchanged. 

Denote by $\Tm \in \RR^{c_{out}n^2 \times c_{in} n^2}$ the alternative representation as a block Toeplitz matrix with 
$[\Tm]_{i_1,j_1}=\Tm_{i_1-j_1}$ where $[\Tm_k]_{i_2,j_2}=\Tm_{k,i_2-j_2}$ (See a concrete representation in Section \ref{sec:prelim-toe}).
For $k \in [-h_1:h_2]$ and $l\in [-w_1:w_2]$, each block $\Tm_{k,l} \in \RR^{c_{out} \times c_{in}}$ is given by %
\begin{align} \label{eq:toe-blocks}
\Tm_{k,l}=\begin{bmatrix}
t_{1,1}^{k,l} & t_{1,2}^{k,l} & \cdots & t_{1,c_{in}}^{k,l}\\
t_{2,1}^{k,l} & t_{2,2}^{k,l} & \cdots & t_{2,c_{in}}^{k,l}\\
\vdots & \ddots & \ddots & \vdots \\
t_{c_{out},1}^{k,l} & t_{c_{out},2}^{k,l} & \cdots & t_{c_{out},c_{in}}^{k,l}
\end{bmatrix}.
\end{align}
By such an alternative representation, we have
\begin{align} \label{eq:T_k_l}
t_{c,d}^{k,l}=\Km_{c,d,h_1+k+1,w_1+l+1} = a_{k,l}^{c,d},
\end{align}
for all $c \in [c_{out}]$ and $d \in [c_{in}]$.
In what follows, we show that the alternative representation $\Tm$ of the linear convolutional layers has the identical spectrum structure as the original form $\Am$.
\begin{lemma} \label{lemma:toe-permutation-invariant}
$\{\sigma_j(\Tm), \; \forall j\}=\{\sigma_j(\Am), \; \forall j\}$.
\end{lemma}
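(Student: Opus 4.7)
The plan is to exhibit $\Tm$ as $\Pm_1 \Am \Pm_2^\trasp$ for two permutation (hence orthogonal) matrices $\Pm_1,\Pm_2$, after which the conclusion is immediate because orthogonal multiplications on either side preserve singular values.

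First I would label the rows and columns of $\Am$ in terms of the three hierarchical indices induced by \eqref{eq:M-1}: a row of $\Am$ is indexed by a triple $(c,i_1,i_2)$ with $c\in[c_{out}]$ and $i_1,i_2\in[n]$, and a column by $(d,j_1,j_2)$ with $d\in[c_{in}]$ and $j_1,j_2\in[n]$. Using the doubly Toeplitz structure $[\Am_{c,d}]_{i_1,j_1}=\Am_{i_1-j_1}^{c,d}$ together with $[\Am_k^{c,d}]_{i_2,j_2}=a_{k,i_2-j_2}^{c,d}$, the corresponding entry of $\Am$ is
\begin{equation*}
\Am_{(c,i_1,i_2),(d,j_1,j_2)} \;=\; a_{i_1-j_1,\,i_2-j_2}^{c,d}.
\end{equation*}
By the same reading applied to $\Tm$ via \eqref{eq:toe-blocks}, a row of $\Tm$ is indexed by $(i_1,i_2,c)$ and a column by $(j_1,j_2,d)$, and
\begin{equation*}
\Tm_{(i_1,i_2,c),(j_1,j_2,d)} \;=\; t_{c,d}^{i_1-j_1,\,i_2-j_2} \;=\; a_{i_1-j_1,\,i_2-j_2}^{c,d},
\end{equation*}
where the second equality is exactly \eqref{eq:T_k_l}. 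Hence the two matrices contain the same multiset of entries, and the passage from $\Am$ to $\Tm$ is nothing more than a reordering of rows and columns that moves the channel index from the outermost position to the innermost one.

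Next I would make this reordering explicit. Let $\Pm_1\in\RR^{c_{out}n^2\times c_{out}n^2}$ be the permutation matrix whose action on a standard basis vector $\ev_{(c,i_1,i_2)}$ sends it to $\ev_{(i_1,i_2,c)}$, and let $\Pm_2\in\RR^{c_{in}n^2\times c_{in}n^2}$ be the analogous permutation that sends $\ev_{(d,j_1,j_2)}$ to $\ev_{(j_1,j_2,d)}$. These are precisely the vec-permutation (commutation) matrices of \citet{henderson1981vec} alluded to in Section \ref{sec:section2.2}, and each is orthogonal. By the two entrywise identities above we have $\Tm = \Pm_1\, \Am\, \Pm_2^{\trasp}$.

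Finally, since $\Pm_1$ and $\Pm_2$ are orthogonal, their singular values are all equal to $1$; multiplying by them on the left or right preserves the set of singular values. Therefore $\{\sigma_j(\Tm):j\}=\{\sigma_j(\Pm_1\Am\Pm_2^\trasp):j\}=\{\sigma_j(\Am):j\}$, establishing the lemma. The only place that needs care is bookkeeping the triple-index ordering so that the permutation matrices are correctly identified; once the indexing is pinned down the argument is a direct invariance-under-orthogonal-multiplication statement.
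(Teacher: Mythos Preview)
Your proof is correct and follows essentially the same approach as the paper: both establish $\Tm=\Pim_1\Am\Pim_2$ for permutation (hence orthogonal) matrices coming from the vec-permutation of \citet{henderson1981vec}, and then conclude by invariance of singular values under orthogonal multiplication. The only cosmetic difference is that the paper reaches this identity via a Kronecker-product expansion $\Am=\sum a_{k,l}^{c,d}\,\Em_{c,d}\otimes\Pm_k\otimes\Pm_l$ and $\Tm=\sum t_{c,d}^{k,l}\,\Pm_k\otimes\Pm_l\otimes\Em_{c,d}$ before invoking the commutation-matrix relation, whereas you obtain it directly by entrywise indexing.
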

Lemma \ref{lemma:toe-permutation-invariant} says the block matrix with doubly Toeplitz matrix blocks (i.e., $\Am$) has the same set of singular values as the block doubly Toeplitz matrix (i.e., $\Tm$). This holds for any Toeplitz matrices which are not necessarily banded, and for any multi-level case but not limited to doubly (i.e., 2-level) Toeplitz case.
Equipped with this lemma, we hereafter treat $\Tm$ as the linear transformation matrix of linear convolutional layers for spectral analysis.

\subsection{Circular Approximation}
The ``wrapping around'' operation  makes linear transformation a circular convolution, which is deemed as a circular approximation of linear convolution.
As $h,w \le n$, we can construct a circulant matrix by ``wrapping around'' to assist the spectral analysis.

Given the doubly block Toeplitz matrix $\Tm=[\Tm_{i-j}]_{i,j=1}^n$ with $\Tm_k=0$ if $k>h_2$ or $k<-h_1$ and $\Tm_k=[\Tm_{k,p-q}]_{p,q=1}^n$ with $\Tm_{k,l}=0$ if $l>w_2$ or $l<-w_1$, the doubly block circulant matrix $\Cm=\circ(\Cm_0,\Cm_1,\dots,\Cm_{n-1})$ is as follows
\begin{align} \label{eq:Cm_k}
\Cm_k=\left\{
\Pmatrix{\Tm_{-k}, & k\in  \{0\}\cup[h_1]\\
\Tm_{n-k}, & k \in n-[h_2]\\
0,& \text{otherwise}}
\right.
\end{align}
where $\Cm_k=\circ(\Cm_{k,0},\Cm_{k,1},\dots,\Cm_{k,n-1})$ with
\begin{align} \label{eq:C_k_l}
\Cm_{k,l}=\left\{
\Pmatrix{\Tm_{-k,-l}, & k\in \{0\}\cup[h_1], \; l\in\{0\}\cup[w_1]\\
\Tm_{-k,n-l}, & k\in \{0\}\cup[h_1], \; l \in n-[w_2]\\
\Tm_{n-k,-l}, &k \in n-[h_2], \; l\in\{0\}\cup[w_1]\\
\Tm_{n-k,n-l}, & k \in n-[h_2], \; l \in n-[w_2]\\
0,& \text{otherwise}}
\right.
\end{align}
where $\Tm_{k,l}$ is defined in \eqref{eq:toe-blocks}.

In a similar way, the original block doubly Toeplitz matrix $\Am$ can also have a corresponding block doubly circulant matrix $\Cm(\Am)=[\Cm(\Am_{c,d})]_{c,d=1}^{c_{out},c_{in}}$ where
\begin{align} \label{eq:CM-1}
\Cm(\Am_{c,d})=\circ(&\Cm(\Am_0^{c,d}),\Cm(\Am_{-1}^{c,d}), \dots, \Cm(\Am_{-h_1}^{c,d}), 0, 
\dots,0, \Cm(\Am_{h_2}^{c,d}),\dots,\Cm(\Am_1^{c,d}))
\end{align}
with $\Cm(\Am_{c,d}) \in \RR^{n^2 \times n^2}$ where
\begin{align} \label{eq:CM-2}
\Cm(\Am_k^{c,d})=\circ(a_{k,0}^{c,d}, &a_{k,-1}^{c,d}, \dots, a_{k,-w_1}^{c,d}, 0, 
\dots,0, a_{k,w_2}^{c,d},\dots,a_{k,1}^{c,d})
\end{align}
with $\Cm(\Am_k^{c,d}) \in \RR^{n \times n}$.
Similarly to Lemma \ref{lemma:toe-permutation-invariant}, we have the following lemma.
\begin{lemma} \label{lemma:cir-permutation-invariant}
$\{\sigma_j(\Cm), \; \forall j\}=\{\sigma_j(\Cm(\Am)), \; \forall j\}$.
\end{lemma}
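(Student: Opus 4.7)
The plan is to observe that Lemma \ref{lemma:cir-permutation-invariant} is essentially a corollary of Lemma \ref{lemma:toe-permutation-invariant} applied to the circulant case, since every doubly (block) circulant matrix is in particular a doubly (block) Toeplitz matrix. Concretely, $\Cm(\Am)$ from \eqref{eq:CM-1}--\eqref{eq:CM-2} is a $c_{out}\times c_{in}$ block matrix whose blocks $\Cm(\Am_{c,d})$ are doubly Toeplitz (with a specific banded periodic structure), and $\Cm$ from \eqref{eq:Cm_k}--\eqref{eq:C_k_l} is a doubly block Toeplitz matrix whose entries lie in $\RR^{c_{out}\times c_{in}}$. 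Hence the remark following Lemma \ref{lemma:toe-permutation-invariant} already ``morally'' gives the result; the only thing to verify is that the same vec-permutation used to pass from $\Am$ to $\Tm$ also carries $\Cm(\Am)$ to $\Cm$.

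First, I would make explicit the two perfect-shuffle permutation matrices $\Pm_{\rm out}\in\RR^{c_{out}n^2\times c_{out}n^2}$ and $\Pm_{\rm in}\in\RR^{c_{in}n^2\times c_{in}n^2}$ that reorder a lexicographic triple $(c,i_1,i_2)$ into $(i_1,i_2,c)$ on the row side and $(d,j_1,j_2)$ into $(j_1,j_2,d)$ on the column side. These are precisely the permutations implicit in the proof of Lemma \ref{lemma:toe-permutation-invariant}, and they are orthogonal because they are permutations. Next, I would verify entry-wise that
\begin{align*}
\Cm \;=\; \Pm_{\rm out}\,\Cm(\Am)\,\Pm_{\rm in}^\transp,
\end{align*}
by matching the four wrap-around cases in \eqref{eq:C_k_l} against the corresponding cases implicit in \eqref{eq:CM-1}--\eqref{eq:CM-2}: both constructions place the scalar $a_{k,l}^{c,d}$ (equivalently $\Km_{c,d,h_1+k+1,w_1+l+1}$) at positions indexed by the same cyclic shifts $k\mapsto n-k$ and $l\mapsto n-l$, differing only in whether the channel indices $(c,d)$ are grouped outside or alongside $(i_1,i_2)$ and $(j_1,j_2)$. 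Finally, since orthogonal equivalence preserves singular values, $\{\sigma_j(\Cm)\}=\{\sigma_j(\Cm(\Am))\}$, which is the claim.

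The main obstacle is the entry-wise identification in Step 2, which is pure index bookkeeping across the four wrap-around cases. Conceptually the argument is transparent: the ``wrapping around'' operation acts only on the spatial indices $(k,l)$ and leaves the channel indices $(c,d)$ untouched, so it commutes with the vec-permutation that merely regroups $(c,d)$ with $(k,l)$. The delicate part is simply working out the index arithmetic in the banded periodic positions carefully to avoid off-by-one errors in the shifts $k\in\{0\}\cup[h_1]$ vs. $k\in n-[h_2]$, and likewise for $l$, so that the nonzero entries of $\Cm$ and $\Pm_{\rm out}\Cm(\Am)\Pm_{\rm in}^\transp$ are seen to coincide exactly.
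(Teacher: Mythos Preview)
Your proposal is correct and takes essentially the same approach as the paper. The paper's proof simply says ``similar to Lemma~\ref{lemma:toe-permutation-invariant}, with the only difference that the shift matrices $\Pm_k$ are replaced by their circulant versions $[\Pm_k]_{i,j}=1$ iff $(i-j)\bmod n=k$''; your explicit perfect-shuffle permutations $\Pm_{\rm out},\Pm_{\rm in}$ are exactly the vec-permutation matrices $\Pim_1,\Pim_2$ arising from the Kronecker identity $\Pm_k\otimes\Pm_l\otimes\Em_{c,d}=\Pim_1(\Em_{c,d}\otimes\Pm_k\otimes\Pm_l)\Pim_2$ used there, so the two arguments coincide up to presentation.
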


It can be easily verified that $\Cm(\Am)$ is essentially the linear transformation matrix of circular convolutional layers considered in \citet{sedghi2018the}. 
As a byproduct of Lemma \ref{lemma:cir-permutation-invariant}, we present an alternative calculation of the singular values for the circular convolutional layers that were characterized in \citet{sedghi2018the}.

\begin{lemma} \label{lemma:toe-equal-cir}
The linear transformation matrix $\Cm(\Am)$ can be block-diagonalized as
\begin{align} \label{eq:block-diag}
\Cm = (\Fm_n \otimes \Fm_n \otimes \Id_{c_{out}}) &\mathrm{blkdiag} (\Bm_{1,1},\Bm_{1,2},\dots, \Bm_{1,n},  \Bm_{2,1}, \dots \Bm_{n,n}) (\Fm_n \otimes \Fm_n \otimes \Id_{c_{in}})^{\H}    
\end{align}
where both $(\Fm_n \otimes \Fm_n \otimes \Id_{c_{out}})$ and $(\Fm_n \otimes \Fm_n \otimes \Id_{c_{in}})$ are unitary matrices.
Thus, the singular values of $\Cm(\Am)$ are the collection of singular values of 
$
    \{\Bm_{i,k}\}_{i,k=1}^n
$
where 
\begin{align}
\Bm_{i,k}= \sum_{p=0}^{n-1} \sum_{q=0}^{n-1} \Cm_{p,q} e^{-\jmath 2 \pi \frac{ p(i-1)+q(k-1)}{n}}
\end{align}
with $\Cm_{p,q}$ defined in \eqref{eq:C_k_l}.
\end{lemma}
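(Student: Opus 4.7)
My plan is to exploit the standard block-diagonalization of doubly block circulant matrices by a two-dimensional DFT, extended block-wise to accommodate the outer $c_{out}\times c_{in}$ channel structure sitting inside each entry of $\Cm$. By Lemma~\ref{lemma:cir-permutation-invariant} it suffices to work with $\Cm$ in place of $\Cm(\Am)$, since the two matrices share the same singular values.

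First, I would expand $\Cm$ as a sum of Kronecker products of cyclic shifts and generating blocks,
\begin{align}
\Cm = \sum_{p=0}^{n-1}\sum_{q=0}^{n-1} \Sm_n^p \otimes \Sm_n^q \otimes \Cm_{p,q},
\end{align}
where $\Sm_n$ is the $n\times n$ cyclic shift matrix and the third factor captures the $c_{out}\times c_{in}$ block at position $(p,q)$; this identity is the defining decomposition of a block doubly circulant matrix in the alternative representation of Section~\ref{sec:section2.2}. Invoking the spectral decomposition $\Sm_n=\Fm_n\Omegam\Fm_n^{\H}$ with $\Omegam=\diag(1,\omega,\dots,\omega^{n-1})$ and $\omega=e^{-\jmath 2\pi/n}$, together with the mixed-product rule for Kronecker products and the trivial factoring $\Cm_{p,q}=\Id_{c_{out}}\Cm_{p,q}\Id_{c_{in}}$, I would pull the DFT factors to the outside and rewrite
\begin{align}
\Cm = (\Fm_n \otimes \Fm_n \otimes \Id_{c_{out}}) \Bigl(\sum_{p,q} \Omegam^p \otimes \Omegam^q \otimes \Cm_{p,q}\Bigr) (\Fm_n \otimes \Fm_n \otimes \Id_{c_{in}})^{\H}.
\end{align}

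Since $\Omegam^p\otimes\Omegam^q$ is diagonal, the middle factor is block diagonal with the $(i,k)$-th block equal to $\sum_{p,q}\omega^{p(i-1)+q(k-1)}\Cm_{p,q}$, which is exactly $\Bm_{i,k}$. The outer and inner factors are Kronecker products of DFT matrices with identities and are therefore unitary, so the singular value multiset of $\Cm$ coincides with the union of singular value multisets of the diagonal blocks $\{\Bm_{i,k}\}_{i,k=1}^n$, which is the claim.

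The main obstacle I anticipate is bookkeeping: reconciling the Kronecker ordering with the vec-convention adopted in Section~\ref{sec:section2.2} so that $\Id_{c_{out}}$ lands on the left and $\Id_{c_{in}}$ on the right of the diagonalization, and verifying that the two outer levels of circulance are jointly diagonalized by $\Fm_n\otimes\Fm_n$ consistent with how the rows and columns of $\Cm_{p,q}$ are indexed. Once this indexing is fixed, the rest of the argument is a routine application of Kronecker algebra together with the Fourier diagonalization of a scalar circulant matrix.
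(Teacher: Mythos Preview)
Your proposal is correct and follows essentially the same approach as the paper: the paper's own proof simply invokes (an extension of) Lemma~5.1 of \cite{book_block_toe} to assert the block-diagonalization of $\Cm$ by $\Fm_n\otimes\Fm_n\otimes\Id$, without spelling out the Kronecker algebra. Your explicit shift-matrix expansion $\Cm=\sum_{p,q}\Sm_n^{p}\otimes\Sm_n^{q}\otimes\Cm_{p,q}$ together with $\Sm_n=\Fm_n\Omegam\Fm_n^{\H}$ and the mixed-product rule is precisely what underlies that cited lemma, so your argument is a self-contained unpacking of the same idea; the only care needed, as you already note, is matching the sign/ordering conventions for $\Sm_n$ and the Kronecker factors to the vec-convention of Section~\ref{sec:section2.2} so that the diagonal entries land as $\Bm_{i,k}=\sum_{p,q}\Cm_{p,q}\,e^{-\jmath 2\pi(p(i-1)+q(k-1))/n}$.
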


The computation of $\Bm_{i,k}$ can be seen as a two-dim DFT of $\Cm_{p,q}$. With $hw$ non-zero submatrices $\{\Cm_{p,q}\}$, the computational complexity consists in $hw$ FFTs and $n^2$ SVDs, which is identical to that in \citet{sedghi2018the}. We also point out that this alternative approach essentially has the same flavor as that in \citet{bibi2018deep}.

Given Lemmas 1-3, we hereafter take $\Tm$ as the linear transformation matrix of the linear convolutional layer and $\Cm$ as its circular approximation, for asymptotic spectral analysis. 

\section{Asymptotic Spectral Analysis}
\label{sec:spectral} 
In what follows, we present asymptotic spectral analysis for the linear transformation matrix $\Tm$ of convolutional layers in CNNs, taking advantage of its Toeplitz structure \citet{Gray1972,avram1988bilinear,parter1986distribution,voois1996theorem,Tilli1998,miranda2000asymptotic,Tyrtyshnikov1996,Zizler2002,Bogoya2015}. The proofs and insights are relegated to Section \ref{sec:proofs}. %
\subsection{Spectral Representation}
\begin{theorem} \label{thm:szg-limit}
Given the linear transformation matrix $\Tm \in \CC^{rn^2\times sn^2}$, let a complex matrix-valued Lebesgue-measurable function $F:[-\pi,\pi]^2 \mapsto \mathbb{C}^{r \times s}$ be the generating function such that 
\begin{align*}
    \Tm_{k,l} = \frac{1}{(2\pi)^2} \int_{-\pi}^{\pi} \int_{-\pi}^{\pi} F(\omega_1,\omega_2) e^{-\jmath (k \omega_1 + l \omega_2)} d\omega_1 d\omega_2.
\end{align*}
It follows that, for any continuous function $\Phi$ with compact support in $\mathbb{R}$, we have
\begin{align} 
    \MoveEqLeft  \lim_{n \to \infty} \frac{1}{n^2} \sum_{j=1}^{\min\{r,s\}n^2} \Phi(\sigma_j(\Tm)) 
    = \frac{1}{(2\pi)^2} \int_{-\pi}^{\pi} \int_{-\pi}^{\pi} \sum_{j=1}^{\min\{r,s\}} \Phi(\sigma_j(F(\omega_1,\omega_2))) d\omega_1 d\omega_2, \notag
\end{align}
for which $\Tm$ is said to be equally distributed as $F(\omega_1,\omega_2)$ with respect to singular values, i.e., $\Tm \sim_{\sigma} F$. Specifically, for linear convolutional layers, the linear transformation matrix $\Tm$ has doubly banded structures, so that the generating function can be explicitly written as 
\begin{align} \label{eq:theorem-1-F}
    F(\omega_1,\omega_2)=\sum_{k=-h_1}^{h_2} \sum_{l=-w_1}^{w_2} \Tm_{k,l} e^{\jmath  (k \omega_1 + l \omega_2)},
\end{align}
which is also referred to as the spectral density matrix of $\Tm$.
\end{theorem}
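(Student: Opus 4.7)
The plan is to reduce this singular-value statement for the non-Hermitian block doubly Toeplitz matrix $\Tm$ to an eigenvalue statement for a Hermitian one, so that the established multilevel Hermitian Szeg\"o--Tilli theorem (\citet{Tilli1998,miranda2000asymptotic}) cited in the paper can be invoked directly, and then handle the banded convolutional case as a specialization.

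First I would form the Hermitian dilation
\begin{align*}
\hat{\Tm} = \begin{bmatrix} 0 & \Tm \\ \Tm^{\H} & 0 \end{bmatrix} \in \CC^{(r+s)n^2 \times (r+s)n^2},
\end{align*}
whose spectrum consists of $\pm \sigma_j(\Tm)$ for $j=1,\dots,\min\{r,s\}n^2$ together with $|r-s|n^2$ additional zeros. The crucial observation is that $\hat \Tm$ is itself a doubly block Toeplitz matrix at the $(r+s)\times(r+s)$ block level, since the two-dimensional block index $(k,l)$ of $\Tm$ is preserved by the dilation and the Fourier synthesis commutes with the matrix embedding. A direct computation on the Fourier coefficients of each block shows that its generating function is exactly
\begin{align*}
\hat F(\omega_1,\omega_2) = \begin{bmatrix} 0 & F(\omega_1,\omega_2) \\ F(\omega_1,\omega_2)^{\H} & 0 \end{bmatrix},
\end{align*}
whose pointwise eigenvalues are $\pm \sigma_j(F(\omega_1,\omega_2))$ together with $|r-s|$ zeros.

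Next, I would apply the Hermitian multilevel block Szeg\"o--Tilli theorem to $\hat \Tm$ with the test function $\psi(\lambda)=\Phi(|\lambda|)$, which is continuous and compactly supported whenever $\Phi$ is. Unwinding the eigenvalue sums back into singular-value sums, the left-hand side becomes $\frac{2}{(r+s)n^2}\sum_j \Phi(\sigma_j(\Tm)) + \frac{|r-s|}{r+s}\Phi(0)$, and the right-hand side becomes $\frac{2}{(2\pi)^2(r+s)}\int\int \sum_j \Phi(\sigma_j(F))\,d\omega_1 d\omega_2 + \frac{|r-s|}{r+s}\Phi(0)$. The $\Phi(0)$ contributions from the spurious zeros match on both sides and cancel, and the common factor $\tfrac{2}{r+s}$ divides out, yielding the claimed identity and hence $\Tm \sim_\sigma F$. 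For the banded convolutional-layer case, since $\Tm_{k,l}=0$ outside $[-h_1:h_2]\times[-w_1:w_2]$, the Fourier series defining $F$ collapses to a finite sum, producing the explicit trigonometric polynomial in \eqref{eq:theorem-1-F}.

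The main obstacle is not the Szeg\"o--Tilli theorem itself, which is well established in the cited literature, but rather the careful bookkeeping in the rectangular case $r\neq s$: one must verify that the $|r-s|n^2$ spurious zero eigenvalues of $\hat \Tm$ and the $|r-s|$ spurious zero eigenvalues of $\hat F(\omega_1,\omega_2)$ contribute identical $\Phi(0)$-mass on both sides so that they cancel cleanly. A secondary technical point is verifying that $\hat \Tm$ genuinely inherits the multilevel block Toeplitz structure at the enlarged block size $(r+s)\times(r+s)$, which follows from writing out the blocks of $\hat \Tm$ index-by-index and matching them against the Fourier coefficients of $\hat F$.
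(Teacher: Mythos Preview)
Your proposal is correct and takes a genuinely different route from the paper. The paper does not invoke the Hermitian multilevel Szeg\"o theorem as a black box; instead it works from scratch via the circulant approximation: it constructs the block doubly circulant matrix $\Cm$ by wrapping around, shows $\frac{1}{n^2}\Normf{\Tm-\Cm}\to 0$ (Lemma~\ref{lemma:proof-thm1}), passes to Hermitian dilations only to import Gray's asymptotic-equivalence theorem for eigenvalues, block-diagonalizes $\Cm$ explicitly via $\Fm_n\otimes\Fm_n\otimes\Id$ to identify its singular values with samples of $\sigma_j(F)$ on a uniform grid, and finally lets the Riemann sum converge to the integral. Both arguments use the $2\times2$ Hermitian dilation, but at different moments and for different purposes: you use it as the \emph{main} reduction to an existing theorem, whereas the paper uses it only as a technical device to transfer Frobenius-norm asymptotic equivalence to singular values.

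Your route is shorter and cleaner, and the bookkeeping you flag for the $|r-s|n^2$ spurious zeros is exactly right. One point you should make explicit is that $\hat\Tm$ as written is \emph{not} literally doubly block Toeplitz at block size $(r+s)\times(r+s)$; a vec-permutation (unitary, hence eigenvalue-preserving) is needed to interleave the $r$- and $s$-blocks, after which the $(k,l)$-block becomes $\sBmatrix{0 & \Tm_{k,l}\\ \Tm_{-k,-l}^{\H} & 0}$ with symbol $\hat F$ as you state. The paper's longer path buys something your approach does not: the circulant machinery ($\Cm$, its block-diagonalization, and the identification $\Bm_{j_1,j_2}=F(2\pi(j_1{-}1)/n,2\pi(j_2{-}1)/n)$) is reused verbatim in Theorem~\ref{thm:circulant} and in Algorithm~\ref{alg:quantile}, so for the paper it is not wasted effort. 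If Theorem~\ref{thm:szg-limit} were the only goal, your argument would be preferable.
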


Theorem 1 endows the linear transformation matrix $\Tm$ of linear convolutional layers with an asymptotic spectral representation - the spectral density matrix $F(\omega_1,\omega_2)$ - by establishing the collective equivalence of their asymptotic singular value distributions. As such, the spectral analysis of linear convolutional layers of CNNs can be alternatively done on its spectral representation $F(\omega_1,\omega_2)$.
The singular values of $\Tm$ can be clustered into $\min\{r,s\}$ non-overlapping subsets. When $n$ is sufficiently large, the singular values in the $j$-th subset concentrate on $\sigma_j(F)$, where $\sigma_j(F)$ is the $j$-th singular value function of $F(\omega_1,\omega_2)$.
As such, the singular values of $\Tm$ can be approximately obtained by sampling $\sigma_j(F)$ over a uniform gird in $ [-\pi,\pi]^2$, 
for all $j \in [\min\{r,s\}]$. It turns out that such approximation is equivalent to the circular approximation, which will be detailed in Theorem \ref{thm:circulant}.

\begin{theorem} \label{thm:circulant}
Given $\Tm$ and $\Cm$ as in \eqref{eq:Cm_k}-\eqref{eq:C_k_l},
there exists a constant $c_1>0$ such that
\begin{align}
    \lim_{n \to \infty} \frac{1}{n} \sum_{j=1}^{\min\{r,s\}n^2} \abs{\sigma_j(\Tm)-\sigma_j(\Cm)} \le c_1,
\end{align}
where the singular values of $\Cm$ are the collection of singular values of $\{\sigma_j(F(\omega_1,\omega_2))\}_j$ with
\begin{align}
\label{eq:uniform-samples}
(\omega_1,\omega_2) = (-\pi+ \frac{2 \pi j_1}{n}, -\pi &+ \frac{2 \pi j_2}{n}), \quad \forall j_1,j_2 \in [n]-1.
\end{align}
\end{theorem}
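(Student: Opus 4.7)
The plan is to split the claim into two parts: (i) identifying the singular values of $\Cm$ with samples of the singular value functions of $F$ on a uniform grid; and (ii) bounding $\sum_{j}\abs{\sigma_j(\Tm)-\sigma_j(\Cm)}$ by $c_1 n$.

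For (i), I would start from Lemma~\ref{lemma:toe-equal-cir}, which already identifies the singular values of $\Cm$ with the union of the singular values of the $n^2$ diagonal blocks $\Bm_{i,k}=\sum_{p,q=0}^{n-1}\Cm_{p,q}\,e^{-\jmath 2\pi(p(i-1)+q(k-1))/n}$. Substituting the defining cases of $\Cm_{p,q}$ from \eqref{eq:Cm_k}--\eqref{eq:C_k_l} and re-indexing ($k'=-p$ for $p\in\{0\}\cup[h_1]$ and $k'=n-p$ for $p\in n-[h_2]$, using $e^{-\jmath 2\pi(n-k')(i-1)/n}=e^{\jmath 2\pi k'(i-1)/n}$; similarly for $q,l'$) collapses the double sum to
\[
\Bm_{i,k}=\sum_{k'=-h_1}^{h_2}\sum_{l'=-w_1}^{w_2}\Tm_{k',l'}\,e^{\jmath 2\pi(k'(i-1)+l'(k-1))/n}=F\!\Bigl(\tfrac{2\pi(i-1)}{n},\tfrac{2\pi(k-1)}{n}\Bigr),
\]
i.e., the spectral density matrix sampled on a uniform $n\times n$ grid. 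Since $F$ is $2\pi$-periodic in each coordinate, the shifted grid in \eqref{eq:uniform-samples} produces the same unordered multiset of matrix values, hence the same collection of singular values.

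For (ii), I would invoke Mirsky's weak-majorization inequality, followed by a rank-times-norm bound on the nuclear norm of the difference,
\[
\sum_{j=1}^{\min\{r,s\}n^2}\abs{\sigma_j(\Tm)-\sigma_j(\Cm)}\le\norm{\Tm-\Cm}_{*}\le\rank(\Tm-\Cm)\cdot\norm{\Tm-\Cm}_{2},
\]
and argue the two factors are $O(n)$ and $O(1)$ respectively. For the rank, inspection of \eqref{eq:Cm_k}--\eqref{eq:C_k_l} shows that $(\Tm-\Cm)_{ij}$ is nonzero only when the doubly periodic wrapping promotes a zero of $\Tm$ to a nonzero of $\Cm$, i.e., when at least one of $(i_1-j_1,\,i_2-j_2)$ falls outside the band $[-h_1,h_2]\times[-w_1,w_2]$ but its modulo-$n$ reduction falls inside. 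Counting the affected rows then yields at most $h_1+h_2$ outer block-rows (of height $c_{out}n$ each) from outer wrapping, plus at most $w_1+w_2$ inner rows from inner wrapping within each of the $n$ outer block-rows, so $\rank(\Tm-\Cm)\le c_{out}(h_1+h_2+w_1+w_2)\,n=O(n)$. For the spectral norm, $F$ in \eqref{eq:theorem-1-F} is a trigonometric polynomial with finitely many terms, so $M:=\sup_{\omega_1,\omega_2}\norm{F(\omega_1,\omega_2)}_{2}<\infty$; the classical block-Toeplitz bound gives $\norm{\Tm}_{2}\le M$, while part (i) yields $\norm{\Cm}_{2}=\max_{i,k}\norm{\Bm_{i,k}}_{2}\le M$. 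Combining, $\norm{\Tm-\Cm}_{*}\le c_{1}n$ with $c_1$ depending only on the filter size and channel counts; dividing by $n$ and taking $n\to\infty$ gives the claim.

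The main obstacle will be the multi-level rank bookkeeping: one must verify that wrapping in both Toeplitz levels contributes additively to the rank rather than multiplicatively, so that the support of $\Tm-\Cm$ (which a priori is a union of corner regions with ambient size up to $O(n)\times O(n)$) still has row-rank only $O(n)$. The key observation is that each corner retains a banded Toeplitz structure at the level where wrapping has not occurred, so the two levels of wrapping affect disjoint (or only slightly overlapping) row slabs of cardinality $O(n)$. Once this estimate is in place, the rest is a single invocation of Mirsky.
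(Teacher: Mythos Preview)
Your proposal is correct. Part (i) is carried out essentially the same way as in the paper: both compute $\Bm_{j_1,j_2}$ from Lemma~\ref{lemma:toe-equal-cir}, substitute the definition \eqref{eq:C_k_l}, re-index, and recognize the result as $F$ on the uniform grid; the paper phrases this as constructing an auxiliary $\Cm(F)$ and verifying $\Cm(F)=\Cm$ block-by-block, but the computation is identical.

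Part (ii) is where you take a somewhat different route. The paper observes that the nonzero blocks of $\Tm-\Cm$ are supported on index pairs outside the set $\Bc_{11}$ of Lemma~\ref{lemma:proof-thm1}, notes that the number of affected rows and columns scales as $n$, and then appeals to Theorem~3.1 of \cite{Zizler2002} as a black box to conclude $\sum_j\abs{\sigma_j(\Tm)-\sigma_j(\Cm)}\le O(n)$. You instead make the argument self-contained: Mirsky's singular-value inequality for the nuclear norm, followed by the elementary estimate $\norm{\Tm-\Cm}_{*}\le\rank(\Tm-\Cm)\cdot\norm{\Tm-\Cm}_{2}$, together with an explicit $O(n)$ row count for the rank and an $O(1)$ bound for the spectral norm via $\norm{\Tm}_2\le M$ (Lemma~\ref{lemma:bounding-Tm}) and $\norm{\Cm}_2=\max_{i,k}\norm{\Bm_{i,k}}_2\le M$ from part (i). Your rank bookkeeping is accurate: the outer-level wrapping hits at most $h_1+h_2$ outer block-rows (each of height $c_{out}n$), and within every outer block-row the inner-level wrapping affects only a fixed set of at most $w_1+w_2$ inner block-rows (each of height $c_{out}$), so the contributions are indeed additive and the total number of nonzero rows is $O(n)$. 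The advantage of your version is that it avoids importing an external theorem and makes the constant explicit; the paper's version is shorter but less transparent about what exactly the cited result provides.
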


Theorem 2 shows that  the singular values of the circular approximation of the linear convolution can be alternatively obtained by uniformly sampling the spectral density matrix $F(\omega_1,\omega_2)$ over $(\omega_1,\omega_2) \in [-\pi,\pi]^2$, where the {\em average} difference of the overall singular values from the exact ones is bounded by $O(\frac{1}{n})$,\footnote{The big O notation $O(n)$ follows the standard Bachmann–Landau notation, meaning that there exists a positive constant $c>0$ such that the term is upper-bounded by $cn$.} and tends to zero as $n$ increases. 

\begin{remark}
The block diagonal matrices $\Bm_{i,k}$ of $\Cm$ in \eqref{eq:block-diag} is essentially the matrix-valued function $F(\omega_1,\omega_2)$ with uniform sampling on grids as in \eqref{eq:uniform-samples}, i.e.,
\begin{align}
    \Bm_{j_1,j_2} = F\Big(\frac{ 2 \pi (j_1-1)}{n},&\frac{ 2 \pi (j_2-1)}{n}\Big), \quad 
    \forall j_1,j_2 \in [n].
\end{align}
\end{remark}

Collecting all singular values $\{\sigma_j(F)\}_j$ according to the uniform sampling grids as in \eqref{eq:uniform-samples}, we sort them in non-decreasing order as $(\kappa_1,\kappa_2,\dots,\kappa_{N})$.
Let $\psi: [0,1] \mapsto \RR$ be a piece-wise linear non-decreasing function that interpolates the samples $(\kappa_1,\kappa_2,\dots,\kappa_{N})$ over the nodes $(0,\frac{1}{N},\frac{2}{N},\dots,1)$ such that $\psi(\frac{i}{N})=\kappa_i$ for all $i  \in \{0\} \cup [N]$ and $\psi(\cdot)$ is linear between any two consecutive nodes. Then we have
\begin{align}
    \frac{1}{(2\pi)^2} \int_{-\pi}^{\pi} \int_{-\pi}^{\pi} \sum_{j=1}^{\min\{r,s\}} &\Phi(\sigma_j(F(\omega_1,\omega_2))) d\omega_1 d\omega_2 
    = \int_{0}^1 \Phi(\psi(t)) dt.
\end{align}
It means the singular values of $\Tm$ can be approximately obtained by sampling the density function $\psi(t)$ in $[0,1]$. 
This motivates a singular value approximation method in Theorem \ref{thm:quantile}.

\subsection{Singular Value Approximation}
From a probabilistic perspective, Theorem \ref{thm:szg-limit} implies that the statistical average of the singular values of $\Tm$ converges to that of the singular values of the corresponding spectral density matrix $F$ in distribution with any continuous functions $\Phi$. Inspired by this, we propose a method to approximate $\sigma_j(\Tm)$ through the singular value distribution of $\sigma_j(F)$ with bounded approximation error.

\begin{theorem} \label{thm:quantile}
Let $\phi_j:[-\pi,\pi]^2 \mapsto \RR_+$ be the $j$-th singular value function of $F(\omegav)$ and $\sigma_k^{(j)}(\Tm)$ be $k$-th singular value of $j$-th cluster. It follows that
\begin{align}
   \sup_{u \in (\frac{k-1}{n^2},\frac{k}{n^2}]} \abs{\sigma_k^{(j)}(\Tm)-Q_{\phi_j}(u)} \le \frac{c_2}{n}, \quad 
     \forall 1\le k \le n^2, \;& 1 \le j \le \min\{r,s\}
\end{align}
where $c_2>0$ is a constant that only depends on $F(\omegav)$, and
\begin{align}
    Q_{\phi_j}(u)&=\inf\{v \in \RR: u \le G_{\phi_j}(v)\}\\
    G_{\phi_j}(v)&=\frac{1}{(2\pi)^2}\mu\{\omegav \in [-\pi,\pi]^2: \phi_j(\omegav)\le v\}
\end{align}
are quantile and cumulative distribution functions for $\phi_j(\omegav)$, respectively, and $\mu$ is Lebesgue measure. 
\end{theorem}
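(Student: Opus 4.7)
The plan is to combine the circular approximation bound of Theorem~\ref{thm:circulant} with a uniform quantile-sampling estimate that exploits the smoothness of the spectral density matrix. Since $F(\omega_1,\omega_2)$ in~\eqref{eq:theorem-1-F} is a trigonometric polynomial with finitely many bounded terms, each of its singular value functions $\phi_j$ is Lipschitz on $[-\pi,\pi]^2$ with Lipschitz constant depending only on the entries of $\Km$. Viewing $\phi_j$ as a random variable with $\omegav$ uniformly distributed on $[-\pi,\pi]^2$, its quantile function $Q_{\phi_j}$ is the generalized inverse of the CDF $G_{\phi_j}$, and the question reduces to controlling how well the ordered singular values of $\Tm$ track $Q_{\phi_j}$.

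First I would apply Theorem~\ref{thm:circulant}: the singular values of the circular approximation $\Cm$ in the $j$-th cluster are exactly the values $\bigl\{\phi_j(-\pi+\tfrac{2\pi j_1}{n},\,-\pi+\tfrac{2\pi j_2}{n})\bigr\}_{j_1,j_2 \in [n]-1}$. Sorting these $n^2$ grid samples in non-decreasing order gives a sequence $\kappa_1^{(j)} \le \kappa_2^{(j)} \le \dots \le \kappa_{n^2}^{(j)}$. Second, by a standard argument that exploits the Lipschitz continuity of $\phi_j$ together with the fact that the $n^2$ sampling nodes partition $[-\pi,\pi]^2$ into cells of diameter $O(1/n)$, these sorted samples approximate the true quantile function uniformly:
\begin{align*}
    \sup_{u \in (\frac{k-1}{n^2},\frac{k}{n^2}]} \abs{\kappa_k^{(j)}-Q_{\phi_j}(u)} \le \frac{c'}{n}.
\end{align*}
Third, a cluster-wise refinement of Theorem~\ref{thm:circulant}, obtained by tracking each singular value function separately in its Szeg\"o-type proof, yields $\abs{\sigma_k^{(j)}(\Tm)-\kappa_k^{(j)}} \le c''/n$. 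A triangle inequality then gives the claim with $c_2 = c'+c''$.

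The main obstacle is upgrading both ingredients from average-type estimates to pointwise bounds. For the quantile step, a uniform $O(1/n)$ rate fails if $G_{\phi_j}$ has long plateaus (i.e.\ $\phi_j$ is constant on a set of positive measure), so one needs to invoke the regularity of $\phi_j$ on the compact torus to rule out pathological flat regions and bound the modulus of continuity of $Q_{\phi_j}$. For the circular-approximation step, Theorem~\ref{thm:circulant} only delivers a sum-of-differences bound of order $n$, so converting this to $O(1/n)$ per singular value requires a Weyl-type interlacing argument that matches the $j$-th singular value clusters of $\Tm$ and $\Cm$ in sorted order; this is delicate when distinct $\phi_j$ come close to each other, and in that case the quantile approximation must be performed on the pooled set of singular values rather than cluster-by-cluster, affecting only the constant $c_2$.
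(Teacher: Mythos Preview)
Your overall architecture---Lipschitz continuity of $\phi_j$, plus a triangle inequality through the circulant grid samples $\kappa_k^{(j)}$---is natural, and your step~2 is essentially correct. The gap is step~3. You propose to obtain $\bigl|\sigma_k^{(j)}(\Tm)-\kappa_k^{(j)}\bigr|\le c''/n$ by a ``Weyl-type interlacing argument'', but Weyl's perturbation inequality only gives $\bigl|\sigma_k(\Tm)-\sigma_k(\Cm)\bigr|\le\norm{\Tm-\Cm}_2$, and for the banded difference $\Tm-\Cm$ one has $\norm{\Tm-\Cm}_2=O(1)$, not $O(1/n)$. Theorem~\ref{thm:circulant} is an $\ell_1$ statement ($\sum_k|\sigma_k(\Tm)-\sigma_k(\Cm)|\le O(n)$), and no interlacing trick upgrades an $\ell_1$ bound of order $n$ on $n^2$ terms to a uniform $O(1/n)$ bound per term without additional structure. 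The paper's own remark after Theorem~\ref{thm:circulant} explicitly flags that a pointwise Toeplitz--circulant comparison is \emph{not} known to hold at rate $O(1/n)$, so this is precisely the step you cannot take for granted.

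The paper sidesteps this obstacle entirely: instead of comparing $\sigma_k^{(j)}(\Tm)$ with $\kappa_k^{(j)}$, it compares the \emph{empirical CDF} $G_{X_n^{(j)}}(v)=\tfrac{1}{n^2}\#\{k:\sigma_k^{(j)}(\Tm)\le v\}$ directly with $G_{\phi_j}(v)$, showing $\bigl|G_{X_n^{(j)}}(v)-G_{\phi_j}(v)\bigr|\le c_1/n$ via a counting lemma of Zizler et~al.\ that converts the $O(n)$ sum bound into an $O(1/n)$ bound at the distribution-function level. Then, having established that $\phi_j$ is Lipschitz (Hoffman--Wielandt), it invokes a result of Bogoya et~al.\ to deduce that $Q_{\phi_j}$ is Lipschitz as well, and inverts the CDF estimate: from $u\le G_{\phi_j}(Q_{\phi_j}(u))$ and the $c_1/n$ gap one gets $Q_{X_n^{(j)}}(u)\le Q_{\phi_j}(u+c_1/n)\le Q_{\phi_j}(u)+Lc_1/n$, and similarly from below. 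Since $\sigma_k^{(j)}(\Tm)=Q_{X_n^{(j)}}(u)$ for $u\in(\tfrac{k-1}{n^2},\tfrac{k}{n^2}]$, the claim follows. The circulant matrix $\Cm$ never enters pointwise; it is used only to mediate the sum estimate inside the CDF lemma. The missing idea in your plan is this passage through CDFs, which is what allows the $\ell_1$-type Theorem~\ref{thm:circulant} to yield a uniform pointwise conclusion.
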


\begin{wrapfigure}{R}{0.58\textwidth}
\begin{minipage}{0.58\textwidth}
\vspace{-18pt}
\begin{algorithm}[H]
   \caption{Singular Values via Quantile Interpolation}
   \label{alg:quantile}
\begin{algorithmic}[1]
   \State {\bfseries Input:} Convolutional filter $\Km \in \RR^{c_{out} \times c_{in} \times h \times w}$
   \State Initialize $h_1,h_2,w_1,w_2$ 
   \State Construct $\Tm_{k,l}$ from $\Km$ according to \eqref{eq:toe-blocks}
   \For{$j_1=1$ {\bfseries to} $n$}
   \For{$j_2=1$ {\bfseries to} $n$}
   \State Set $(\omega_1,\omega_2)=(-\pi+\frac{2\pi j_1}{n},-\pi+\frac{2\pi j_1}{n})$
  \State Compute $F(\omega_1,\omega_2)$ by \eqref{eq:theorem-1-F}
  \State Compute SVD of $F(\omega_1,\omega_2)$
  \EndFor
  \EndFor
  \For{$j=1$ {\bfseries to} $\min\{r,s\}$}
  \State Collect singular values $\{\sigma_j(F(\omega_1,\omega_2))\}_{\omega_1,\omega_2}$ 
  \State Arrange $\sigma_j(F(\omega_1,\omega_2))$ in descending order
  \State{Estimate quantile $\hat{Q}_{\phi_j}$ by $\{\sigma_j(F(\omega_1,\omega_2))\}_{\omega_1,\omega_2}$}
  \State Interpolate quantile using e.g., kernel smoothing
  \State Select proper $u=\{\frac{j-\gamma_j}{n^2}\}_{j=1}^{n^2}$ with $\gamma_j \in (0,1)$
  \State Compute $\{\hat{Q}_{\phi_j}(u)\}_u$ as singular value estimates
  \EndFor
  \State {\bfseries Output:} Singular values $\{\{\hat{Q}_{\phi_j}(u)\}_u\}_j$
\end{algorithmic}
\end{algorithm}
\end{minipage}
\end{wrapfigure}
Theorem \ref{thm:quantile} reveals that the individual singular value of $\sigma_j(\Tm)$ can be approximated by sampling the quantile function of $\phi_j(\omegav)$ within each interval $ (\frac{k-1}{n^2},\frac{k}{n^2}]$.  If the estimation of the quantile function is perfect, this approach approximates each {\em individual} singular value with gap to the exact one within $O(\frac{1}{n})$. %

\begin{remark}
It is challenging to compute the closed-form expression of the singular value function\footnote{As $F(\omega_1,\omega_2)$ is a Laurent polynomial matrix with respect to $e^{\jmath \omega_1}$ and $e^{\jmath \omega_2}$, the singular value functions $\phi_j(\omegav)$ can be computed efficiently by, e.g., \citet{foster2009algorithm}.} $\phi_j(\omegav)$ from $F(\omegav)$, so is its quantile function.
Alternatively, $Q_{\phi_j}(u)$ can be estimated from some easily attainable samples, e.g., $\{\sigma_j(\Cm)\}_j$, which are the uniform sampling of $\sigma_j(F)$ on $[-\pi,\pi]^2$, followed by quantile interpolation/extrapolation with e.g., kernel smoothing tricks. As such, the singular value approximation can be done by properly sampling the interpolated quantile function.
In this way, the approximation accuracy of $\{\sigma_j(\Tm)\}_j$ depends on (1) the accuracy of quantile estimation from the samples, (2) the smoothing factors of quantile interpolation, and (3) the sampling grid in $ (\frac{k-1}{n^2},\frac{k}{n^2}]$. Alg.~\ref{alg:quantile} presents a simple approach to approximate $\{\sigma_j(\Tm)\}_j$ through quantile estimation and interpolation.
\end{remark}

For quantile interpolation, a simple way is linear interpolation, which uses linear polynomials to interpolate new values between two consecutive data points. Kernel density estimation can be used to smooth interpolation. Some other interpolation methods, such as t-Digests \citet{dunning2019computing}, are also available in Python and MATLAB from 2019b onward.

\subsection{Spectral Norm Bounding}
Thanks to the spectral representation, spectral analysis on the linear transformation matrix $\Tm$ can be alternatively done on the spectral density matrix $F(\omegav)$ with $\omegav \in [-\pi,\pi]^2$. For instance, to upper-bound spectral norm of $\Tm$, we can do it on $F$ due to the following lemma.
\begin{lemma}
\label{lemma:bounding-Tm}
$
\norm{\Tm}_2 \le \norm{F}_2.
$
\end{lemma}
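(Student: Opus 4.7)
The plan is to interpret $\|F\|_2$ as the essential supremum of the operator norm, $\|F\|_2 \defeq \mathrm{ess\,sup}_{(\omega_1,\omega_2)\in[-\pi,\pi]^2} \|F(\omega_1,\omega_2)\|_2$, and to realize $\Tm$ as a finite compression of the bi-infinite block doubly Toeplitz operator whose symbol is $F$. Because $F$ is a trigonometric polynomial (Equation \eqref{eq:theorem-1-F}), multiplication by $F$ is a bounded operator on $L^2([-\pi,\pi]^2;\mathbb{C}^s)$ with operator norm exactly $\|F\|_2$, and $\Tm$ inherits an upper bound from the boundedness of this multiplication operator through the Parseval identity.

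Concretely, I would take an arbitrary $\xv \in \mathbb{C}^{sn^2}$ and, following the block structure in Section \ref{sec:section2.2}, partition it into blocks $\xv_{p,q} \in \mathbb{C}^{s}$ for $p,q \in [n]$. Define the vector-valued trigonometric polynomial
\begin{align*}
\tilde{\xv}(\omega_1,\omega_2) = \sum_{p=1}^{n}\sum_{q=1}^{n} \xv_{p,q}\, e^{\jmath(p\omega_1 + q\omega_2)}.
\end{align*}
Then by Parseval's theorem, $\|\xv\|^2 = \frac{1}{(2\pi)^2}\int_{-\pi}^{\pi}\int_{-\pi}^{\pi} \|\tilde{\xv}(\omega_1,\omega_2)\|^2\, d\omega_1 d\omega_2$. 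Similarly write $\yv = \Tm \xv$ as blocks $\yv_{i,j} \in \mathbb{C}^{r}$. The Toeplitz action gives $\yv_{i,j} = \sum_{p,q} \Tm_{i-p,j-q}\,\xv_{p,q}$ for $(i,j)\in[n]^2$, which is exactly the $(i,j)$-th Fourier coefficient of the product $F(\omega_1,\omega_2)\tilde{\xv}(\omega_1,\omega_2)$, as a short computation using \eqref{eq:theorem-1-F} shows. Extending $\yv$ by zero outside $[n]^2$ only decreases its squared $\ell^2$ norm.

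Putting the pieces together via Parseval on the product $F\tilde{\xv}$ and the pointwise operator-norm bound $\|F(\omega)\tilde{\xv}(\omega)\|\le \|F(\omega)\|_2\,\|\tilde{\xv}(\omega)\|$,
\begin{align*}
\|\Tm\xv\|^2 \le \sum_{(i,j)\in\mathbb{Z}^2}\Big\|\sum_{p,q}\Tm_{i-p,j-q}\xv_{p,q}\Big\|^2 &= \frac{1}{(2\pi)^2}\int_{-\pi}^{\pi}\!\!\int_{-\pi}^{\pi} \|F(\omega_1,\omega_2)\tilde{\xv}(\omega_1,\omega_2)\|^2 d\omega_1 d\omega_2 \\
&\le \|F\|_2^{2}\cdot \|\xv\|^2.
\end{align*}
Taking the supremum over unit-norm $\xv$ yields the claim.

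The only delicate step is the bookkeeping in the third line: one must verify that the $(i,j)$-th Fourier coefficient of $F\tilde{\xv}$ is precisely the convolution $\sum_{p,q}\Tm_{i-p,j-q}\xv_{p,q}$, which requires matching the sign conventions of $e^{\jmath(k\omega_1+l\omega_2)}$ in \eqref{eq:theorem-1-F} with the choice above for $\tilde{\xv}$, and checking that the bi-infinite convolution agrees with $\Tm\xv$ on $[n]^2$ thanks to the doubly banded support of $\Tm_{k,l}$ and the zero-extension of $\xv$. Everything else is a standard Parseval plus essential-supremum argument, so I do not expect any substantive difficulty beyond keeping the indices straight.
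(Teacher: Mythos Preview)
Your proposal is correct and follows essentially the same Fourier/Parseval route as the paper's proof, which writes $\sigma(\Tm)=\uv^{\T}\Tm\vv$ as $\frac{1}{(2\pi)^2}\int u(\omegav)^{\T}F(\omegav)v(\omegav)\,d\omegav$ and then bounds by $\sigma_{\max}(F)$ via Cauchy--Schwarz and Parseval. The only cosmetic difference is that you work with $\|\Tm\xv\|^2$ and a compression-of-multiplication-operator inequality, whereas the paper uses the bilinear form with two singular vectors; both are standard and equivalent realizations of the same idea.
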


Built upon Lemma \ref{lemma:bounding-Tm}, the spectral norm of $\Tm$ can be further upper-bounded in different ways.

\begin{theorem}
\label{theorem:bounding-norm}
The spectral norm $\norm{F}_2$ can be bounded by
\begin{align} 
\label{eq:upper-bound-F-2norm}
\norm{F}_2 &\le \min \Big\{\sqrt{hw}\norm{\Rm}_2, \sqrt{hw}\norm{\Lm}_2 \Big\},\\
\label{eq:upperibound-oneinfnorm}
\norm{F}_2 &\le \max_{\omegav} \sqrt{\norm{F(\omegav)}_1 \norm{F(\omegav)}_{\infty}},\\
\label{eq:upperibound-2norm}
    \norm{F}_2 &\le \sum_{k=-h_1}^{h_2} \sum_{l=-w_1}^{w_2} \norm{\Tm_{k,l}}_2,
\end{align}
where $\Rm \in \RR^{hc_{out} \times wc_{in}}$ is a $c_{out} \times c_{in}$ block matrix with $(c,d)$-th block being $\Km_{c,d,:,:} \in \RR^{h \times w}$ and $\Lm \in \RR^{wc_{out} \times hc_{in}}$ is a $c_{out} \times c_{in}$ block matrix with $(c,d)$-th block being $\Km_{c,d,:,:}^\T \in \RR^{w \times h}$.
\end{theorem}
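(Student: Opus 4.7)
The plan is to bound $\norm{F}_2 = \sup_{\omegav \in [-\pi,\pi]^2}\norm{F(\omegav)}_2$ by showing that each proposed upper bound controls $\norm{F(\omegav)}_2$ uniformly in $\omegav$. The last two inequalities follow from classical pointwise matrix-norm facts, while \eqref{eq:upper-bound-F-2norm} requires a structured factorization that exposes $\Rm$ and $\Lm$ as ``parent'' matrices encoding the Fourier structure of $F$.

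For \eqref{eq:upperibound-2norm}, I apply the triangle inequality directly to the expansion \eqref{eq:theorem-1-F}; since $|e^{\jmath(k\omega_1+l\omega_2)}|=1$, this yields $\norm{F(\omegav)}_2 \le \sum_{k,l}\norm{\Tm_{k,l}}_2$ uniformly in $\omegav$. For \eqref{eq:upperibound-oneinfnorm}, I invoke the standard interpolation $\norm{M}_2\le\sqrt{\norm{M}_1\norm{M}_\infty}$ valid for every complex matrix $M$, apply it at each $\omegav$, and take the supremum.

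The substantive step is \eqref{eq:upper-bound-F-2norm}. Using \eqref{eq:T_k_l}, I first rewrite the entries as
\begin{align*}
F_{c,d}(\omegav)=e^{-\jmath(h_1\omega_1+w_1\omega_2)}\sum_{p=1}^{h}\sum_{q=1}^{w}\Km_{c,d,p,q}\,e^{\jmath((p-1)\omega_1+(q-1)\omega_2)},
\end{align*}
so that the unit-modulus global phase can be discarded without affecting $\norm{F(\omegav)}_2$. Index the rows of $\Rm$ by pairs $(c,p)\in[c_{out}]\times[h]$ and the columns by $(d,q)\in[c_{in}]\times[w]$, so that $\Rm_{(c,p),(d,q)}=\Km_{c,d,p,q}$. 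For arbitrary unit vectors $\zv\in\CC^{c_{out}}$ and $\uv\in\CC^{c_{in}}$, I define the test vectors
\begin{align*}
x_{(c,p)}\defeq z_c\,e^{-\jmath(p-1)\omega_1},\qquad y_{(d,q)}\defeq u_d\,e^{\jmath(q-1)\omega_2},
\end{align*}
and observe by direct expansion of the quadruple sum that $\xv^{\H}\Rm\yv$ reproduces $\zv^{\H}F(\omegav)\uv$ up to the discarded phase. Since $\norm{\xv}^2=h\norm{\zv}^2$ and $\norm{\yv}^2=w\norm{\uv}^2$, Cauchy--Schwarz delivers $|\zv^{\H}F(\omegav)\uv|\le\sqrt{hw}\,\norm{\Rm}_2$, and the supremum over unit $\zv,\uv$ and over $\omegav$ gives $\norm{F}_2\le\sqrt{hw}\norm{\Rm}_2$. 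The $\Lm$ bound follows by swapping the roles of $(p,\omega_1)$ and $(q,\omega_2)$: because $\Lm$'s $(c,d)$-block is $\Km_{c,d,:,:}^{\T}$, its entry at $((c,q),(d,p))$ equals $\Km_{c,d,p,q}$, and the mirror construction produces test vectors of norms $\sqrt{w}\norm{\zv}$ and $\sqrt{h}\norm{\uv}$.

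The only real obstacle is the bookkeeping needed to flatten the four-dimensional tensor $\Km$ into $\Rm$ and $\Lm$ with consistent index orderings and to align the signs of the Fourier phases in $\xv,\yv$ so that $\xv^{\H}\Rm\yv$ matches the desired bilinear form on $F(\omegav)$. Once this indexing is pinned down, no analytic machinery beyond Cauchy--Schwarz, the triangle inequality, and the standard $\ell_1/\ell_\infty$ norm interpolation is required.
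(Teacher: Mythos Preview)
Your proposal is correct and follows essentially the same approach as the paper: the triangle inequality for \eqref{eq:upperibound-2norm}, the standard $\ell_1/\ell_\infty$ interpolation for \eqref{eq:upperibound-oneinfnorm}, and a factorization through $\Rm$ (resp.\ $\Lm$) via Fourier phase vectors for \eqref{eq:upper-bound-F-2norm}. The only cosmetic difference is that the paper packages the first bound as the matrix identity $F(\omegav)=(\Id_{c_{out}}\otimes\zv_1)\,\Rm\,(\Id_{c_{in}}\otimes\zv_2^{\T})$ with $\zv_i$ the phase row vectors (so that $\norm{\Id\otimes\zv_i}_2=\sqrt{h}$ or $\sqrt{w}$), whereas you unfold the same factorization at the level of the bilinear form $\zv^{\H}F(\omegav)\uv=\xv^{\H}\Rm\yv$; these are two presentations of the identical argument.
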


In Theorem \ref{theorem:bounding-norm}, the first upper bound \eqref{eq:upper-bound-F-2norm} is identical to that in \citet{Singla2019}, however the derivation here is different as we directly work on $F$, while the bounds in \citet{Singla2019} is for the circulant approximation. This reveals that, with respect to spectral norm upper bounds, it may be not necessary to distinguish circular from linear convolutional layers.
With respect to computational complexity, the first bound \eqref{eq:upper-bound-F-2norm} requires to compute two spectral norms with sizes $hc_{out} \times wc_{in}$ and $wc_{out} \times hc_{in}$ respectively. The complexity of the second bound \eqref{eq:upperibound-oneinfnorm} depends on the sampling complexity of $\omegav$, which usually takes as $n^2$. As such, it requires to compute $n^2$ times of $\ell_1$ and $\ell_{\infty}$ norms with size $c_{out} \times c_{in}$.  The third bound \eqref{eq:upperibound-2norm} requires to compute $hw$ spectral norms with size $c_{out} \times c_{in}$.

\section{Experiments}
\label{sec:experiments-main}
\subsection{Singular Value Approximation}
To verify the singular value approximation in Section \ref{sec:spectral}, we conduct experiments with respect to four different methods on singular values calculation. The weights of filters are extracted from either the pre-trained networks, e.g., GoogLeNet \citet{szegedy2015going}, with ImageNet dataset or from the training process of ResNet-20 \citet{ResNet} on CIFAR-10 dataset. More experimental results using randomly generated weights and weights from pre-trained networks are given in Section \ref{sec:experiments-sva}.
\begin{itemize}
\vspace{-3pt}
    \item Exact Method: A block doubly Toeplitz matrix $\Tm$ is generated from the convolutional filter $\Km$ according to \eqref{eq:T_k_l}. The exact singular values of linear convolutional layers are computed by applying SVD to $\Tm$ directly. %
    \item Circular Approximation: A block doubly circulant matrix $\Cm$ is constructed according to \eqref{eq:Cm_k}-\eqref{eq:C_k_l}. The singular values are computed by applying SVD on $\Cm$ directly. %
    \item Uniform Sampling: The block diagonal matrices $\Bm_{j_1,j_2}$ is produced by uniformly sampling the spectral density matrix $F(\omega_1,\omega_2)$ with sampling grids $(\omega_1,\omega_2)=(-\pi+\frac{2\pi j_1}{n},-\pi+\frac{2\pi j_1}{n})$ for all $j_1, j_2 \in [n]$. The singular values are obtained by collecting all singular values of $\{\Bm_{j_1,j_2}\}_{j_1,j_2=1}^n$. This corresponds to lines 1-10 in Algorithm \ref{alg:quantile}.
    \item Quantile Interpolation: The singular values obtained from uniform sampling are arranged for each $1\le j \le \min\{c_{in},c_{out}\}$ in descending order. By quantile estimation using linear interpolation methods, the singular values are recomputed by selecting properly shifted sampling grids as outlined in Algorithm \ref{alg:quantile}.
    \vspace{-3pt}
\end{itemize}
The experiments are conducted on MATLAB 2020a, which is more friendly to matrix computation. For simplicity, we set $h_1=h_2$ and $w_1=w_2$, and the input size per channel is set to $10 \times 10$. Fig.~\ref{fig:Fig-1} presents the $(i-1)n+1$-th largest singular values ($i \in [n]$) of four methods with four different filter sizes. 
The first two filters are from the pre-trained GoogLeNet, and the last two are from the training process of ResNet-20. 
It can be observed that (1) both circular approximation and uniform sampling have identical singular values for different filter sizes, (2) quantile interpolation improves accuracy of the singular values over the circular approximation with negligible extra running time (see Section \ref{sec:experiments-sva}), and (3) during the training process the improvement of the largest singular value approximation is dominant, while for the well-trained networks, the improvement is mainly due to that on smaller singular values. This might be attributed to implicit regularization during training. 
\begin{figure}[t]
\vskip 0.1in
\hspace{-12pt}
{\includegraphics[width=0.27\columnwidth]{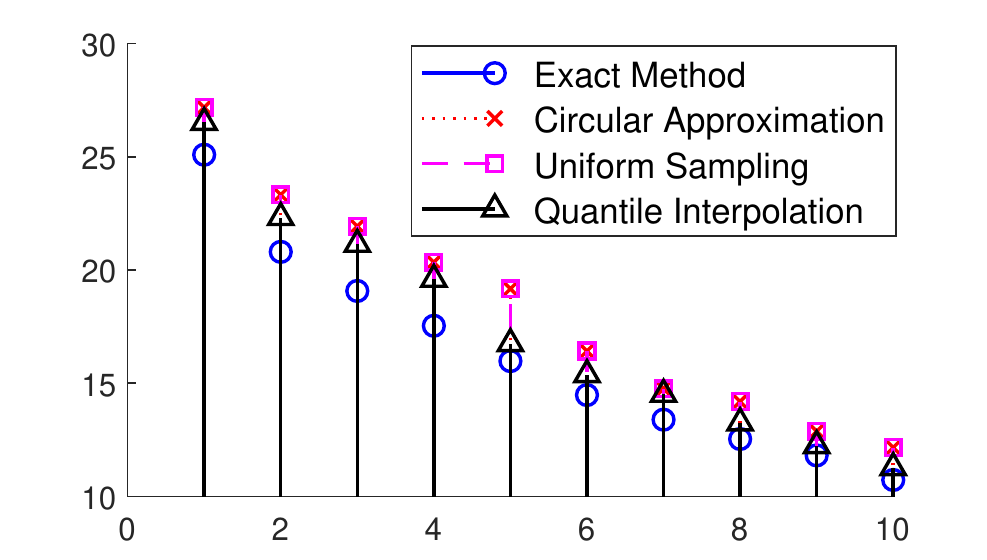}}
\hspace{-12pt}
{\includegraphics[width=0.27\columnwidth]{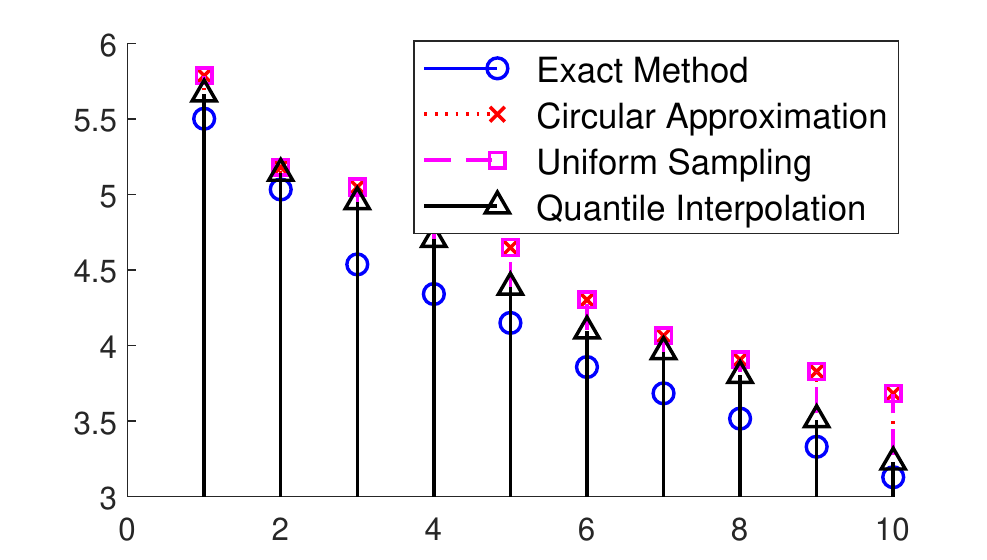}}
\hspace{-12pt}
{\includegraphics[width=0.27\columnwidth]{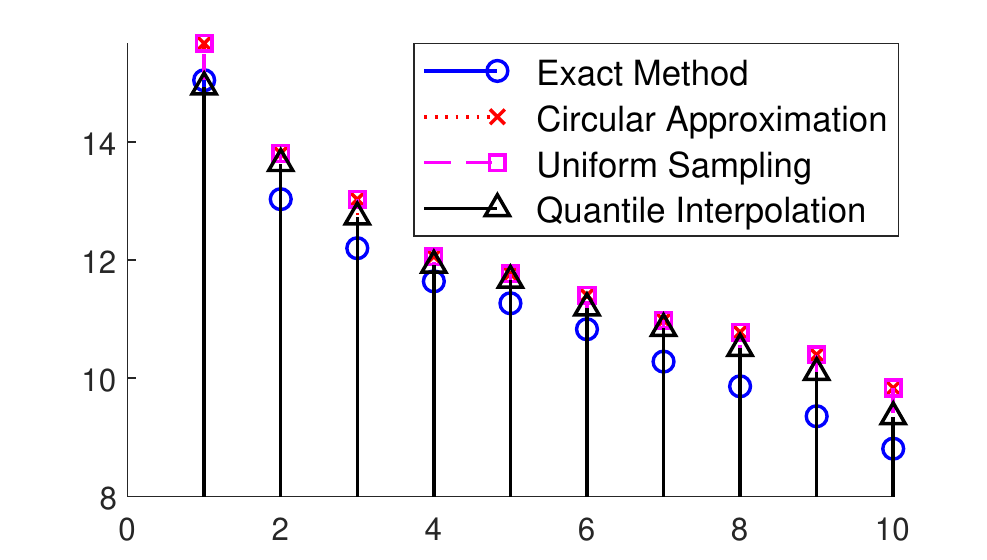}}
\hspace{-12pt}
{\includegraphics[width=0.27\columnwidth]{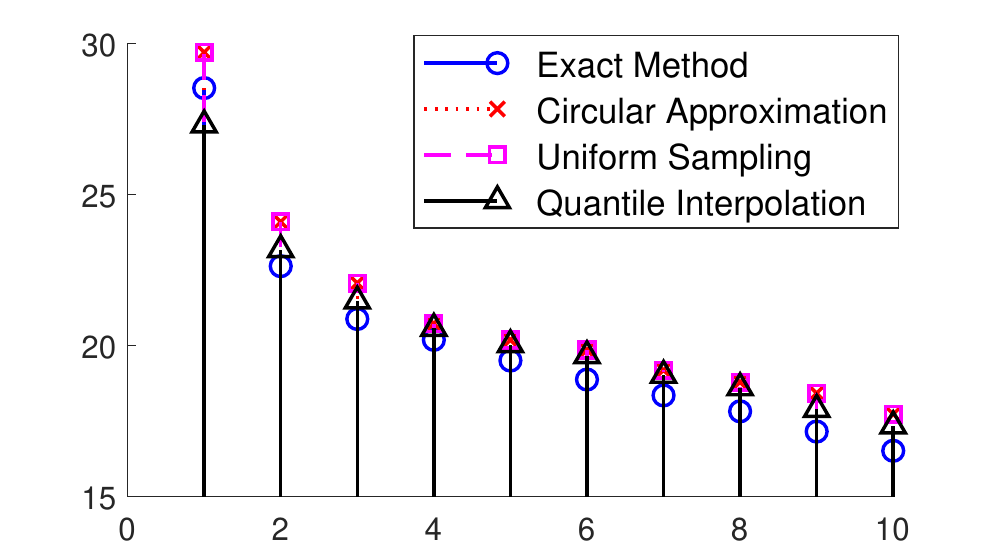}}
\hspace{-12pt}
\begin{center}
\vspace{-0.1in}
\caption{Exact and approximated singular values of linear convolutional layers arranged in descending order. Input size per channel is set to $10 \times 10$. For illustration, only 10 singular values are plotted. Four types of convolutional filters are considered from left to right with sizes $64 \times 3 \times 7 \times 7$ (pre-trained GoogLeNet conv1),  $32 \times 16 \times 5 \times 5$ (pre-trained GoogLeNet inception), $16 \times 3 \times 3 \times 3$ (ResNet-20 conv1 after 10 training epochs), and $16 \times 3 \times 3 \times 3$ (ResNet-20 conv1 after 100 training epochs), respectively. }
\label{fig:Fig-1}
\end{center}
\vskip -0.35in
\end{figure}

\subsection{Spectral Norm Bounding}
To verify the accuracy and running time of different spectral norm bounds, we conduct experiments on the pre-trained ResNet-18 model with ImageNet dataset on MATLAB 2020a on HP EliteBook. For the accuracy, we use the circular approximation as the reference and present the ratios to it. Table~\ref{sample-table} summarizes the results for different filters, where the numbers ``$a/b$'' read as $a$ times of the circular approximation in accuracy and $b$ milliseconds (ms) in running time.
We observe that (1) the first bound \eqref{eq:upper-bound-F-2norm} usually has the best accuracy except for the larger filter size, e.g., $7 \times 7$, while the second bound \eqref{eq:upperibound-oneinfnorm} works better for large filter size; (2) the third bound \eqref{eq:upperibound-2norm} has comparable accuracy as the first one \eqref{eq:upper-bound-F-2norm}, yet accounting for less than 10\% running time of the latter.

\begin{table}[t]
\caption{Comparison of spectral norm bounds ($a/b$: accuracy ratio/running time).}
\label{sample-table}
\vskip -0.1in
\begin{center}
\begin{small}
\begin{sc}
\begin{tabular}{lcccr}
\toprule
Filter size & \eqref{eq:upper-bound-F-2norm} &  \eqref{eq:upperibound-oneinfnorm} &  \eqref{eq:upperibound-2norm}\\
\midrule
$64 \times 3 \times 7 \times 7$    & 3.00/12.84 &  2.14/51.51 & 4.33/1.146 \\
$64 \times 64 \times 3 \times 3$    & 1.63/77.68 &  3.21/54.27 & 2.20/5.427 \\
$128 \times 64 \times 3 \times 3$    & 1.48/155.3 &  3.52/102.3 & 2.10/8.981 \\
$256 \times 256 \times 3 \times 3$    & 1.27/1285 &  4.66/671.7 & 1.56/68.74 \\
$512 \times 256 \times 3 \times 3$    & 1.10/2516 &  4.72/2010 & 1.27/124.6 \\
$512 \times 512 \times 3 \times 3$    & 1.13/7232 &  4.51/3215 & 1.26/288.5 \\
\bottomrule
\end{tabular}
\end{sc}
\end{small}
\end{center}
\vskip -0.25in
\end{table}

Following the same setting as \citet{Singla2019}, we conduct experiments for generalization using spectral norm bounds \eqref{eq:upper-bound-F-2norm} and \eqref{eq:upperibound-2norm} as regularizers. The sum of spectral norm bounds of all convolutional and fully-connected layers are used during training. The bound \eqref{eq:upper-bound-F-2norm} has been already evaluated in \citet{Singla2019}, so our focus will be placed on the evaluation of \eqref{eq:upperibound-2norm} by replacing the matrices of interest in the forward and backward propagation. We test the accuracy of CIFAR-10 dataset on ResNet-20 model with no weight decay and $\beta=0.0014$ as in \citet{Singla2019}. We observe an improvement of 0.8\% over the non-regularization case (i.e., $\beta=0$) using \eqref{eq:upperibound-2norm}, which is slightly worse (0.3\%) than that of \eqref{eq:upper-bound-F-2norm} after 150 training epochs. The learning rate is initialized as 0.1 and changed to 0.01 after 100 epochs. %
Although test accuracy does matter in generalization, we argue that the regularizer \eqref{eq:upperibound-2norm} would be more preferable as it substantially reduces the computational complexity (with more than 30\% running time saving) at the expense of slight performance degradation.
The detailed experimental setup and more results are given in Section \ref{sec:experiments-snr}.

\section{Conclusion}
In this paper, we proposed to use spectral density matrices to represent the linear convolutional layers in CNNs, for which the linear transformation matrices are block doubly Toeplitz matrices constructed from the convolutional filters.
By doing so, spectral analysis of linear convolutional layers can be alternatively done on the corresponding spectral density matrices. Such a spectral representation has been demonstrated to be useful in singular value approximation and spectral norm bounding. 
In particular, spectral norm bounds derived from the spectral density matrix can be used as regularizers to enhance generalization performance with substantially reduced computational complexity.
This spectral representation is expected to offer a different approach to understand linear convolutional layers, through analyzing the spectral density matrices associated to linear transformation.

\newpage
\section*{\Large Appendix}

\setcounter{lemma}{0}
\setcounter{theorem}{0}

The Appendix is organized as follows. In Section \ref{sec:prelim}, we present the notations used in this paper and the definitions of Toeplitz and circulant matrices. The detailed proofs of the key lemmas and theorems in the main text are detailed in Section \ref{sec:proofs}, together with some comments. For convenience, we restate these lemmas and theorems here. In Section \ref{sec:discussion}, the extensions are discussed with respect to larger stride size, higher dimensional linear convolution, and multiple convolutional layers in linear networks. We also present in Section \ref{sec:experiments} the detailed experimental setups in the main text as well as more results to demonstrate the applicability and practical usefulness of our methods in practical CNN models, e.g., VGG, ResNets.

\section{Notations and Preliminaries}
\label{sec:prelim}
\subsection{Notations and Definitions}
For two integers $m$ and $n$ satisfying $m<n$, define $[m] \defeq \{1,2,\dots,m\}$, $n-[m] \defeq \{n-1,n-2,\dots,n-m\}$, and $[m:n] \defeq \{m,m+1,\dots,n\}$. $x \in [a,b]$ is such that $a\le x \le b$. $\jmath$ is the imaginary unit. 

Denote by $a$, $\av$, $\Am$ scalars, vectors, and matrices/tensors, respectively. 
$\Am^{\T}$ and $\Am^{\H}$ represent matrix transpose and Hermitian transpose of $\Am$, respectively. A complex-valued matrix $\Am$ is Hermitian if $\Am=\Am^{\H}$. If $\Am$ is real-valued, $\Am$ is Hermitian is equivalent to $\Am$ is symmetric, i.e., $\Am=\Am^{\T}$.
We denote by $\mathrm{blkdiag}(\Am, \Bm, \dots)$ a block diagonal matrix with diagonal blocks being $\Am, \Bm, \dots$, and by $\circ(a,b,\dots)$ a circulant matrix with elements in the first row being $a,b,\dots$. Likewise, $\circ(\Am,\Bm,\dots)$ is the block circulant matrix with first row blocks being $\Am, \Bm, \dots$. An $n \times n$ matrix $\Fm_n$ is called Discrete Fourier Transform (DFT) matrix, where $[\Fm_n]_{ik}=\frac{1}{\sqrt{n}} e^{-\jmath 2 \pi (i-1)(k-1)/n}$ for $i,k \in [n]$. $\Id_n$ is the $n \times n$ identity matrix. For a tensor $\Am$, $\vec(\Am)$ denotes the vectorized version of $\Am$, and for a 4-order tensor $\Am$, $\Am_{i,j,k,l}$ is used to index its elements.

Denote by $\otimes$ the Kronecker product between two matrices. For a scalar $k$, it holds $\Am \otimes (k\Bm)=k(\Am \otimes \Bm)$ and $\Am \otimes (\sum_i \Bm_i)=\sum_i \Am \otimes \Bm_i$. For two matrices $\Am$ and $\Bm$, $\Am \otimes \Bm$ is permutation equivalent to $\Bm \otimes \Am$, i.e., there exist permutation matrices $\Pim_1$ and $\Pim_2$ such that $\Bm \otimes \Am= \Pim_1 (\Am \otimes \Bm) \Pim_2$.

A matrix-valued function $F:[a,b]^k \mapsto \CC^{m \times n}$ is such that $F(\xv) \in \CC^{m \times n}$ for $\xv \in [a,b]^k$. $F$ is Lebesgue measurable (resp. bounded, continuous) in $[a,b]^k$ if each of its element $F_{ij}$ is Lebesgue measurable (resp. bounded, continuous) in $[a,b]^k$. $F \in \Lc^2([-\pi,\pi]^2)$ means $\Norm{F} \defeq \frac{1}{(2\pi)^2} \int_{-\pi}^{\pi} \int_{-\pi}^{\pi} \Norm{F} d\omega_1 d \omega_2 < +\infty$.

For a matrix $\Am=(a_{ij})_{i,j=1}^{m,n}$ with $\rank(\Am)=r$, we denote by $\{\sigma_j(\Am)\}_{j}$ the collection of singular values of $\Am$ arranged in descending order, i.e., $\sigma_1(\Am) \ge \sigma_2(\Am)\ge \dots \ge \sigma_r(\Am)$.
The norm $\norm{\Am}_2 \defeq \sigma_1(\Am)$ is called spectral norm.
The Schatten $p$-norm is defined as $\norm{\Am}_p \defeq (\sum_{j=1}^r \sigma_j^p(\Am))^{\frac{1}{p}}$. When $p=2$, it coincides with Frobenius norm $\normf{\Am}\defeq \sqrt{\sum_{i=1}^m \sum_{j=1}^n \Abs{a_{ij}}}= \sqrt{\sum_{j=1}^r \sigma_j^2(\Am)}$. %
The matrix $\ell_1$ and $\ell_{\infty}$ norms are defined as $\norm{\Am}_{1}\defeq \max_{j} \sum_{i=1}^m \abs{a_{ij}}$ and $\norm{\Am}_{\infty}\defeq \max_{i} \sum_{j=1}^n \abs{a_{ij}}$, respectively. 
$\abs{a}$ is the absolute value or modulus of a scalar $a$.

\subsection{Toeplitz and Circulant Matrices}
\label{sec:prelim-toe}
A Toeplitz matrix $\Tm = [t_{i-j}]_{i,j=1}^n$ is an $n \times n$ matrix for which the entries come from a sequence $\{t_k, k=0, \pm1, \pm2, \dots,\pm(n-1)\}$. 
A circulant matrix is a special Toeplitz matrix, where $\Cm=[t_{(i-j) \mod n}]_{i,j=1}^n$. That is, $t_{-k}=t_{n-k}$ for $k=1,2,\dots,n-1$. We denote the circulant matrix by $\Cm=\circ(t_0,t_{-1},\dots,t_{-(n-1)})$ using its first row, where the rest rows are cyclic shift of the first row with $n$ times.

An $m \times m$ block Toeplitz matrix $\Bm=[\Am_{i-j}]_{i,j=1}^m \in \CC^{mp \times mq}$ is a Toeplitz matrix with each element being a $p \times q$ matrix.
Similarly, the block circulant matrix $\Cm$ is such that $\Cm=[\Am_{(i-j)\!\!\mod m}]_{i,j=1}^m$ with $0\!\!\mod m = m\!\!\mod m = 0$. That is, $\Am_{-k}=\Am_{m-k}$ for $k=1,2,\dots,m-1$, such that $\Cm=\circ(\Am_0,\Am_{-1},\dots,\Am_{-(m-1)})$ and the rest row blocks are block-wise cyclic shift of the first row block.

When $\{\Am_k, k=0,\pm1,\dots,\pm(m-1)\}$ are also $n \times n$ Toeplitz/circulant matrices, $\Bm$ is a block Toeplitz/circulant matrix with Toeplitz/circulant blocks, which is also known as doubly Toeplitz/circulant matrix.

A banded (block) Toeplitz matrix is a special Toeplitz matrix $\Tm$ [resp.~$\Bm$] such that $t_k=0$ [resp. $\Am_k=\zero$] when $k>r$ or $k<-s$ for some $1<r,s<n$ [resp.~$1<r,s<m$].

For the general case with multiple-input and multiple-output channels, the linear transformation of convolutional layers in CNNs can be represented as a $c_{out}\times c_{in}$ block matrix, i.e.,
\begin{align} \label{eq:M-1}
\Am = \begin{bmatrix}
\Am_{1,1} & \Am_{1,2} & \dots & \Am_{1,c_{in}}\\
\Am_{2,1} & \Am_{2,2} & \dots & \Am_{2,c_{in}}\\
\vdots & \vdots & & \vdots\\
\Am_{c_{out},1} & \Am_{c_{out},2} & \dots & \Am_{c_{out},c_{in}}
\end{bmatrix}.
\end{align}
Each block $\Am_{c,d}$ is a banded block Toeplitz matrix with
\begin{align} \label{eq:M-2} 
\Am_{c,d}=\begin{bmatrix}
\Am_0^{c,d} & \cdots & \Am_{-h_1}^{c,d} & 0 &  \dots & 0\\
\vdots & \Am_0^{c,d} & \ddots & \ddots & \ddots & \vdots\\
\Am_{h_2}^{c,d} & \ddots & \ddots & \ddots & \ddots & 0\\
0 & \ddots & \ddots & \ddots & \ddots & \Am_{-h_1}^{c,d}\\
\vdots & \ddots & \ddots & \ddots & \Am_0^{c,d} & \vdots\\
0 & \cdots & 0 & \Am_{h_2}^{c,d} & \cdots & \Am_{0}^{c,d}
\end{bmatrix}
\end{align}
where $h_1,h_2$ depend on the size of padding in height subject to $h=h_1+h_2+1$. Each block $\Am_{k}^{c,d}$ is still a banded Toeplitz matrix with
\begin{align} \label{eq:M-3}
\Am_{k}^{c,d}=\begin{bmatrix}
a_{k,0}^{c,d} & \cdots &  a_{k,-w_1}^{c,d} & 0 & \cdots & 0\\
\vdots & a_{k,0}^{c,d} & \ddots & \ddots  & \ddots & \vdots \\
 a_{k,w_2}^{c,d} & \ddots & \ddots & \ddots & \ddots & 0\\
 0 & \ddots & \ddots & \ddots & \ddots &  a_{k,-w_1}^{c,d}\\
 \vdots & \ddots & \ddots & \ddots & a_{k,0}^{c,d} &  \vdots\\
0 & \cdots & 0 & a_{k,w_2}^{c,d} & \cdots & a_{k,0}^{c,d}
\end{bmatrix}
\end{align}
where $w_1,w_2$ subject to $w_1+w_2+1=w$ that are determined by the size of padding in width. The elements in $\Am_{k}^{c,d}$ are weights in the filter [cf. \eqref{eq:M4}].

As stated in Section \ref{sec:section2.2} in the main text, the linear transformation matrix $\Am$ can be alternatively represented by doubly block Toeplitz matrix $\Tm$ without change of spectrum.

The alternative representation $\Tm \in \RR^{c_{out}n^2 \times c_{in} n^2}$ is a doubly block Toeplitz matrix
\begin{align} \label{eq:alter-repre}
\Tm=\begin{bmatrix}
\Tm_0 & \cdots & \Tm_{-h_1} & 0 &  \dots & 0\\
\vdots & \Tm_0 & \ddots & \ddots & \ddots & \vdots\\
\Tm_{h_2} & \ddots & \ddots & \ddots & \ddots & 0\\
0 & \ddots & \ddots & \ddots & \ddots & \Tm_{-h_1}\\
\vdots & \ddots & \ddots & \ddots & \Tm_0 & \vdots\\
0 & \cdots & 0 & \Tm_{h_2} & \cdots & \Tm_{0}
\end{bmatrix}
\end{align}
with each block $\Tm_k$ for all $k\in [-h_1:h_2]$ being still a block Toeplitz matrix 
\begin{align}
\Tm_k=\begin{bmatrix}
\Tm_{k,0} & \cdots & \Tm_{k,-w_1} & 0 &  \dots & 0\\
\vdots & \Tm_{k,0} & \ddots & \ddots & \ddots & \vdots\\
\Tm_{k,w_2} & \ddots & \ddots & \ddots & \ddots & 0\\
0 & \ddots & \ddots & \ddots & \ddots & \Tm_{k,-w_1}\\
\vdots & \ddots & \ddots & \ddots & \Tm_{k,0} & \vdots\\
0 & \cdots & 0 & \Tm_{k,w_2} & \cdots & \Tm_{k,0}
\end{bmatrix} 
\end{align}
where each block $\Tm_{k,l} \in \RR^{c_{out} \times c_{in}}$ with $l\in [-w_1:w_2]$ is
\begin{align}
\Tm_{k,l}=\begin{bmatrix}
t_{1,1}^{k,l} & t_{1,2}^{k,l} & \cdots & t_{1,c_{in}}^{k,l}\\
t_{2,1}^{k,l} & t_{2,2}^{k,l} & \cdots & t_{2,c_{in}}^{k,l}\\
\vdots & \ddots & \ddots & \vdots \\
t_{c_{out},1}^{k,l} & t_{c_{out},2}^{k,l} & \cdots & t_{c_{out},c_{in}}^{k,l}
\end{bmatrix}.
\end{align}
Each element of $\Tm_{k,l}$ comes from the weight of the filter $\Km$ [cf. \eqref{eq:T_k_l}].

\section{Proofs of Main Theorems}
\label{sec:proofs}

\subsection{Proof of Lemmas}
\begin{lemma} 
$\{\sigma_j(\Tm), \ \forall j\}=\{\sigma_j(\Am), \ \forall j\}$.
\end{lemma}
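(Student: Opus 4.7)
The plan is to exhibit orthogonal permutation matrices $\Pim_1$ and $\Pim_2$ such that $\Tm = \Pim_1 \Am \Pim_2^{\T}$, from which singular value equality follows immediately: indeed $\Tm \Tm^{\T} = \Pim_1 \Am \Am^{\T} \Pim_1^{\T}$ is similar to $\Am \Am^{\T}$, so the two matrices share the same eigenvalues of $\Am \Am^{\T}$ and hence the same singular values.

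To build the permutation, I would index the rows of $\Am$ lexicographically by triples $(c, i_1, i_2) \in [c_{out}] \times [n] \times [n]$ and its columns by $(d, j_1, j_2) \in [c_{in}] \times [n] \times [n]$. Reading off \eqref{eq:M-1}--\eqref{eq:M-3} together with \eqref{eq:M4}, the entry of $\Am$ at position $\bigl((c,i_1,i_2),(d,j_1,j_2)\bigr)$ equals
\[
  [\Am]_{(c,i_1,i_2),(d,j_1,j_2)} \;=\; a_{i_1-j_1,\,i_2-j_2}^{c,d} \;=\; \Km_{c,d,\,h_1+i_1-j_1+1,\,w_1+i_2-j_2+1}.
\]
Next I would index rows of $\Tm$ by $(i_1, i_2, c)$ and columns by $(j_1, j_2, d)$ in the natural order coming from \eqref{eq:alter-repre}--\eqref{eq:toe-blocks}; using \eqref{eq:T_k_l}, its entries read
\[
  [\Tm]_{(i_1,i_2,c),(j_1,j_2,d)} \;=\; t_{c,d}^{\,i_1-j_1,\,i_2-j_2} \;=\; \Km_{c,d,\,h_1+i_1-j_1+1,\,w_1+i_2-j_2+1}.
\]
So $\Am$ and $\Tm$ have identical multisets of entries, and the map from the row/column labels of $\Am$ to those of $\Tm$ is exactly the cyclic reordering $(c,i_1,i_2)\mapsto(i_1,i_2,c)$ and $(d,j_1,j_2)\mapsto(j_1,j_2,d)$. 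These reorderings are precisely the vec-permutations of \citet{henderson1981vec}: writing them as commutation matrices $\Pim_1$ and $\Pim_2$ acting on the row and column index sets respectively, one has $\Tm = \Pim_1 \Am \Pim_2^{\T}$.

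With this identity in hand, the conclusion is immediate. The main (and only) obstacle is bookkeeping: being careful that the lexicographic conventions used to unfold the three levels of block structure on each side are consistent, so that the reordering really is a permutation and not a more general orthogonal map. Once the index bijection is pinned down, permutation invariance of singular values yields $\{\sigma_j(\Tm)\} = \{\sigma_j(\Am)\}$. I note that nothing in the argument uses the banded structure or two-dimensionality, so the same proof establishes the multi-level generalization mentioned after the lemma.
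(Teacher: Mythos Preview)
Your proposal is correct and follows essentially the same approach as the paper: both arguments show that $\Tm$ and $\Am$ are related by the vec-permutation (commutation) matrices of \citet{henderson1981vec}, from which singular value equality follows by unitarity of permutations. The only cosmetic difference is that the paper expresses $\Am$ and $\Tm$ as sums of Kronecker products $\Em_{c,d}\otimes\Pm_k\otimes\Pm_l$ versus $\Pm_k\otimes\Pm_l\otimes\Em_{c,d}$ and then invokes the Kronecker-factor commutation result, whereas you work directly at the level of entry indices; the underlying permutation and the citation are identical.
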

\begin{proof}
While the following proof is dedicated to the banded Toeplitz matrices, it can be straightforwardly extended to any Toeplitz matrix without loss of generality.

Let $\ev_i$ be $i$-th column of identity matrix and $\Em_{i,j}=\ev_i \ev_j^{\T}$ be a $c_{out} \times c_{in}$ matrix with only the $(i,j)$-th element being 1 and 0 elsewhere. Define $\Pm_k$ as an $n \times n$ matrix with $[\Pm_k]_{i,j}=1$ if $i-j=k$ and 0 otherwise. Thus, the original linear transformation matrix $\Am$ can be represented as
\begin{align}
    \Am &= \sum_{c=1}^{c_{out}}  \sum_{d=1}^{c_{in}} \Em_{c,d} \otimes \Am_{c,d} \\
    &= \sum_{c=1}^{c_{out}}  \sum_{d=1}^{c_{in}} \Em_{c,d} \otimes (\sum_{k=-h_1}^{h_2} \Pm_k \otimes \Am_k^{c,d}) \\
    &= \sum_{c=1}^{c_{out}}  \sum_{d=1}^{c_{in}} \Em_{c,d} \otimes (\sum_{k=-h_1}^{h_2} \Pm_k \otimes (\sum_{l=-w_1}^{w_2} \Pm_l \otimes a_{k,l}^{c,d}))  \\
    &= \sum_{c=1}^{c_{out}} \sum_{d=1}^{c_{in}} \sum_{k=-h_1}^{h_2} \sum_{l=-w_1}^{w_2} a_{k,l}^{c,d} \Em_{c,d} \otimes \Pm_k \otimes \Pm_l 
\end{align}
where the last equality is because $a_{k,l}^{c,d}$ is a scalar.
The alternative one $\Tm$ can be represented as
\begin{align}
    \Tm&=\sum_{k=-h_1}^{h_2}\Pm_k \otimes \Tm_{k} \\
    &= \sum_{k=-h_1}^{h_2} \Pm_k \otimes (\sum_{l=-w_1}^{w_2} \Pm_l \otimes \Tm_{k,l})\\
    &= \sum_{k=-h_1}^{h_2}\Pm_k \otimes (\sum_{l=-w_1}^{w_2} \Pm_l \otimes (\sum_{c=1}^{c_{out}} \sum_{d=1}^{c_{in}} t_{c,d}^{k,l} \Em_{c,d} ))\\
    &= \sum_{c=1}^{c_{out}}  \sum_{d=1}^{c_{in}} \sum_{k=-h_1}^{h_2} \sum_{l=-w_1}^{w_2}  t_{c,d}^{k,l} \Pm_k \otimes \Pm_l \otimes \Em_{c,d} 
\end{align}
where the last equality is because $t_{c,d}^{k,l}$ is a scalar.

According to \cite{henderson1981vec}, $\Pm_k \otimes \Pm_l \otimes \Em_{c,d}$ is permutation equivalent to $\Em_{c,d} \otimes\Pm_k \otimes \Pm_l$, for which there exist two permutation matrices $\Pim_1$ and $\Pim_2$, such that $\Pm_k \otimes \Pm_l \otimes \Em_{c,d} = \Pim_1 (\Em_{c,d} \otimes\Pm_k \otimes \Pm_l) \Pim_2$. Given the fact that $a_{k,l}^{c,d}=t_{c,d}^{k,l}$, it follows that
\begin{align}
   \Tm = \Pim_1 \Am \Pim_2. 
\end{align}
Because permutation matrices are also orthogonal matrices, and thus unitary, $\Tm$ and $\Am$ have an identical set of singular values. This completes the proof.
\end{proof}

\begin{lemma}
$\{\sigma_j(\Cm), \ \forall j\}=\{\sigma_j(\Cm(\Am)), \ \forall j\}$.
\end{lemma}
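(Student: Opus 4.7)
The plan is to mirror the proof of Lemma 1 almost verbatim, substituting cyclic shift matrices for the lower-shift matrices $\Pm_k$. The essential structural fact is unchanged: both $\Cm$ and $\Cm(\Am)$ arise as the same sum of Kronecker triples, only reordered, and Kronecker products are invariant in spectrum under factor reordering up to permutation similarity \cite{henderson1981vec}.

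First, I would introduce the $n \times n$ cyclic shift matrix $\Sm_k$ defined by $[\Sm_k]_{i,j}=1$ if $(i-j) \bmod n = k$ and $0$ otherwise. Reading off the definitions in \eqref{eq:Cm_k}--\eqref{eq:C_k_l}, the outer block-circulant structure of $\Cm$ places $\Tm_{-k}$ at cyclic offset $k$ and $\Tm_{n-k}$ at cyclic offset $n-k$; the inner block-circulant structure of each $\Cm_k$ does the analogous thing with $\Tm_{k,l}$. Bundling both together gives
\[
 \Cm \;=\; \sum_{k=-h_1}^{h_2}\sum_{l=-w_1}^{w_2} \Sm_{-k} \otimes \Sm_{-l} \otimes \Tm_{k,l},
\]
and then expanding $\Tm_{k,l}=\sum_{c,d} t_{c,d}^{k,l}\,\Em_{c,d}$ and using $t_{c,d}^{k,l}=a_{k,l}^{c,d}$ from \eqref{eq:T_k_l} yields $\Cm=\sum_{c,d,k,l} a_{k,l}^{c,d}\,\Sm_{-k}\otimes\Sm_{-l}\otimes\Em_{c,d}$.

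Second, the same reading of \eqref{eq:CM-1}--\eqref{eq:CM-2} gives
\[
 \Cm(\Am) \;=\; \sum_{c=1}^{c_{out}}\sum_{d=1}^{c_{in}} \Em_{c,d}\otimes \Cm(\Am_{c,d}) \;=\; \sum_{c,d,k,l} a_{k,l}^{c,d}\,\Em_{c,d}\otimes \Sm_{-k}\otimes \Sm_{-l}.
\]
Third, by the vec-permutation theorem there exist fixed permutation matrices $\Pim_1$ and $\Pim_2$ (depending only on the three factor dimensions $c_{out}$, $n$, $n$ on the left and $c_{in}$, $n$, $n$ on the right, not on $k,l,c,d$) such that
\[
 \Sm_{-k}\otimes \Sm_{-l}\otimes \Em_{c,d} \;=\; \Pim_1\bigl(\Em_{c,d}\otimes \Sm_{-k}\otimes \Sm_{-l}\bigr)\Pim_2
\]
for every $k,l,c,d$. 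Summing this identity against the scalars $a_{k,l}^{c,d}$ gives $\Cm=\Pim_1\,\Cm(\Am)\,\Pim_2$. Since permutation matrices are orthogonal (hence unitary), singular values are preserved and the claim follows.

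The only nontrivial point, and the one I would be most careful about, is bookkeeping of the sign/wrap conventions in \eqref{eq:Cm_k}--\eqref{eq:C_k_l} versus \eqref{eq:CM-1}--\eqref{eq:CM-2}: I need both constructions to place the \emph{same} scalar coefficient $a_{k,l}^{c,d}$ on the \emph{same} cyclic shift $\Sm_{-k}\otimes \Sm_{-l}$ (up to a fixed, $(k,l)$-independent reindexing), so that the Kronecker-reordering permutation matrices are independent of $k,l,c,d$ and can be factored out of the sum. Once that alignment is verified block-by-block, the rest of the argument is the same three-line invocation of vec-permutation and unitary invariance used in Lemma 1.
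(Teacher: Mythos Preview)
Your proposal is correct and takes essentially the same approach as the paper: the paper's proof simply states that the argument is identical to Lemma~\ref{lemma:toe-permutation-invariant} with the shift matrices $\Pm_k$ replaced by cyclic shift matrices defined by $[\Pm_k]_{i,j}=1$ iff $(i-j)\bmod n = k$, which is exactly what you have written out in detail. Your explicit attention to the bookkeeping of the wrap-around conventions and to the fact that the vec-permutation matrices are independent of $k,l,c,d$ is a welcome elaboration of what the paper leaves implicit.
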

\begin{proof}
The proof is similar to that of Lemma \ref{lemma:toe-permutation-invariant} and thus omitted. The only difference is that, for the representation of $n \times n$ circulant matrices, we have $[\Pm_k]_{i,j}=1$ if $(i-j) \!\! \mod n = k$ and 0 otherwise.
\end{proof}

\begin{lemma} 
The linear transformation matrix $\Cm$ can be block-diagonalized as
\begin{align} 
\Cm = (\Fm_n \otimes \Fm_n \otimes \Id_{c_{out}}) \mathrm{blkdiag} (\Bm_{1,1},\Bm_{1,2},\dots, \Bm_{1,n},  \Bm_{2,1}, \dots \Bm_{n,n}) (\Fm_n \otimes \Fm_n \otimes \Id_{c_{in}})^{\H}    
\end{align}
where $(\Fm_n \otimes \Fm_n \otimes \Id_{c_{out}})$ and $(\Fm_n \otimes \Fm_n \otimes \Id_{c_{in}})$ are unitary matrices.
Thus, the singular values of $\Cm$ are the collection of singular values of 
$
    \{\Bm_{i,k}\}_{i,k=1}^n
$
where 
\begin{align}
\Bm_{i,k}= \sum_{p=0}^{n-1} \sum_{q=0}^{n-1} \Cm_{p,q} e^{-\jmath 2 \pi \frac{ p(i-1)+q(k-1)}{n}}
\end{align}
with $\Cm_{p,q}$ defined in \eqref{eq:C_k_l}.
\end{lemma}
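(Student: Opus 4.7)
The plan is to mirror the strategy used in the proof of Lemma~\ref{lemma:cir-permutation-invariant}: write $\Cm$ as a sum of three-fold Kronecker products of two cyclic shift matrices with the innermost blocks $\Cm_{p,q}$, and then simultaneously unitarily diagonalize all the cyclic shifts by the DFT matrix $\Fm_n$.

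First, I would introduce the $n\times n$ cyclic shift matrix $\Pm_k$ with the indexing convention consistent with the paper's $\circ(\cdot)$-notation, so that the first row of blocks of $\Cm$ at each nesting level reads $(\Cm_0,\Cm_1,\dots,\Cm_{n-1})$ and $(\Cm_{k,0},\dots,\Cm_{k,n-1})$ respectively. A direct block-by-block check of the doubly block circulant structure then gives
\begin{align*}
    \Cm = \sum_{p=0}^{n-1}\sum_{q=0}^{n-1} \Pm_p \otimes \Pm_q \otimes \Cm_{p,q}.
\end{align*}
Next I would invoke the elementary fact that all cyclic shifts are simultaneously diagonalized by $\Fm_n$, i.e.\ $\Pm_k = \Fm_n \Lambdam^k \Fm_n^{\H}$ with $\Lambdam = \diag(1,e^{-\jmath 2\pi/n},\dots,e^{-\jmath 2\pi(n-1)/n})$, so that the $i$-th diagonal entry of $\Lambdam^k$ is $e^{-\jmath 2\pi k(i-1)/n}$.

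Applying the mixed-product property of $\otimes$ (together with the trivial factorization $\Cm_{p,q}=\Id_{c_{out}}\Cm_{p,q}\Id_{c_{in}}$), the Fourier factors pull out of all three Kronecker slots, yielding
\begin{align*}
    \Cm = (\Fm_n \otimes \Fm_n \otimes \Id_{c_{out}})\Big[\sum_{p,q} \Lambdam^p \otimes \Lambdam^q \otimes \Cm_{p,q}\Big](\Fm_n \otimes \Fm_n \otimes \Id_{c_{in}})^{\H}.
\end{align*}
Since $\Lambdam^p$ and $\Lambdam^q$ are diagonal, the bracketed matrix is block-diagonal with $n^2$ blocks of size $c_{out}\times c_{in}$, and reading off its $(i,k)$-th diagonal block recovers exactly $\sum_{p,q}\Cm_{p,q}\,e^{-\jmath 2\pi(p(i-1)+q(k-1))/n} = \Bm_{i,k}$. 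Because Kronecker products of unitary matrices are unitary and the identity blocks are trivially so, both outer factors in the resulting decomposition are unitary; unitary equivalence preserves singular values, so the singular values of $\Cm$ coincide with the multiset union of those of $\{\Bm_{i,k}\}_{i,k=1}^n$.

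The main technical nuisance will be the book-keeping around the sign and index conventions for $\circ(\cdot)$ and for $\Fm_n$, which jointly determine the exact form of $\Lambdam$ and hence whether the phases appearing in $\Bm_{i,k}$ carry ``$+$'' or ``$-$'' in the exponent. Once that is pinned down consistently with the paper's conventions, no serious conceptual obstacle remains: the argument is a mechanical lifting of the scalar fact ``circulant matrices are unitarily diagonalized by the DFT'' to the three-level Kronecker setting via the mixed-product property.
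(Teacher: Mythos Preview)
Your proposal is correct and is essentially the same argument the paper relies on: the paper simply invokes Lemma~5.1 of \cite{book_block_toe} and says ``extend from block circulant to doubly block circulant,'' whereas you carry out that extension explicitly via the Kronecker decomposition $\Cm=\sum_{p,q}\Pm_p\otimes\Pm_q\otimes\Cm_{p,q}$ and the DFT diagonalization of the cyclic shifts. Your version is more self-contained but mathematically identical in content; the only care needed, as you already note, is matching the sign/index conventions for $\circ(\cdot)$ and $\Fm_n$.
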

\begin{proof}
By extending Lemma 5.1 in \cite{book_block_toe} from block circulant matrices to doubly block circulant matrices, we conclude that
the linear transformation matrix $\Cm$ can be block-diagonalized as
\begin{align} 
\Cm = (\Fm_n \otimes \Fm_n \otimes \Id_{c_{out}}) \mathrm{blkdiag} (\Bm_{1,1},\Bm_{1,2},\dots, \Bm_{1,n},  \Bm_{2,1}, \dots \Bm_{n,n}) (\Fm_n \otimes \Fm_n \otimes \Id_{c_{in}})^{\H}    
\end{align}
where both  $(\Fm_n \otimes \Fm_n \otimes \Id_{c_{out}})$ and $(\Fm_n \otimes \Fm_n \otimes \Id_{c_{in}})$ are unitary matrices. As such, the singular values of $\Cm$ are the collection of singular values of $n^2$ matrices $\{\Bm_{i,k}\}_{i,k=1}^{n}$.

By Lemma 5.1 in \cite{book_block_toe}, for each $i,k \in [n]$, we compute $\Bm_{i,k} \in \CC^{c_{out} \times c_{in}}$ by
\begin{align}
\Bm_{i,k}= \sum_{p=0}^{n-1} \sum_{q=0}^{n-1} \Cm_{p,q} e^{-\jmath 2 \pi \frac{ p(i-1)+q(k-1)}{n}}.
\end{align}
The singular values of $\Bm_{i,k}$ can be therefore obtained by applying off-the-shelf singular-value decomposition algorithms.
\end{proof}

\subsection{Proof of Theorem 1}
\begin{theorem} 
Given a block doubly Toeplitz matrix $\Tm \in \CC^{rn^2\times sn^2}$, let a complex matrix-valued Lebesgue-measurable function $F:[-\pi,\pi]^2 \mapsto \mathbb{C}^{r \times s}$ be the generating function such that 
\begin{align} \label{eq:App-T-F}
    \Tm_{k,l} = \frac{1}{(2\pi)^2} \int_{-\pi}^{\pi} \int_{-\pi}^{\pi} F(\omega_1,\omega_2) e^{-\jmath (k \omega_1 + l \omega_2)} d\omega_1 d\omega_2.
\end{align}
It follows that, for any continuous function $\Phi$ with compact support in $\mathbb{R}$, we have
\begin{align} 
    \MoveEqLeft  \lim_{n \to \infty} \frac{1}{n^2} \sum_{j=1}^{\min\{r,s\}n^2} \Phi(\sigma_j(\Tm))
    = \frac{1}{(2\pi)^2} \int_{-\pi}^{\pi} \int_{-\pi}^{\pi} \sum_{j=1}^{\min\{r,s\}} \Phi(\sigma_j(F(\omega_1,\omega_2))) d\omega_1 d\omega_2,
\end{align}
for which $\Tm$ is said to be equally distributed as $F(\omega_1,\omega_2)$ with respect to singular values, i.e., $\Tm \sim_{\sigma} F$. Specifically, for linear convolutional layers, the linear transformation matrix $\Tm$ has doubly banded structures, so that the generating function can be explicitly written as 
\begin{align}
    F(\omega_1,\omega_2)=\sum_{k=-h_1}^{h_2} \sum_{l=-w_1}^{w_2} \Tm_{k,l} e^{\jmath  (k \omega_1 + l \omega_2)},
\end{align}
which is also referred to as the spectral density matrix of $\Tm$.
\end{theorem}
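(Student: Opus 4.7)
The plan is to bridge $\Tm$ and its spectral density $F$ via the doubly block circulant matrix $\Cm$ of \eqref{eq:Cm_k}--\eqref{eq:C_k_l}, exploiting Lemma~\ref{lemma:toe-equal-cir}, which already gives an exact DFT-based diagonalization of $\Cm$. The ``specifically'' part of the statement is immediate: when $\Tm$ is banded, \eqref{eq:theorem-1-F} is a Laurent polynomial in $e^{\jmath \omega_1}, e^{\jmath \omega_2}$, and the representation \eqref{eq:App-T-F} is just Fourier orthogonality. For the main limit, I would carry out two reductions: (i) $\Tm$ and $\Cm$ become asymptotically equivalent in singular value distribution, and (ii) the singular values of $\Cm$ are evaluations of $\sigma_j(F)$ on a uniform $n \times n$ grid, so their empirical average converges to the claimed integral as a Riemann sum.

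For reduction (i), I would analyze the perturbation $\Em_n \defeq \Tm - \Cm$. Because both matrices agree on the ``interior'' of the doubly Toeplitz pattern and differ only where the wrap-around in \eqref{eq:Cm_k}--\eqref{eq:C_k_l} inserts $\Tm_{n-k,\cdot}$ or $\Tm_{\cdot,n-l}$, the nonzero entries of $\Em_n$ are confined to four corner bands of size $O(1) \times O(n)$ or $O(n) \times O(1)$. Since the fixed filter $\Km$ bounds every entry uniformly in $n$, this gives $\normf{\Em_n}^2 = O(n)$ and $\rank(\Em_n) = O(n)$. Mirsky's singular value perturbation inequality then yields
\[
\sum_{j=1}^{\min\{r,s\}n^2} \bigl(\sigma_j(\Tm) - \sigma_j(\Cm)\bigr)^2 \;\le\; \normf{\Em_n}^2 \;=\; O(n).
\]
Since $\Phi$ is continuous with compact support on $\RR$, it is bounded and uniformly continuous; splitting the indices into a ``close'' set $\{j : \abs{\sigma_j(\Tm) - \sigma_j(\Cm)} < \delta\}$ (on which $\abs{\Phi(\sigma_j(\Tm)) - \Phi(\sigma_j(\Cm))} < \epsilon$ by uniform continuity) and its complement (whose cardinality is at most $O(n/\delta^2)$ by Chebyshev on the bound above, with each term contributing at most $2\norm{\Phi}_\infty$), the difference of the two normalized averages is $O(\epsilon) + O(1/(n\delta^2))$, which vanishes by letting $n \to \infty$ and then $\epsilon \to 0$.

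For reduction (ii), combining Lemma~\ref{lemma:toe-equal-cir} with the definition \eqref{eq:theorem-1-F} of $F$ identifies each diagonal block $\Bm_{j_1,j_2}$ in the unitary diagonalization of $\Cm$ with $F\bigl(2\pi(j_1-1)/n,\,2\pi(j_2-1)/n\bigr)$. Hence
\[
\frac{1}{n^2} \sum_{j=1}^{\min\{r,s\}n^2} \Phi(\sigma_j(\Cm)) \;=\; \frac{1}{n^2} \sum_{j_1,j_2=1}^{n} \sum_{j=1}^{\min\{r,s\}} \Phi\!\left(\sigma_j\!\left(F\!\left(\tfrac{2\pi(j_1-1)}{n},\tfrac{2\pi(j_2-1)}{n}\right)\right)\right),
\]
which is a Riemann sum for the continuous integrand $\sum_j \Phi(\sigma_j(F(\omega_1,\omega_2)))$ on $[-\pi,\pi]^2$ and therefore converges to the claimed integral. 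Continuity of each $\omegav \mapsto \sigma_j(F(\omegav))$ follows from Weyl-type perturbation bounds applied to the Laurent polynomial entries of $F$.

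The main obstacle should be the careful bookkeeping of $\Em_n$ in reduction (i): counting precisely how many rows and columns are affected by the wrap-around terms of \eqref{eq:Cm_k}--\eqref{eq:C_k_l} at both the block and intra-block levels, and confirming that these counts remain $O(n)$ uniformly (i.e., that the corner structure does not secretly grow with $n$ once the doubly banded structure is unfolded). A secondary subtlety is that the singular value functions $\sigma_j(F(\omegav))$ may cross on a measure-zero set, which is harmless for Riemann integration but would need care if one wanted pointwise separation. Extending the result beyond the banded Laurent polynomial case to a general $F \in \Lc^2([-\pi,\pi]^2)$ would require approximating $F$ by trigonometric polynomials and invoking a dominated convergence step, but this generalization is not needed for the CNN setting stated in the second half of the theorem.
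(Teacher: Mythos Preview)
Your proposal is correct and follows the same overall architecture as the paper: bridge $\Tm$ to $\Cm$ via a perturbation estimate $\Normf{\Tm-\Cm}=O(n)$, identify the singular values of $\Cm$ with evaluations of $\sigma_j(F)$ on the uniform grid via the DFT diagonalization, and finish with a Riemann sum. Your step (ii) matches the paper's computation exactly (the chain of equalities showing $\Bm_{j_1,j_2}=F\bigl(\tfrac{2\pi(j_1-1)}{n},\tfrac{2\pi(j_2-1)}{n}\bigr)$ is precisely what the paper does, as is the Frobenius-norm count on the corner bands, which appears there as Lemma~\ref{lemma:proof-thm1}).

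The one genuine difference is how step (i) passes from the Frobenius bound to equality of the normalized $\Phi$-sums. The paper takes the classical Szeg\"o-theory route: it forms the Hermitian dilations $\tilde{\Tm}=\bigl[\begin{smallmatrix}\zero & \Tm\\ \Tm^{\H} & \zero\end{smallmatrix}\bigr]$ and $\tilde{\Cm}$ (whose eigenvalues are $\pm\sigma_j$), invokes Gray's asymptotic-equivalence theorem to match all power sums $\tfrac{1}{n^2}\sum_j\sigma_j^p$, and then uses Stone--Weierstrass to extend from polynomials to continuous $\Phi$. Your route is more elementary: Mirsky's inequality gives $\sum_j(\sigma_j(\Tm)-\sigma_j(\Cm))^2\le\Normf{\Em_n}=O(n)$ directly, with no Hermitian embedding, and your Chebyshev-type splitting using the uniform continuity of $\Phi$ replaces the Stone--Weierstrass step. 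Your approach is shorter and self-contained; the paper's buys compatibility with the standard Toeplitz literature (Gray, Tilli, Tyrtyshnikov) that it cites throughout. Two minor remarks: the claim $\rank(\Em_n)=O(n)$ is never actually used in your argument (Mirsky needs only the Frobenius bound), so it can be dropped; and your Riemann-sum step only needs continuity of $\omegav\mapsto\sum_j\Phi(\sigma_j(F(\omegav)))$, which follows from the Weyl-type bound you mention without needing to worry about crossings of individual $\sigma_j$.
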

\begin{remark}
Theorem \ref{thm:szg-limit} is a generalization of the celebrated Szeg\"o Theorem \cite{Gray1972}, which deals with real scalar-valued generating functions $F: [-\pi,\pi] \mapsto \RR$ that correspond to Hermitian Toeplitz matrices. It was extended to non-Hermitian matrices \cite{avram1988bilinear,parter1986distribution}, block Toeplitz matrices \cite{Tilli1998}, and multi-level Toeplitz matrices \cite{Tyrtyshnikov1996,voois1996theorem}.
The linear transformation matrix $\Tm$ is an asymmetric real matrix and hence non-Hermitian, with doubly block Toeplitz structure, which corresponds to a complex matrix-valued generating function $F: [-\pi,\pi]^2 \mapsto \CC^{s \times r}$.
In particular, when $s=r=1$, Theorem \ref{thm:szg-limit} reduces to single-channel 2D convolutional layers, for which
$\Tm \sim_{\sigma} \abs{F(\omega_1,\omega_2)}$.
When it comes to signal-channel 1D convolutional layer, Theorem \ref{thm:szg-limit} indicates $\Tm \sim_{\sigma} \abs{F(\omega)}$. 
\end{remark}
\begin{proof}
The proof is an extension of 
those in \cite{voois1996theorem,Tilli1998,miranda2000asymptotic,Tyrtyshnikov1996} that consider block Toeplitz matrices or doubly Toeplitz matrices.
The main proof technique is to relate Toeplitz matrices to their circulant counterpart, which has been shown efficient in many similar settings. This technique is also applied here.
In particular, we follow the footsteps of \cite{Tilli1998,miranda2000asymptotic} to extend the proofs to non-Hermitian block doubly Toeplitz matrices $\Tm$, by relating to the block doubly circulant matrices $\Cm$.

First, we show that both $\Tm$ and $\Cm$ have the same asymptotic singular values distribution as $n \to \infty$.
As $\Cm$ is constructed from $\Tm$ and both of them are banded matrices, by Lemma \ref{lemma:proof-thm1} below, it can be easily verified that
\begin{align}
    \lim_{n \to \infty} \frac{1}{n^2} \Normf{\Tm -\Cm}=0
\end{align}
as the values of the elements in $\Cm$ and $\Tm$ are upper-bounded, and the total number of different elements between $\Cm$ and $\Tm$ does not scale as $n^2$. 
According to Chapter 2 in \cite{gray2006toeplitz} , it follows that $\Cm$ and $\Tm$ are asymptotically equivalent. 

Let us introduce two Hermitian matrices 
\begin{align}
\tilde{\Cm} = \begin{bmatrix} \zero & \Cm \\ \Cm^H & \zero \end{bmatrix}, \qquad \tilde{\Tm} = \begin{bmatrix} \zero & \Tm \\ \Tm^H & \zero \end{bmatrix}.
\end{align}
It follows that $\tilde{\Cm}$ and $\tilde{\Tm}$ are asymptotically equivalent as well. It is worth noting that the sets of eigenvalues of $\tilde{\Cm}$ and $\tilde{\Tm}$ are exactly the respective sets of singular values of ${\Cm}$ and ${\Tm}$, according to Theorem 7.3.3 in \cite{horn2012matrix}. Thus, according to Theorem 2.1 in \cite{Gray1972}, we have
\begin{align}
    \lim_{n \to \infty} \frac{1}{\min\{r,s\}n^2} \sum_{j=1}^{\min\{r,s\}n^2} (\sigma_j(\Tm))^p = \lim_{n \to \infty} \frac{1}{\min\{r,s\}n^2} \sum_{j=1}^{\min\{r,s\}n^2} (\sigma_j(\Cm))^p
\end{align}
for any positive integer $p$. By Stone-Weierstrass theorem \cite{Gray1972}, it follows that, any continuous function $\Phi(\cdot)$ with compact support, there exists a set of polynomials that uniformly converges to it. Thus, we have 
\begin{align} \label{eq:equally-distributed}
    \lim_{n \to \infty} \frac{1}{\min\{r,s\}n^2} \sum_{j=1}^{\min\{r,s\}n^2} (\Phi(\sigma_j(\Tm)) -\Phi(\sigma_j(\Cm)))=0.
\end{align}

Second, we show that the singular value distribution of the block doubly circulant matrix $\Cm$ converges to that of the generating function $F$.
The doubly circulant matrix $\Cm$ can be block-diagonalized as
\begin{align} 
\Cm = (\Fm_n \otimes \Fm_n \otimes \Id_{c_{out}}) &\mathrm{blkdiag} (\Bm_{1,1},\Bm_{1,2},\dots, \Bm_{1,n},  \Bm_{2,1}, \dots \Bm_{n,n})  (\Fm_n \otimes \Fm_n \otimes \Id_{c_{in}})^{\H}    
\end{align}
where by Lemma \ref{lemma:toe-equal-cir}  and \eqref{eq:C_k_l} in the main text, we have 
\begin{align}
\Bm_{j_1,j_2}&= \sum_{p=0}^{n-1} \sum_{q=0}^{n-1} \Cm_{p,q} e^{-\jmath 2 \pi \frac{ p(j_1-1)+q(j_2-1)}{n}}\\
&=\sum_{p=0}^{h_1} \sum_{q=0}^{w_1} \Cm_{p,q} e^{-\jmath 2 \pi \frac{ p(j_1-1)+q(j_2-1)}{n}} 
+ \sum_{p=0}^{h_1} \sum_{q=n-w_2}^{n-1} \Cm_{p,q} e^{-\jmath 2 \pi \frac{ p(j_1-1)+q(j_2-1)}{n}} \\
& \qquad + \sum_{p=n-h_1}^{n-1} \sum_{q=0}^{w_1} \Cm_{p,q} e^{-\jmath 2 \pi \frac{ p(j_1-1)+q(j_2-1)}{n}}
+ \sum_{p=n-w_2}^{n-1} \sum_{q=0}^{w_1} \Cm_{p,q} e^{-\jmath 2 \pi \frac{ p(j_1-1)+q(j_2-1)}{n}}\\
&=\sum_{k=-h_1}^{0} \sum_{l=-w_1}^{0} \Tm_{k,l} e^{\jmath 2 \pi \frac{ k(j_1-1)+l(j_2-1)}{n}} 
+ \sum_{k=-h_1}^0 \sum_{l=1}^{w_2} \Tm_{k,l} e^{\jmath 2 \pi \frac{ k(j_1-1)+(l-n)(j_2-1)}{n}} \\
& \qquad + \sum_{k=1}^{h_2} \sum_{l=-w_1}^{0} \Tm_{k,l} e^{\jmath 2 \pi \frac{ (k-n)(j_1-1)+l(j_2-1)}{n}}
+ \sum_{k=1}^{h_2} \sum_{l=1}^{w_2} \Tm_{k,l} e^{\jmath 2 \pi \frac{ (k-n)(j_1-1)+(l-n)(j_2-1)}{n}}\\
&= \sum_{k=-h_1}^{h_2} \sum_{l=-w_1}^{w_2} \Tm_{k,l} e^{\jmath 2 \pi \frac{ k(j_1-1)+l(j_2-1)}{n}}\\
&= F(\frac{2\pi (j_1-1)}{n},\frac{2\pi (j_2-1)}{n}), \label{eq:B-to-F}
\end{align}
for $j_1,j_2 \in [n]$. 
Consequently, the collection of singular values of block doubly circulant matrix $\Cm$ is the collection of singular values of $F$ over the uniform grids 
\begin{align} 
    \Mc \defeq \left\{ (\omega_1,\omega_2) = \left(-\pi+ \frac{2 \pi j_1}{n}, -\pi + \frac{2 \pi j_2}{n} \right),  \forall\;j_1,j_2 \in [n]-1 \right\}.
\end{align}

As such, for any integer $p \ge 0$, we have
\begin{align}
    \frac{1}{\min\{r,s\}n^2} \sum_{j=1}^{\min\{r,s\}n^2} (\sigma_j(\Cm))^p 
    &= \frac{1}{\min\{r,s\}n^2} \sum_{j=1}^{\min\{r,s\}} \sum_{(\omega_1,\omega_2) \in \Mc} (\sigma_j(F(\omega_1,\omega_2)))^p\\
     &= \frac{1}{\min\{r,s\}}  \sum_{j=1}^{\min\{r,s\}} \frac{1}{n^2} \sum_{(\omega_1,\omega_2) \in \Mc} (\sigma_j(F(\omega_1,\omega_2)))^p\\
     & \stackrel{n \to \infty}{=}  \frac{1}{\min\{r,s\}} \int_{-\pi}^{\pi} \int_{-\pi}^{\pi}  (\sigma_j(F(\omega_1,\omega_2)))^p d \omega_1 d\omega_2
\end{align}
where the last equation is due to the fact that the Riemann sum converges to the integral of the function $(\sigma_j(F(\omega_1,\omega_2)))^p$ over $[-\pi, \pi]^2$, as $n \to \infty$. 

Further, by Stone-Weierstrass theorem \cite{Gray1972}, we have
\begin{align}
    \lim_{n \to \infty} \frac{1}{\min\{r,s\}n^2} \sum_{j=1}^{\min\{r,s\}n^2} \Phi(\sigma_j(\Cm)) = \frac{1}{\min\{r,s\}} \int_{-\pi}^{\pi} \int_{-\pi}^{\pi}  \Phi(\sigma_j(F(\omega_1,\omega_2))) d \omega_1 d\omega_2
\end{align}

Finally, together with \eqref{eq:equally-distributed}, the proof of Theorem 1 is completed.
\end{proof}

\begin{lemma}
\label{lemma:proof-thm1}
Given the banded block doubly Toeplitz and circulant matrices $\Tm$ and $\Cm$, it follows that 
\begin{align}
\norm{\Cm-\Tm}_p^p \le O(n).   
\end{align}
where $\norm{\Am}_p\defeq (\sum_{i=1}^n \sum_{j=1}^n \abs{\Am_{i,j}}^p)^{\frac{1}{p}}$ for $1 \le p < \infty$. When $p=2$, $\norm{\Am}_p$ boils down to the Frobenius norm $\normf{\Am}$.
\end{lemma}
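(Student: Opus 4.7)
The plan is to exploit that $\Cm$ is built from $\Tm$ by wrap-around at both the outer and inner block levels, so that $\Cm$ and $\Tm$ share a common banded interior and disagree only in corner regions whose total number of nonzero scalar entries scales like $n$, not $n^2$. Since $h_1,h_2,w_1,w_2,c_{in},c_{out}$ and the filter weights are all fixed independently of $n$, each disagreeing scalar has magnitude $O(1)$, so summing its $p$-th power over the support of $\Cm-\Tm$ will deliver $\norm{\Cm-\Tm}_p^p = O(n)$.

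To locate the disagreements I would index a scalar entry as $((i_1,i_2,i_3),(j_1,j_2,j_3))$ with $i_1,j_1,i_2,j_2 \in [n]$, so that $[\Tm]_{(i_1,i_2,i_3),(j_1,j_2,j_3)} = [\Tm_{i_1-j_1,i_2-j_2}]_{i_3,j_3}$ while the circulant rule \eqref{eq:Cm_k}--\eqref{eq:C_k_l} yields the same value for $\Cm$ precisely when $i_1-j_1 \in [-h_1,h_2]$ \emph{and} $i_2-j_2 \in [-w_1,w_2]$. Disagreements must therefore lie in the disjoint union of two cases: (a) outer wrap-around, where $i_1-j_1 \notin [-h_1,h_2]$ forces $(i_1,j_1)$ into a pair of triangular corners of the outer grid of total size $\tfrac{h_1(h_1+1)}{2}+\tfrac{h_2(h_2+1)}{2}$; and (b) inner wrap-around with outer index in band, where $i_1-j_1 \in [-h_1,h_2]$ but $i_2-j_2 \notin [-w_1,w_2]$, which confines $(i_2,j_2)$ to $\tfrac{w_1(w_1+1)}{2}+\tfrac{w_2(w_2+1)}{2}$ inner corner positions for each of the at most $(h_1+h_2+1)n$ outer band positions.

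To aggregate into scalar counts, I would use that every outer block position in $\Cm$ (wrapped or not) has inner support contained in the inner band of width $w_1+w_2+1$, occupying at most $(w_1+w_2+1)n$ inner positions, each a $c_{out}\times c_{in}$ slab. Case (a) then contributes at most $\bigl(\tfrac{h_1(h_1+1)}{2}+\tfrac{h_2(h_2+1)}{2}\bigr)(w_1+w_2+1)n\cdot c_{out}c_{in}$ disagreeing scalars, and case (b) contributes at most $(h_1+h_2+1)n\bigl(\tfrac{w_1(w_1+1)}{2}+\tfrac{w_2(w_2+1)}{2}\bigr)c_{out}c_{in}$, each $O(n)$. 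Bounding each disagreeing scalar by $2M$ with $M\defeq\max_{c,d,p,q}|\Km_{c,d,p,q}|$ and summing then gives $\norm{\Cm-\Tm}_p^p \le (2M)^p \cdot O(n) = O(n)$.

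The one point requiring care is checking that the inner structure of a wrapped outer block (used in case (a) and inside case (b)) is still a banded circulant with the same inner bandwidth $w_1+w_2+1$ as $\Tm_k$, so that the bound $(w_1+w_2+1)n$ on inner support applies uniformly across all outer positions; this is immediate from \eqref{eq:C_k_l}, which wraps the inner index with the same $w_1,w_2$. Beyond this bookkeeping the argument is purely combinatorial, with no inequality deeper than counting nonzero entries.
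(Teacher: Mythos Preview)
Your proposal is correct and follows essentially the same approach as the paper: both arguments count the nonzero support of $\Cm-\Tm$, exploit that disagreements occur only in corner regions of the doubly banded structure, and conclude that the total number of disagreeing scalars is $O(n)$. The paper organizes the bookkeeping via the Toeplitz index $(k,l)$ and the multiplicity factor $(n-|k|)(n-|l|)$, partitioning into eight regions $\Bc_{12},\dots,\Bc_{33}$, whereas you partition by whether the wrap-around occurs at the outer or inner level and count triangular corners directly; these are two ways of tallying the same set of positions.
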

\begin{proof}
Given $\Tm$ and $\Cm$, we define the difference of the $(k,l)$-th block $\Deltam_{k,l} \in \CC^{r \times s}$, where $k \in [-(n-1),(n-1)]$ and $l \in [-(n-1),(n-1)]$ are indices of two levels of Toeplitz and circulant matrices but not the indices of rows and columns, in the following way
\begin{align}
    \Deltam_{k,l} &\defeq [\Cm-\Tm]_{k,l} \\
    &\stackrel{(a)}{=} \sum_{m_1 \in \{-1,0,1\}} \sum_{m_2 \in \{-1,0,1\}}  \Tm_{k+nm_1,l+nm_2} (1 - \delta(m_1,m_2))
\end{align}
where $\delta(m_1,m_2)=1$ if and only if $m_1=m_2=0$, and $(a)$ is due to the banded structure of circulant matrix as in \eqref{eq:C_k_l}.
It can be easily verified that $\Cm-\Tm$ is still a block doubly Toeplitz matrix with blocks  $\{\Deltam_{k,l}\}_{k,l}$.
Thus, we have
\begin{align}
   \MoveEqLeft \norm{\Cm-\Tm}_p^p \notag \\ &\stackrel{(a)}{=} \sum_{k=-(n-1)}^{n-1} \sum_{l=-(n-1)}^{n-1} (n-\abs{k})(n-\abs{l}) \norm{\Deltam_{k.l}}_p^p\\
    &\stackrel{(b)}{\le} \sum_{k=-(n-1)}^{n-1} \sum_{l=-(n-1)}^{n-1}  \sum_{m_1 = -1}^1 \sum_{m_2=-1}^1 (n-\abs{k})(n-\abs{l})  (1 - \delta(m_1,m_2)) \norm{\Tm_{k+nm_1,l+nm_2}}_p^p\\
    &\stackrel{(c)}{=} \sum_{(k,l) \in \Bc_{12}}  (n-\abs{k})(n-\abs{l}) \norm{\Tm_{k,l+n}}_p^p + \sum_{(k,l) \in \Bc_{13}}  (n-\abs{k})(n-\abs{l}) \norm{\Tm_{k,l-n}}_p^p\\
    & \qquad + \sum_{(k,l) \in \Bc_{21}}  (n-\abs{k})(n-\abs{l}) \norm{\Tm_{k+n,l}}_p^p + \sum_{(k,l) \in \Bc_{22}}  (n-\abs{k})(n-\abs{l}) \norm{\Tm_{k+n,l+n}}_p^p \\
    &\qquad + \sum_{(k,l) \in \Bc_{23}}  (n-\abs{k})(n-\abs{l}) \norm{\Tm_{k+n,l-n}}_p^p
    + \sum_{(k,l) \in \Bc_{31}}  (n-\abs{k})(n-\abs{l}) \norm{\Tm_{k-n,l}}_p^p \\
    &\qquad + \sum_{(k,l) \in \Bc_{32}}  (n-\abs{k})(n-\abs{l}) \norm{\Tm_{k-n,l+n}}_p^p + \sum_{(k,l) \in \Bc_{33}}  (n-\abs{k})(n-\abs{l}) \norm{\Tm_{k-n,l-n}}_p^p\\
    &\stackrel{(d)}{\le} hw_1^2C_pn + hw_2^2C_pn + h_1^2wC_pn + h_1^2 w_1^2 C_p + h_1^2w_2^2C_p + h_2^2 wC_pn + h_2^2w_1^2C_p + h_2^2w_2^2C_p \\
     &\stackrel{(e)}{=} an+b 
\end{align}
where $(a)$ is due the definition of the element-wise $p$-norm, $(b)$ is due to the sub-additivity of matrix norms, in $(c)$ we define 
\begin{align}
\Bc_{11}&=\{(k,l):k \in [-h_1,h_2] \text{ and } l \in [-w_1,w_2]\}\\
\Bc_{12}&=\{(k,l):k \in [-h_1,h_2] \text{ and } l \in [-(n-1),-(n-w_1)]\}\\
\Bc_{13}&=\{(k,l):k \in [-h_1,h_2] \text{ and } l \in [(n-w_2),(n-1)]\}\\
\Bc_{21}&=\{(k,l):k \in [-(n-1),-(n-h_1)] \text{ and } l \in [-w_1,w_2]\}\\
\Bc_{22}&=\{(k,l):k \in [-(n-1),-(n-h_1)] \text{ and } l \in [-(n-1),-(n-w_1)]\}\\
\Bc_{23}&=\{(k,l):k \in [-(n-1),-(n-h_1)] \text{ and } l \in [(n-w_2),(n-1)]\}\\
\Bc_{31}&=\{(k,l):k \in [(n-h_2),(n-1)] \text{ and } l \in [-w_1,w_2]\}\\
\Bc_{32}&=\{(k,l):k \in [(n-h_2),(n-1)] \text{ and } l \in [-(n-1),-(n-w_1)]\}\\
\Bc_{33}&=\{(k,l):k \in [(n-h_2),(n-1)] \text{ and } l \in [(n-w_2),(n-1)]\}
\end{align}
for which $\Tm_{k+nm_1,l+nm_2} \ne \zero$ in $\Bc_{11}$ if and only if $m_1=m_2=0$ which invokes $\delta(m_1,m_2)=1$, $(d)$ is due to $\norm{\Tm_{k,l}}_p^p$ is upper-bounded by a constant, say $C_p$ for all $k,l$, and in $(e)$, $a=C_p(h(w_1^2+w_2^2)+(h_1^2+h_2^2)w)$ and $b=C_p(h_1^2+h_2^2)(w_1^2+w_2^2)$. This completes the proof.
\end{proof}

\subsection{Proof of Theorem 2}
\begin{theorem} 
Given $\Tm$ and $\Cm$ as in \eqref{eq:Cm_k}-\eqref{eq:C_k_l}, 
there exists a constant $c_1>0$ such that
\begin{align}
    \lim_{n \to \infty} \frac{1}{n} \sum_{j=1}^{\min\{r,s\}n^2} \abs{\sigma_j(\Tm)-\sigma_j(\Cm)} \le c_1,
\end{align}
where the singular values of $\Cm$ are the collection of singular values of $\{\sigma_j(F(\omega_1,\omega_2))\}_j$ with
\begin{align}
(\omega_1,\omega_2) = (-\pi+ \frac{2 \pi j_1}{n}, -\pi &+ \frac{2 \pi j_2}{n}), \quad \forall j_1,j_2 \in [n]-1.
\end{align}
\end{theorem}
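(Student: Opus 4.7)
The plan is to split the theorem into two essentially separate claims: (i) the identification of the singular values of $\Cm$ with uniform samples of the singular value functions of $F$, and (ii) the $O(1)$ average-gap bound. For (i), I would invoke Lemma~\ref{lemma:toe-equal-cir} to block-diagonalize $\Cm$ by $(\Fm_n \otimes \Fm_n \otimes \Id)$, and then directly evaluate the diagonal blocks $\Bm_{j_1,j_2}=\sum_{p,q}\Cm_{p,q}e^{-\jmath 2\pi(p(j_1-1)+q(j_2-1))/n}$ via the circulant construction \eqref{eq:Cm_k}--\eqref{eq:C_k_l}. The four summation ranges corresponding to the four ``wrapped'' corners collapse, using the banded structure, into a single Fourier expression $\Bm_{j_1,j_2}=F\bigl(\tfrac{2\pi(j_1-1)}{n},\tfrac{2\pi(j_2-1)}{n}\bigr)$, which is exactly the uniform grid claim \eqref{eq:uniform-samples}.

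For (ii), my first step is to apply Mirsky's unitarily invariant norm inequality in the Schatten~1-norm (nuclear norm) form: for any two matrices of the same size,
\[
\sum_{j=1}^{\min\{r,s\}n^2}\bigl|\sigma_j(\Tm)-\sigma_j(\Cm)\bigr| \;\le\; \|\Tm - \Cm\|_*.
\]
I would then use the elementary bound $\|A\|_* \le \sqrt{\rank(A)}\,\normf{A}$ (Cauchy-Schwarz applied to the singular values of $A$) to reduce the problem to controlling two quantities: $\normf{\Tm-\Cm}$ and $\rank(\Tm-\Cm)$.

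The Frobenius factor is easy: Lemma~\ref{lemma:proof-thm1} with $p=2$ immediately gives $\Normf{\Tm-\Cm}\le an+b$, so $\normf{\Tm-\Cm}=O(\sqrt{n})$. For the rank, I would argue structurally that $\Cm-\Tm$ has non-zero entries confined to four ``corners'' of the matrix. Outer-level wrapping contributes non-zero outer blocks only for block-indices $(i,j)$ with $i\in[h_1], j\in n-[h_1]$ (top-right) or the symmetric bottom-left region, while inner-level wrapping within the surviving diagonal outer blocks contributes non-zero inner entries only in inner row indices lying in $[w_1]\cup(n-[w_2])$. Enumerating the non-zero columns over both levels produces a total count of the form $c\cdot n$ with $c$ depending only on $h_1,h_2,w_1,w_2,r,s$, and hence $\rank(\Cm-\Tm)=O(n)$. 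Combining, $\|\Tm-\Cm\|_*\le O(\sqrt{n})\cdot O(\sqrt{n})=O(n)$, and dividing by $n$ yields the claimed constant $c_1$.

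The main obstacle I anticipate is the rank estimate. The Frobenius bound is already encapsulated in Lemma~\ref{lemma:proof-thm1}, but the $\rank(\Cm-\Tm)=O(n)$ claim requires a careful enumeration of the rows and columns of $\Cm-\Tm$ that can host non-zero entries, ensuring in particular that inner-level wrapping inside \emph{every} diagonal outer block does not blow the row count from $O(n)$ up to $O(n^2)$. The double banded assumption (at both Toeplitz levels) is precisely what keeps the count linear: only a thin ``frame'' of inner rows of constant width $w_1+w_2$ per outer block-row participates, summed over only $h_1+h_2+1$ relevant outer block-rows per column.
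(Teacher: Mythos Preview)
Your plan for part (i), identifying the singular values of $\Cm$ with the uniform samples of $\sigma_j(F)$, is essentially the paper's argument: both compute the diagonal blocks $\Bm_{j_1,j_2}$ from the circulant construction \eqref{eq:Cm_k}--\eqref{eq:C_k_l} and recognize them as $F$ evaluated on the grid.

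For part (ii) your route is correct but genuinely different from the paper's, and in fact more self-contained. The paper merely observes that the set of doubly-Toeplitz indices $(k,l)$ at which $\Cm-\Tm$ is non-zero lies outside $\Bc_{11}$ and has $O(n)$ associated rows/columns, and then invokes Theorem~3.1 of \cite{Zizler2002} as a black box to conclude $\sum_j\abs{\sigma_j(\Tm)-\sigma_j(\Cm)}\le O(n)$. Your argument via Mirsky's Schatten-$1$ inequality together with $\norm{A}_*\le\sqrt{\rank(A)}\,\normf{A}$ avoids that citation entirely and reduces the problem to the two elementary estimates $\normf{\Cm-\Tm}=O(\sqrt n)$ (already packaged in Lemma~\ref{lemma:proof-thm1}) and $\rank(\Cm-\Tm)=O(n)$. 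This is cleaner and is arguably what the cited result does under the hood anyway.

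One correction to your rank enumeration: inner-level wrapping does \emph{not} touch only $h_1+h_2+1$ outer block-rows---it touches all $n$ of them, since every outer row $i$ meets some outer column $j$ on the main band. What saves the count is that the affected \emph{inner} row indices form the same fixed set of size $w_1+w_2$ regardless of $i$, so the inner-wrapping contribution to the non-zero-row count is $n(w_1+w_2)r$, not $(h_1+h_2+1)(w_1+w_2)r$. Adding the $(h_1+h_2)nr$ full rows coming from outer-level wrapping still gives $O(n)$, so your conclusion $\rank(\Cm-\Tm)=O(n)$ is correct once the bookkeeping is straightened out.
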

\begin{remark}
It is worth noting that \cite{ZhuTIT2017} dealt with eigenvalues of Hermitian Toeplitz matrices that correspond to real scalar-valued generating functions, for which the Theorem 2 in \cite{ZhuTIT2017} regarding the bounded circular approximation error can not be taken as granted e.g., \cite{sedghi2018the,Singla2019},  to justify the circular approximation of linear convolutional layers whose transformation matrix $\Tm$ is non-Hermitian block doubly Toeplitz matrices. As the proof of \cite{ZhuTIT2017}[Theorem 2] relies on Sturmian Separation Theorem that deals with eigenvalues for Hermitian matrices, %
it is not guaranteed that the difference of individual singular values between non-Hermitian Toeplitz and circulant matrices can be bounded in the same way. %
\end{remark}
\begin{proof}
Given the generating function $F(\omega_1,\omega_2)$ defined in \eqref{eq:theorem-1-F}, we introduce an auxiliary matrix $\Cm(F)$ generated by $F$ in the following form
\begin{align}
    \Cm(F) = (\Fm_n \times \Fm_n \times \Id_{r}) \mathrm{blkdiag} \Big(\{F(\omega_1,\omega_2), (\omega_1,\omega_2) \in \Mc\}\Big) (\Fm_n \times \Fm_n \times \Id_s)^{\H} 
\end{align}
where $\Mc$ is the uniform sampling over $[-\pi,\pi]^2$ defined as \eqref{eq:uniform-samples}.
It can be readily verified that $\Cm(F)$ is also a block doubly circulant matrix, similar to $\Cm$.

First, we show $\Cm(F)$ and $\Cm$ are identical, and thus uniform sampling $F$ yields singular values of $\Cm$.
Denote by $[\Cm(F)]_{p,q} \in \CC^{r \times s}$ the $(p,q)$-th block of $\Cm(F)$, where $p$ and $q$ indicate the indices of the first and second levels of circulant blocks, similar to the definition of $\Cm_{p,q}$ in \eqref{eq:C_k_l}. Therefore, we have
\begin{align}
[\Cm(F)]_{p,q} 
&\stackrel{}{=} \frac{1}{n^2}\sum_{j_1=0}^{n-1} \sum_{j_2=0}^{n-1} F(\frac{2\pi j_1}{n},\frac{2\pi j_2}{n})e^{-\jmath 2 \pi \frac{ pj_1+qj_2}{n}}\\
&\stackrel{}{=} \frac{1}{n^2}\sum_{j_1=0}^{n-1} \sum_{j_2=0}^{n-1} \sum_{k=-h_1}^{h_2} \sum_{l=-w_1}^{w_2} \Tm_{k,l}e^{\jmath \frac{2 \pi}{n} ( (k-p)j_1+(l-q)j_2)}\\
&\stackrel{}{=} \frac{1}{n^2} \sum_{k=-h_1}^{h_2} \sum_{l=-w_1}^{w_2} \Tm_{k,l} \sum_{j_1=0}^{n-1} e^{\jmath \frac{2 \pi j_1}{n} (k-p)} \sum_{j_2=0}^{n-1} e^{\jmath \frac{2 \pi j_2}{n} (l-q)}  \\
&\stackrel{(a)}{=} \sum_{m_1=-\infty}^{\infty} \sum_{m_2=-\infty}^{\infty}  \Tm_{-p+nm_1,-q+nm_2}\\
&\stackrel{(b)}{=} \left\{
\Pmatrix{\Tm_{-p,-q}, & p\in \{0\}\cup[h_1], \; q\in\{0\}\cup[w_1]\\
\Tm_{-p,n-q}, & p\in \{0\}\cup[h_1], \; q \in n-[w_2]\\
\Tm_{n-p,-q}, &q \in n-[h_2], \; q\in\{0\}\cup[w_1]\\
\Tm_{n-p,n-q}, & p \in n-[h_2], \; q \in n-[w_2]\\
\zero,& \text{otherwise}}
\right.\\
&\stackrel{(c)}{=}\Cm_{p,q}
\end{align}
for $p, q \in [n]-1$, where $(a)$ is due to
\begin{align}
   \sum_{j=0}^{n-1} e^{\jmath \frac{2 \pi j}{n} (k-p)} = \left\{ \Pmatrix{n,& (k-p) \!\!\! \mod n = 0 \\ 0, & \text{otherwise}} \right.,
\end{align}
$(b)$ is due to $\Tm_{p,q}=\zero$ if $p \notin [-h_1,h_2]$ or $q \notin [-w_1,w_2]$, and $(c)$ is from \eqref{eq:C_k_l}. 

For each $p,q \in [n]-1$, the $(p,q)$-th blocks of $\Cm(F)$ and $\Cm$ are identical. Thus, we have
\begin{align}
    \Cm(F) = \Cm.
\end{align}

Therefore, by Lemma \ref{lemma:toe-equal-cir}, we conclude that the singular values of $\Cm$ can be given by those of $F(\omega_1,\omega_2)$ with uniform sampling on $[-\pi,\pi]^2$, i.e.,  
\begin{align}
 \left\{\sigma_{j}(F(\omega_1,\omega_2)): (\omega_1,\omega_2) \in \Mc \right\},
\end{align}
where $\Mc$ is the uniform sampling grids defined in \eqref{eq:uniform-samples}.

Second, we show that the accumulated difference of the singular values between $\Cm$ and $\Tm$ is upper-bounded.

By inspecting $\Tm$ and $\Cm$, we find from Lemma \ref{lemma:proof-thm1} that $\Deltam_{k,l}=0$ if and only if $(k,l) \in \Bc_{11}$. The number of rows and columns with indices outside $\Bc_{11}$ scales as $n$.
As such, invoking Theorem 3.1 in \cite{Zizler2002}, we conclude that
\begin{align}
    \sum_{j=1}^{\min\{r,s\}n^2} \abs{\sigma_j(\Tm)-\sigma_j(\Cm)} \le O(n).
\end{align}
Thus, we have
\begin{align}
    \lim_{n \to \infty}   \frac{1}{n}\sum_{j=1}^{\min\{r,s\}n^2} \abs{\sigma_j(\Tm)-\sigma_j(\Cm)}=O(1)
\end{align}
This completes the proof.
\end{proof}

\begin{remark}
The intuition behind is that the number of different elements between two matrices scales as $n$ but not $n^2$ because of the banded structure of $\Cm$ and $\Tm$. Although not rigorously proved, it looks the equality holds with the term $O(1)$ strictly larger than 0, meaning that the circular approximation can be arbitrarily loose as $n$ tends to infinity.
\end{remark}

\subsection{Proof of Theorem 3}
\begin{theorem} 
Let $\phi_j:[-\pi,\pi]^2 \mapsto \RR_+$ be the $j$-th singular value function of $F(\omegav)$ and $\sigma_k^{(j)}(\Tm)$ be $k$-th singular value of $j$-th cluster. There exists a constant $c_2>0$ which only depends on $F(\omegav)$ such that
\begin{align}
   \sup_{u \in (\frac{k-1}{n^2},\frac{k}{n^2}]} \abs{\sigma_k^{(j)}(\Tm)-Q_{\phi_j}(u)} &\le \frac{c_2}{n}, \quad
     \forall 1\le k \le n^2, \; 1 \le j \le \min\{r,s\}
\end{align}
where
\begin{align}
    Q_{\phi_j}(u)&=\inf\{v \in \RR: u \le G_{\phi_j}(v)\}\\
    G_{\phi_j}(v)&=\frac{1}{(2\pi)^2}\mu\{\omegav \in [-\pi,\pi]^2: \phi_j(\omegav)\le v\}
\end{align}
are quantile and cumulative distribution functions for $\phi_j(\omegav)$, respectively, and $\mu$ is Lebesgue measure. 
\end{theorem}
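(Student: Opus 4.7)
The plan is to decompose the error via the triangle inequality as
\begin{align*}
\abs{\sigma_k^{(j)}(\Tm) - Q_{\phi_j}(u)} \;\le\; \abs{\sigma_k^{(j)}(\Tm) - \sigma_k^{(j)}(\Cm)} + \abs{\sigma_k^{(j)}(\Cm) - Q_{\phi_j}(u)},
\end{align*}
and bound each term by $O(1/n)$. The second term I would analyze using the identification established in the proof of Theorem~\ref{thm:circulant} (equation \eqref{eq:B-to-F}) together with Lemma~\ref{lemma:toe-equal-cir}: the collection $\{\sigma_k^{(j)}(\Cm)\}_k$ is exactly the set of $k$-th order statistics of the $n^2$ samples $\{\phi_j(\omegav):\omegav\in\Mc\}$, i.e.\ it is the empirical quantile $\hat Q_{\phi_j}(k/n^2)$ of $\phi_j$ on the uniform grid $\Mc$.

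Since $F(\omegav)$ is a Laurent polynomial of bounded degree $(h_1{+}h_2, w_1{+}w_2)$, its entries are $C^\infty$ on $[-\pi,\pi]^2$, and each singular value function $\phi_j$ is Lipschitz (with a Lipschitz constant depending only on the coefficients of $F$) off a measure-zero crossing set. Consequently, standard Riemann-sum estimates give
\begin{align*}
\sup_v \abs{\hat G_{\phi_j}(v) - G_{\phi_j}(v)} \;\le\; \tfrac{C}{n},
\end{align*}
where $\hat G_{\phi_j}$ is the empirical CDF built from the grid $\Mc$. Under the (generic) assumption that $G_{\phi_j}$ admits a density bounded away from zero on its support, quantile inversion preserves the rate and yields $\sup_u |\hat Q_{\phi_j}(u) - Q_{\phi_j}(u)| \le C'/n$. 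Because $\hat Q_{\phi_j}$ is constant on each dyadic cell $(\frac{k-1}{n^2},\frac{k}{n^2}]$ with value $\sigma_k^{(j)}(\Cm)$, this gives the desired uniform $O(1/n)$ control of the second term over that cell.

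For the first term I would upgrade the aggregate bound of Theorem~\ref{thm:circulant} into a per-index estimate. The difference $\Cm - \Tm$ is supported only on the ``wrap-around'' boundary rows and columns (visible directly from \eqref{eq:Cm_k}--\eqref{eq:C_k_l}), so it has rank at most $O(n)$ out of $O(n^2)$; applying Weyl's inequalities for singular values of a low-rank additive perturbation, after passing to the Hermitian embedding $\tilde\Tm = \sBmatrix{0 & \Tm \\ \Tm^\H & 0}$ used in the proof of Theorem~\ref{thm:szg-limit}, one obtains $|\sigma_k^{(j)}(\Tm) - \sigma_k^{(j)}(\Cm)| \le c''/n$ for bulk indices, matching the per-index rates in the Toeplitz literature (e.g.\ the extension of \citet{Bogoya2015,Zizler2002} to our non-Hermitian block doubly banded setting). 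Combining the two bounds finishes the proof with $c_2 = C' + c''$.

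The principal obstacle will be this last step: the aggregate $O(n)$ bound of Theorem~\ref{thm:circulant} does not, by itself, prevent a small number of indices from having large discrepancies, and in the non-Hermitian block setting there is no direct Sturmian interlacing analogue. One must either invoke specialized results from the Toeplitz literature, which require non-degeneracy (no flat pieces or crossings) of the singular value functions $\phi_j$, or argue carefully via the low-rank-plus-small-norm structure of $\Cm - \Tm$ combined with a cluster-consistent sorting of singular values; the latter is delicate because the branches $\phi_j$ of $F$ may cross in $\omegav$, so the cluster labels are only globally well defined after the within-cluster sort used in the statement of the theorem.
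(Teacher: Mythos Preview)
Your second-term analysis (empirical CDF of grid samples of $\phi_j$ versus $G_{\phi_j}$, followed by quantile inversion) is close in spirit to the paper, though the paper does not introduce the extra ``density bounded away from zero'' assumption; instead it proves directly that $\phi_j$ is Lipschitz via the Hoffman--Wielandt inequality and then invokes \cite{Bogoya2015} to conclude that $Q_{\phi_j}$ itself is Lipschitz, which is what converts a CDF gap of order $1/n$ into a quantile gap of the same order.

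The genuine gap is in your first term. A rank-$\rho$ perturbation with $\rho=O(n)$ combined with Weyl's interlacing yields
\[
\sigma_{k+\rho}(\Cm)\;\le\;\sigma_k(\Tm)\;\le\;\sigma_{k-\rho}(\Cm),
\]
i.e.\ an \emph{index shift} of order $n$ out of $n^2$, not a per-index value bound $|\sigma_k(\Tm)-\sigma_k(\Cm)|\le c''/n$. To turn an index shift into a value bound you would already need to know that neighboring singular values of $\Cm$ (or equivalently values of $Q_{\phi_j}$ at arguments $O(1/n)$ apart) differ by $O(1/n)$---which is precisely the Lipschitz property of $Q_{\phi_j}$ that you have not yet established at that point of your argument. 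So as written the first step is circular.

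The paper sidesteps this entirely: it never attempts a per-index comparison of $\sigma_k(\Tm)$ with $\sigma_k(\Cm)$. Instead it works at the distribution-function level throughout. The key lemma shows directly that the empirical CDF of the $\Tm$-singular values satisfies
\[
\bigl|G_{X_n^{(j)}}(v)-G_{\phi_j}(v)\bigr|\le \frac{c_1}{n},
\]
obtained by combining the \emph{aggregate} bound of Theorem~\ref{thm:circulant} (namely $\sum_k|\sigma_k^{(j)}(\Tm)-\sigma_k^{(j)}(\Cm)|\le O(n)$) with the Riemann-sum estimate for $\Cm$ and then invoking \cite{Zizler2002}, Corollary~3.3, to pass from the sum bound to the CDF bound. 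Only after this CDF estimate is in hand does the paper use the identity $\sigma_k^{(j)}(\Tm)=Q_{X_n^{(j)}}(k/n^2)$ together with Lipschitz continuity of $Q_{\phi_j}$ to obtain the per-index conclusion. In other words, the route is $\{\text{aggregate bound}\}\to\{\text{CDF gap}\}\to\{\text{quantile gap}\}$, and the circulant $\Cm$ is used only inside the first arrow, never via a pointwise singular-value comparison.

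Your low-rank observation is correct and could be made to work, but only if you reinterpret it as a CDF shift of size $\rho/n^2=O(1/n)$ and then apply the Lipschitz property of $Q_{\phi_j}$---at which point you have essentially reproduced the paper's argument via a slightly different bookkeeping.
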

\begin{proof}
Without loss of generality, we let $r \le s$, i.e., $r=\min\{r,s\}$. We divide all $\{\sigma_j(\Tm)\}_{j=1}^{rn^2}$ into $r$ clusters $\{\sigma_k^{(j)}(\Tm),k\in[n^2]\}_{j=1}^{r}$ according to their localization, each of which is arranged in ascending order, i.e., 
\begin{align}
    \sigma_1^{(j)}(\Tm) \le \sigma_2^{(j)}(\Tm) \le \dots \le \sigma_{n^2}^{(j)}(\Tm), \quad \forall j \in [r].
\end{align}
From Theorem 1, we have
\begin{align} \label{eq:theorem-1-proof}
  \frac{1}{r} \sum_{j=1}^{r} \lim_{n \to \infty}  \frac{1}{n^2} \sum_{k=1}^{n^2} \Phi(\sigma_k^{(j)}(\Tm)) = \frac{1}{r}\sum_{j=1}^{r} \frac{1}{(2\pi)^2} \int_{-\pi}^{\pi} \int_{-\pi}^{\pi}  \Phi(\sigma_j(F(\omega_1,\omega_2))) d\omega_1 d\omega_2.
\end{align}

Let $\phi_j:[-\pi,\pi]^2 \mapsto \RR_+$ be the $j$-th singular value function of $F(\omegav)$, i.e., $\phi_j(\omegav)=\sigma_j(F(\omega_1,\omega_2))$. When taking $\omegav$ as a multivariate random variable with uniform distribution on $[-\pi,\pi]^2$, we can treat $\phi_j(\omegav)$ as a continuous random variable, such that the right-hand side of \eqref{eq:theorem-1-proof} can be interpreted as 
\[
\frac{1}{r} \sum_{j=1}^{r} \E_{\omegav} [\Phi(\phi_j(\omegav))]
\]
Similarly, we can treat $\{\sigma_k^{(j)}(\Tm)\}_{k=1}^{n^2}$ as realizations of discrete random variable $X_n^{(j)}$ with equal probability $\Pr(X_n^{(j)}=\sigma_k^{(j)}(\Tm))=\frac{1}{n^2}$, and interpret the left-hand side of \eqref{eq:theorem-1-proof} as
\[
\frac{1}{r} \sum_{j=1}^{r} \lim_{n \to \infty} \E_{X_n^{(j)}} [\Phi(X_n^{(j)})]
\]
Thus, from a probabilistic perspective, Theorem 1 says, for the sequence of random variables $\{X_1^{(j)},X_2^{(j)},\dots,X_n^{(j)},\dots\}$, $\E_{X_n^{(j)}} [\Phi(X_n^{(j)})]$ converges to $\E_{\omegav} [\Phi(\phi_j(\omegav))]$ in distribution for any continuous function $\Phi$.

For both random variables $X_n^{(j)}$ and $\phi_j(\omegav)$, let us define the cumulative distribution and quantile functions as
\begin{align}
    G_{X_n^{(j)}}(v)&=\frac{1}{n^2} \max \{k \in [n^2]: \sigma_k^{(j)}(\Tm) \le v\}\\
    Q_{X_n^{(j)}}(u)&=\inf \{v \in \RR: u \le G_{X_n^{(j)}}(v)\}\\
    G_{\phi_j}(v)&=\frac{1}{(2\pi)^2}\mu\{\omegav \in [-\pi,\pi]^2: \phi_j(\omegav)\le v\}\\
    Q_{\phi_j}(u)&=\inf\{v \in \RR: u \le G_{\phi_j}(v)\}
\end{align}
where $\mu$ is the Lebesgue measure of $\omegav$ on $[-\pi,\pi]^2$. As $\{\sigma_k^{(j)}(\Tm)\}_{k=1}^{n^2}$ is ordered and $G_{X_n^{(j)}}(v)$ is right continuous and non-decreasing over $v$, it follows from \cite{Bogoya2015}[Proposition 2.5] that
\begin{align} \label{eq:thm3-proof-Q-sigma}
    Q_{X_n^{(j)}}(\frac{k}{n^2}) = \sigma_k^{(j)}(\Tm).
\end{align}

By Portmanteau Lemma \cite{Bogoya2015}[Lemma 3.1], the fact that $\E_{X_n^{(j)}} [\Phi(X_n^{(j)})]$ converges to $\E_{\omegav} [\Phi(\phi_j(\omegav))]$ in distribution for any continuous function $\Phi$ leads to (1) $G_{X_n^{(j)}}(v)$ converges to $G_{\phi_j}(v)$ for every $v \in \RR$ at which  $G_{\phi_j}$ is continuous, and (2) $Q_{X_n^{(j)}}(u)$ converges to $Q_{\phi_j}(u)$ for every $u \in (0,1]$ at which $Q_{\phi_j}$ is continuous.

Inspired by \cite{Zizler2002}[Theorem 3.2, Corollary 3.3], we can further bound the gap between $G_{X_n^{(j)}}(v)$ and $G_{\phi_j}(v)$.
\begin{lemma} \label{lemma:thm3-gap}
There exists a constant $c_1$ such that
\begin{align} \label{eq:apdx-G-gap}
   \max_{j \in [r]} \;\; \abs{G_{X_n^{(j)}}(v)-G_{\phi_j}(v)} \le \frac{c_1}{n}
\end{align}
for every $n>1$.
\end{lemma}
\begin{proof}
Due to Theorem 2, the singular values of $\Cm$ can be given by those of $F(\omegav)$ with uniform sampling on $[-\pi,\pi]^2$, i.e.,  
\begin{align}
\{\sigma_k^{(j)}(\Cm)\}_{k=1}^{n^2} = \left\{\sigma_{j}(F(\omega_1,\omega_2)): (\omega_1,\omega_2) = (-\pi+ \frac{2 \pi j_1}{n}, -\pi + \frac{2 \pi j_2}{n}), \forall j_1,j_2 \in [n]-1 \right\}
\end{align}
for $j \in [r]$.
Following the same footsteps of \cite{Zizler2002}[Theorem 2.2], we have
\begin{align}
    \abs{\sum_{k=1}^{n^2} \sigma_k^{(j)}(\Cm) - \frac{n^2}{(2\pi)^2}\int_{-\pi}^{\pi} \int_{-\pi}^{\pi} \sigma_j(F) d \omega_1 d \omega_2 } \le c'_0 n
\end{align}
where $c'_0>0$ is a constant that does not depend on $n$.
Due to Theorem 2, there must exist a constant $c_0>0$ such that
\begin{align}
   \sum_{k=1}^{n^2} \abs{\sigma_k^{(j)}(\Tm) -  \sigma_k^{(j)}(\Cm)} \le c_0 n.
\end{align}

It follows that, there exists a constant $c_1>0$ that does not depend on $n$ such that
\begin{align}
     \abs{\sum_{k=1}^{n^2} \sigma_k^{(j)}(\Tm) - \frac{n^2}{(2\pi)^2}\int_{-\pi}^{\pi} \int_{-\pi}^{\pi} \sigma_j(F) d \omega_1 d \omega_2 } \le c_1n
\end{align}

By \cite{Zizler2002}[Corollary 3.3], for a real value $v$, we have
\begin{align}
   \abs{G_{X_n^{(j)}}(v)-G_{\phi_j}(v)} \le \frac{c_1}{n}
\end{align}
for all $j$, where $\phi_j(\omegav)$ takes values of $\sigma_j(F(\omegav))$ that are upper bounded given the fact that $F(\omegav)$ is a Laurent polynomial matrix with respect to $e^{\jmath \omegav}$, each element of which is a Laurent polynomial. This completes the proof.
\end{proof}

Let $\epsilon=\frac{c_1}{n}$ and $\frac{k-1}{n^2} < u \le \frac{k}{n^2}$. By \cite{Bogoya2015}[Proposition 2.2], we have $u \le G_{\phi_j}(Q_{\phi_j}(u))$. Together with Lemma \ref{lemma:thm3-gap}, we have
\begin{align}
    u &= u+\epsilon - \epsilon \le G_{\phi_j}(Q_{\phi_j}(u+\epsilon)) - \epsilon \le G_{X_n^{(j)}}(Q_{\phi_j}(u+\epsilon))
\end{align}

Let $\delta=c\epsilon$ with $c>0$ being a constant. Given the fact that $Q_{\phi_j}(u-\epsilon) \ge Q_{\phi_j}(u-\epsilon) - \delta$, we have
\begin{align}
    u-\epsilon \ge G_{\phi_j}(Q_{\phi_j}(u-\epsilon)-\delta)  \ge G_{X_n^{(j)}}(Q_{\phi_j}(u-\epsilon)-\delta) - \epsilon
\end{align}
Thus, due to the fact that $u \le G_{X_n^{(j)}}(v)$ if and only if $Q_{X_n^{(j)}}(u) \le v$, we have
\begin{align} \label{eq:thm3-proof-Q-bounds} 
     Q_{X_n^{(j)}}(u) &\le Q_{\phi_j}(u+\epsilon)  \\
     Q_{X_n^{(j)}}(u) &\ge Q_{\phi_j}(u-\epsilon) - \delta.
\end{align}

Before proceeding further, we investigate the Lipschitz continuity of $\phi_j$.
\begin{lemma} \label{lemma:thm3-lip}
The singular value function $\phi_j(\omegav)=\sigma_j(F(\omegav))$ is Lipschitz continuous for every $j$. 
\end{lemma}
\begin{proof}
According to the generalized Hoffman-Wielandt theorem for singular values \cite{mirsky1960symmetric}[Theorem 5] and \cite{sun1983perturbation}[Theorem 5.1], we have
\begin{align}
    \sqrt{\sum_{j=1}^{r} \Abs{\sigma_j(F(\omegav))-\sigma_j(F(\omegav'))}} & \stackrel{}{\le} \normf{F(\omegav)-F(\omegav')}\\
    &= \normf{\sum_{k_1=-h_1}^{h_2} \sum_{k_2=-w_1}^{w_2} \Tm_{k_1,k_2} (e^{\jmath \kv^\T\omegav} - e^{\jmath \kv^\T\omegav'})}\\
     &\stackrel{(a)}{\le} \sum_{k_1=-h_1}^{h_2} \sum_{k_2=-w_1}^{w_2} \normf{ \Tm_{k_1,k_2} }
    \abs{e^{\jmath \kv^\T\omegav} - e^{\jmath \kv^\T\omegav'}}\\
     &\stackrel{(b)}{\le} \sum_{k_1=-h_1}^{h_2} \sum_{k_2=-w_1}^{w_2} \normf{ \Tm_{k_1,k_2} }
    \abs{ \kv^\T (\omegav-\omegav') }\\
    &\stackrel{(c)}{\le} \sum_{k_1=-h_1}^{h_2} \sum_{k_2=-w_1}^{w_2} \norm{\kv} \normf{ \Tm_{k_1,k_2} }
    \norm{\omegav-\omegav'}
\end{align}
where $(a)$ is due to the triangle inequality of matrix norm, $(b)$ is due to the non-negativity of matrix norms and the following inequality
\begin{align}
    \abs{e^{\jmath \kv^\T\omegav} - e^{\jmath \kv^\T\omegav'}} &= \abs{\int_{\omegav'}^{\omegav} \jmath e^{\jmath \kv^\T\tv} \kv^\T d \tv}\\
    &\le \abs{\int_{\omegav'}^{\omegav} \abs{\jmath e^{\jmath \kv^\T\tv}} \kv^\T d \tv}\\
    &\le \abs{ \kv^\T \int_{\omegav'}^{\omegav} d \tv}\\
    &\le \abs{ \kv^\T (\omegav-\omegav') }
\end{align}
and $(c)$ is due to Cauchy-Schwarz inequality.

Let $K=\sum_{k_1=-h_1}^{h_2} \sum_{k_2=-w_1}^{w_2} \norm{\kv} \normf{ \Tm_{k_1,k_2} }$, which is a positive constant that does not depend on $\omegav$. Thus, we have
\begin{align}
 \abs{\sigma_j(F(\omegav))-\sigma_j(F(\omegav'))}  \le K \norm{\omegav-\omegav'}
\end{align}
for all $j$, which means that $\sigma_j(F(\omegav))$ is $K$-Lipschitz continuous, so is $\phi_j(\omegav)$ by definition.
\end{proof}

Provided Lemma \ref{lemma:thm3-lip}, following the same footsteps of Proposition 2.7 in \cite{Bogoya2015}, we conclude that $Q_{\phi_j}(u)$ is also Lipschitz continuous, i.e.,
\begin{align} \label{eq:apdx-Q-phi}
    \abs{Q_{\phi_j}(u_1) - Q_{\phi_j}(u_2)} \le L \abs{u_1-u_2}
\end{align}
for all $u_1,u_2\in(0,1]$.

Now, equipped with the Lipschitz continuity, by \eqref{eq:thm3-proof-Q-sigma} and \eqref{eq:thm3-proof-Q-bounds}, we have 
\begin{align} \label{eq:thm3-proof-left}
    \sigma_k^{(j)}(\Tm) &= Q_{X_n^{(j)}}(u) \le Q_{\phi_j}(u+\epsilon) \le Q_{\phi_j}(u)+L\epsilon \\
    \sigma_k^{(j)}(\Tm) &= Q_{X_n^{(j)}}(u) \ge Q_{\phi_j}(u-\epsilon) - \delta \ge Q_{\phi_j}(u)-L\epsilon - \delta
\end{align}
for $u \in (\frac{k-1}{n^2},\frac{k}{n^2}]$.
This implies that
\begin{align}
\abs{\sigma_k^{(j)}(\Tm)-Q_{\phi_j}(u)} \le  L\epsilon + \delta \defeq \frac{c_2}{n}
\end{align}
for all $k \in [n^2]$ and $j \in [r]$.
This completes the proof.
\end{proof}
\begin{remark}
Theorem 3 offers a better approximation method for the individual singular values of the linear transformation matrix $\Tm$. Although the quantile approximation approach has the same scaling law  of accumulated approximation error (i.e., $O(n)$) as the circular approximation, the individual singular value approximation accuracy is somewhat guaranteed with vanishing error as $n$ tends to infinity. In contrast, this may not be guaranteed by the circular approximation. From Theorem~2, it is possible that the largest singular value by circular approximation can scale as $n$. Albeit promising from a theoretical point of view, it is challenging to characterize the exact quantile function. A compromised way is to estimate such a quantile function through the circular approximation, from which the singular value distribution can be adjusted so as to reach a relatively better approximation. The experimental results show that a naive subtle adjustment of singular values obtained by the circular approximation (i.e., uniform sampling of $F$) yields notable improvement on approximation accuracy, especially for the largest singular value.
\end{remark}

\subsection{Proof of Theorem 4}
To upper bound the spectral norm of the linear transformation matrix $\Tm$, we bound it by the spectral norm of its spectral representation - the spectral density matrix $F(\omega_1,\omega_2)$.
\begin{lemma}
$
\norm{\Tm}_2 \le \norm{F}_2.
$
\end{lemma}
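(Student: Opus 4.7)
The plan is to express $\yv^{\H}\Tm\xv$ as an integral of the quadratic form induced by $F$ against $\CC^{s}$- and $\CC^{r}$-valued trigonometric polynomials built from $\xv$ and $\yv$, then bound pointwise by $\norm{F(\omegav)}_2$ and finish by Cauchy--Schwarz plus Parseval. Throughout, $\norm{F}_2$ is read as the essential supremum $\operatorname{ess\,sup}_{\omegav \in [-\pi,\pi]^2}\norm{F(\omegav)}_2$, which is the operator norm of the associated multiplication operator.

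First, I would partition $\xv\in\CC^{sn^2}$ into blocks $\xv_{k,l}\in\CC^{s}$ indexed by $(k,l)$, associate the $\CC^{s}$-valued trigonometric polynomial
\begin{align*}
X(\omega_1,\omega_2)=\sum_{k,l}\xv_{k,l}\,e^{\jmath(k\omega_1+l\omega_2)},
\end{align*}
and similarly build a $\CC^{r}$-valued polynomial $Y(\omega_1,\omega_2)$ from $\yv\in\CC^{rn^2}$. Parseval's identity in $L^2([-\pi,\pi]^2)$ gives $\tfrac{1}{(2\pi)^2}\int\norm{X}^2\,d\omegav=\norm{\xv}^2$ and likewise for $Y$.

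Next, I would substitute the Fourier representation \eqref{eq:App-T-F} of $\Tm_{p-k,q-l}$ into $\yv^{\H}\Tm\xv=\sum_{p,q,k,l}\yv_{p,q}^{\H}\Tm_{p-k,q-l}\xv_{k,l}$. The factor $e^{-\jmath((p-k)\omega_1+(q-l)\omega_2)}$ splits into $e^{-\jmath(p\omega_1+q\omega_2)}e^{\jmath(k\omega_1+l\omega_2)}$, so the summations over $(k,l)$ and $(p,q)$ pass inside the integral and reconstruct $X$ and $Y^{\H}$, yielding the key identity
\begin{align*}
\yv^{\H}\Tm\xv=\frac{1}{(2\pi)^2}\int_{-\pi}^{\pi}\int_{-\pi}^{\pi}Y(\omega_1,\omega_2)^{\H}F(\omega_1,\omega_2)X(\omega_1,\omega_2)\,d\omega_1 d\omega_2.
\end{align*}

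Finally, the pointwise submultiplicativity $\abs{Y^{\H}FX}\le\norm{F(\omegav)}_2\norm{Y(\omegav)}\norm{X(\omegav)}$ combined with $\norm{F(\omegav)}_2\le\norm{F}_2$ a.e., followed by Cauchy--Schwarz in $L^2([-\pi,\pi]^2)$ and Parseval, gives $\abs{\yv^{\H}\Tm\xv}\le\norm{F}_2\,\norm{\xv}\,\norm{\yv}$. Taking the supremum over unit $\xv,\yv$ delivers $\norm{\Tm}_2\le\norm{F}_2$. The main obstacle is essentially bookkeeping: writing the two-level block index cleanly and confirming the convention for $\norm{F}_2$; the underlying symbol calculus for block Toeplitz operators is standard, and because $F$ is a (finite) Laurent polynomial the interchange of sum and integral needs no extra justification.
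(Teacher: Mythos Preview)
Your proposal is correct and follows essentially the same argument as the paper: both express the bilinear form $\yv^{\H}\Tm\xv$ (the paper takes $\uv,\vv$ to be the singular vectors directly) as the integral $\frac{1}{(2\pi)^2}\int Y^{\H}F X\,d\omegav$ via the Fourier representation of the blocks, bound pointwise by $\sigma_{\max}(F)$, and finish with Cauchy--Schwarz in $L^2$ and Parseval. The only cosmetic differences are that you use Hermitian transpose and take a supremum over unit vectors at the end, whereas the paper starts from the singular vectors achieving $\sigma(\Tm)$; neither changes the substance.
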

\begin{proof}
Inspired by Theorem 4.1 of \cite{Tilli1998}, we extend the proof from block Toeplitz to doubly block Toeplitz matrices.

Given a singular value of $\Tm \in \RR^{rn^2 \times sn^2}$, say $\sigma(\Tm)$, there exist $\uv \in \RR^{rn^2}$ and $\vv \in \RR^{sn^2}$ subject to $\norm{\uv}_2=\norm{\vv}_2=1$ such that $\sigma(\Tm)=\uv^{\T} \Tm \vv$, where $\uv=[\uv_{k,l}]_{k,l}$ and $\vv=[\vv_{k,l}]_{k,l}$, with the $(k,l)$-th block vector $\uv_{k,l} \in \RR^{r \times s}$ and $\vv_{k,l} \in \RR^{r \times s}$ corresponding to $\Tm_{k,l}$. According to the definition of $\Tm_{k,l}$ in \eqref{eq:App-T-F}, we have
\begin{align}
    \sigma(\Tm) = \frac{1}{(2 \pi)^2} \int_{-\pi}^{\pi} \int_{-\pi}^{\pi} u(\omega_1,\omega_2)^{\T} F(\omega_1,\omega_2) v(\omega_1,\omega_2) d \omega_1 d \omega_2
\end{align}
where $u(\omega_1,\omega_2)$ and $v(\omega_1,\omega_2)$ are Fourier transforms of $\uv_{k,l}$ and $\vv_{k,l}$, respectively, i.e.,
\begin{align}
    u(\omega_1,\omega_2) = \sum_{k=1}^n \sum_{l=1}^n \uv_{k,l} e^{\jmath (k \omega_1 + l \omega_2)}, \\
    v(\omega_1,\omega_2) = \sum_{k=1}^n \sum_{l=1}^n \vv_{k,l} e^{\jmath (k \omega_1 + l \omega_2)}.
\end{align}
Thus, we have
\begin{align}
    \sigma(\Tm) &\stackrel{(a)}{\le} \frac{1}{(2 \pi)^2} \int_{-\pi}^{\pi} \int_{-\pi}^{\pi}
    \sigma_{\max}(F)
    \norm{u(\omega_1,\omega_2)}_2  \norm{v(\omega_1,\omega_2)}_2 d \omega_1 d \omega_2\\
    &\stackrel{(b)}{\le} \sigma_{\max}(F)
    \frac{1}{(2 \pi)^2} \sqrt{\int_{-\pi}^{\pi} \int_{-\pi}^{\pi}
    \Norm{u(\omega_1,\omega_2)}_2 d \omega_1 d \omega_2}
    \sqrt{\int_{-\pi}^{\pi} \int_{-\pi}^{\pi}\Norm{v(\omega_1,\omega_2)}_2 d \omega_1 d \omega_2}\\
    &\stackrel{(c)}{=} \sigma_{\max}(F)
     \norm{\uv}_2 \norm{\vv}_2\\
    &= \sigma_{\max}(F)
\end{align}
where $(a)$ is from the definition of the largest singular value, i.e., $\sigma_{\max}(F) = \sup \frac{u^{\T} F v}{\norm{u}_2\norm{v}_2}$, $(b)$ is due to Cauchy inequality, and $(c)$ is resulted directly from the computation of integrals.
Thus, it follows immediately that $\norm{\Tm}_2 \le \norm{F}_2$.
\end{proof}
\begin{theorem}
The spectral norm $\norm{F}_2$ can be bounded by
\begin{align} 
\norm{F}_2 &\le \min \Big\{\sqrt{hw}\norm{\Rm}_2, \sqrt{hw}\norm{\Lm}_2 \Big\}\\
\norm{F}_2 &\le \max_{\omegav} \sqrt{\norm{F(\omegav)}_1 \norm{F(\omegav)}_{\infty}}\\
    \norm{F}_2 &\le \sum_{k=-h_1}^{h_2} \sum_{l=-w_1}^{w_2} \norm{\Tm_{k,l}}_2
\end{align}
where $\Rm \in \RR^{hc_{out} \times wc_{in}}$ is a $c_{out} \times c_{in}$ block matrix with $(c,d)$-th block being $\Km_{c,d,:,:} \in \RR^{h \times w}$ and $\Lm \in \RR^{wc_{out} \times hc_{in}}$ is a $c_{out} \times c_{in}$ block matrix with $(c,d)$-th block being $\Km_{c,d,:,:}^\T \in \RR^{w \times h}$.
\end{theorem}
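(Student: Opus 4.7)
The plan is to bound $\norm{F}_2 \eqdef \sup_{\omegav \in [-\pi,\pi]^2} \norm{F(\omegav)}_2$ directly at the level of the spectral density matrix, using three different matrix inequalities applied pointwise in $\omegav$, and then taking the supremum.

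For the third bound \eqref{eq:upperibound-2norm}, I would simply apply the triangle inequality for spectral norm to the finite sum defining $F$ in \eqref{eq:theorem-1-F}, using $|e^{\jmath(k\omega_1+l\omega_2)}|=1$, to get $\norm{F(\omegav)}_2 \le \sum_{k,l} \norm{\Tm_{k,l}}_2$ for every $\omegav$; the right-hand side is $\omegav$-independent, so taking $\sup_{\omegav}$ yields the claim. For the second bound \eqref{eq:upperibound-oneinfnorm}, I would invoke the standard matrix-norm inequality $\norm{A}_2 \le \sqrt{\norm{A}_1 \norm{A}_\infty}$ (a consequence of Hölder interpolation between the two operator norms) applied to $A = F(\omegav)$, then take the supremum in $\omegav$.

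The main work, and the only nontrivial step, is the first bound \eqref{eq:upper-bound-F-2norm}. The plan here is a variational/Kronecker-product argument. Using $[\Tm_{k,l}]_{c,d} = \Km_{c,d,h_1+k+1,w_1+l+1}$ and reindexing $p=h_1+k+1$, $q=w_1+l+1$, I would first rewrite, for unit vectors $\xv \in \CC^{c_{out}}$, $\yv \in \CC^{c_{in}}$,
\begin{align*}
\abs{\xv^H F(\omegav) \yv} = \Bigl| \sum_{c,d,p,q} x_c^* y_d \Km_{c,d,p,q} e^{\jmath(p\omega_1 + q\omega_2)} \Bigr|,
\end{align*}
where the overall phase factor $e^{-\jmath((h_1+1)\omega_1+(w_1+1)\omega_2)}$ has been absorbed since it has modulus one. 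Next, define $\av(\omega_1) \in \CC^h$ by $a_p = e^{-\jmath p\omega_1}$ and $\bv(\omega_2) \in \CC^w$ by $b_q = e^{\jmath q\omega_2}$, so that $\norm{\av}^2 = h$ and $\norm{\bv}^2 = w$. Then with $\tilde\xv \eqdef \xv \otimes \av$ and $\tilde\yv \eqdef \yv \otimes \bv$, a direct expansion of the Kronecker indexing shows that the double sum above equals $\tilde\xv^H \Rm \tilde\yv$, because $\Rm$ is precisely the block matrix whose $((c,p),(d,q))$ entry is $\Km_{c,d,p,q}$. Cauchy--Schwarz via the operator-norm inequality gives
\begin{align*}
\abs{\xv^H F(\omegav) \yv} = \abs{\tilde\xv^H \Rm \tilde\yv} \le \norm{\Rm}_2 \norm{\tilde\xv} \norm{\tilde\yv} = \sqrt{hw}\,\norm{\Rm}_2\,\norm{\xv}\norm{\yv},
\end{align*}
and taking the supremum over unit $\xv,\yv$ and then over $\omegav$ yields $\norm{F}_2 \le \sqrt{hw}\norm{\Rm}_2$. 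The bound with $\Lm$ is obtained by the symmetric construction: swap the roles of the height and width indices by defining $\tilde\xv = \xv \otimes \bv'$ and $\tilde\yv = \yv \otimes \av'$ with appropriate exponential vectors, and observe that the corresponding block matrix with transposed kernel slices is exactly $\Lm$. Finally, combining both with Lemma \ref{lemma:bounding-Tm} gives \eqref{eq:upper-bound-F-2norm}.

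The main obstacle I anticipate is keeping the index conventions for the Kronecker reshaping straight: one must be careful that the chosen phase signs for $\av, \bv$ match up with the conjugation in $\tilde\xv^H$ so that the sum collapses exactly to $\xv^H F(\omegav)\yv$ (up to a unit-modulus global phase). Once the identification $|\xv^H F(\omegav)\yv| = |\tilde\xv^H \Rm \tilde\yv|$ is established, the two factors of $\sqrt{h}$ and $\sqrt{w}$ from the norms of $\av,\bv$ produce the $\sqrt{hw}$ factor essentially for free, and the remaining two bounds are one-line consequences of standard matrix inequalities.
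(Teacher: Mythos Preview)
Your proposal is correct and follows essentially the same approach as the paper: for the second and third bounds you use the same standard matrix inequalities, and for the first bound your variational identity $|\xv^{\H} F(\omegav)\yv| = |(\xv\otimes\av)^{\H}\Rm(\yv\otimes\bv)|$ is exactly the bilinear-form version of the paper's matrix factorization $F(\omegav) = (\Id_{c_{out}}\otimes\zv_1)\,\Rm\,(\Id_{c_{in}}\otimes\zv_2^{\T})$, after which both arguments conclude via $\norm{\av}=\sqrt{h}$, $\norm{\bv}=\sqrt{w}$ (equivalently $\norm{\Id\otimes\zv_i}_2=\sqrt{h},\sqrt{w}$) and submultiplicativity. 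The only cosmetic difference is that the paper states the Kronecker factorization as a matrix identity and then applies $\norm{ABC}_2\le\norm{A}_2\norm{B}_2\norm{C}_2$, whereas you unroll the same identity entrywise through the bilinear form; the index bookkeeping you flag as the main obstacle is precisely what the paper hides inside the one-line factorization.
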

\begin{proof}
Let $z_1=e^{\jmath \omega_1}$ and $z_2 = e^{\jmath \omega_2}$. The $(c,d)$-th element of the spectral density matrix $F(\omega_1,\omega_2)$ can be rewritten as
\begin{align}
    F_{c,d}(z_1,z_2)=\sum_{k=-h_1}^{h_2} \sum_{l=-w_1}^{w_2} t_{c,d}^{k,l} z_1^k z_2^l.
\end{align}
which is a polynomial with respect to $z_1$ and $z_2$.

Let $\Rm_{c,d}=[t_{c,d}^{k,l}]_{k,l} \in \RR^{h \times w}$, $\zv_1=[z_1^{-h_2},\dots,z_1^{h_1}]$ and $\zv_2=[z_2^{-w_2},\dots,z_2^{w_1}]$. Thus, we can represent $F_{c,d}$ in the following two ways. 
\begin{align}
    F_{c,d}=\zv_1 \Rm_{c,d} \zv_2^\T =\zv_2 \Rm^{\T}_{c,d} \zv_1^\T.
\end{align}
Hence, the spectral density matrix $F$ can be represented as
\begin{align}
    F &= (\Id_{r} \otimes \zv_1) \Rm
    (\Id_{s} \otimes \zv_2^{\T})\\
    &= (\Id_{r} \otimes \zv_2) \Lm
    (\Id_{s} \otimes \zv_1^{\T})
\end{align}
where 
\begin{align}
    \Rm = \begin{bmatrix} 
    \Rm_{1,1} & \Rm_{1,2} & \cdots & \Rm_{1,s}\\
    \Rm_{2,1} & \cdots & \cdots & \Rm_{2,s}\\
    \vdots & \vdots & \vdots &\vdots \\
    \Rm_{r,1} & \cdots & \cdots & \Rm_{r,s}
    \end{bmatrix}, \hspace{20pt}
    \Lm = \begin{bmatrix} 
    \Rm_{1,1}^\T & \Rm_{1,2}^\T & \cdots & \Rm_{1,s}^\T\\
    \Rm_{2,1}^\T & \cdots & \cdots & \Rm_{2,s}^\T\\
    \vdots & \vdots & \vdots &\vdots \\
    \Rm_{r,1}^\T & \cdots & \cdots & \Rm_{r,s}^\T
    \end{bmatrix},
\end{align}
with $\Rm \in \RR^{rh \times sw}$ and $\Lm \in \RR^{rw \times sh}$.
Note that 
\begin{align}
    {(\Id_{r} \otimes \zv_1)} (\Id_{r} \otimes \zv_1)^{\H} = h \Id_r\\
(\Id_{s} \otimes \zv_2) (\Id_{s} \otimes \zv_2)^{\H} = w \Id_s
\end{align}
where the columns are orthogonal. So, we have
\begin{align}
    \norm{F}_2 \le \sqrt{hw}\norm{\Rm}_2, \quad \norm{F}_2 \le \sqrt{hw}\norm{\Lm}_2.
\end{align}
This gives us the first bound.

For the second bound, given any $\omegav \in [-\pi,\pi]^2$, we have
\begin{align}
    \Norm{F(\omegav)}_2 \le \norm{F(\omegav)}_1 \norm{F(\omegav)}_{\infty}.
\end{align}
As $\norm{F}_2 \le \max_{\omegav} \norm{F(\omegav)}_2$, we have the second spectral norm bound.

For the third bound, we have
\begin{align}
    \norm{F(\omega_1,\omega_2)}_2 &= \norm{\sum_{k} \sum_{l} \Tm_{k,l} e^{\jmath(k\omega_1+l\omega_2)}}_2\\
    &\le \sum_{k} \sum_{l} \norm{\Tm_{k,l}}_2 \abs{e^{\jmath(k\omega_1+l\omega_2)}}\\
    &= \sum_{k} \sum_{l} \norm{\Tm_{k,l}}_2,
\end{align}
where the inequality is due to Cauchy--Schwarz inequality.
\end{proof}

\section{Extensions and Discussions}
\label{sec:discussion}
Some more general cases are discussed with respect to larger stride size, higher dimensional linear convolution, and multiple convolutional layers in linear networks without activation functions and pooling layers.
\subsection{Stride Larger Than 1}
In previous sections, we were dedicated to linear convolution with stride size 1. When the stride size $g$ is larger than 1, i.e., $g > 1$, the linear transformation matrix $\Tm$ becomes a block $g$-Toeplitz matrix, denoted by $\Tm^g$. For simplicity, we consider the same stride side on both horizontal and vertical directions. Thus, we have $\Tm^g=[\Tm_{gk}]_{k=0}^{n-1}$ where $\Tm_{gk}=[\Tm_{gk,gl}]_{l=0}^{n-1}$ with $\Tm_{k,l}$ defined in \eqref{eq:toe-blocks}.

According to \cite{ngondiep2010spectral}, we have an analogous result to Theorem \ref{thm:szg-limit}.

Let $F:[-\pi,\pi]^2 \mapsto \mathbb{C}^{r \times s}$ be a matrix-valued function, subject to $F \in \mathcal{L}^2([-\pi,\pi]^2)$. The linear transformation matrix $\Tm^g$ with stride $g$ converges to the generating function $F$, i.e., $\Tm^g \sim_{\sigma} F(\omegav,\mv)$, where
\begin{align}
    F(\omegav,\mv)= \sqrt{\frac{1}{g^2} \sum_{m_1=0}^{g-1} \sum_{m_2=0}^{g-1} f^2(\omegav,\mv)}
\end{align}
if $\mv=(m_1,m_2) \in [0,\frac{1}{g}]^2$ and 0 otherwise, with
\begin{align*}
 f(\omegav,\mv) = \sum_{k}^{} \sum_{l}^{} {\Tm}_{gk,gl} e^{\jmath \frac{1}{g}(k (\omega_1+2 \pi m_1) + l (\omega_2 + 2 \pi m_2))}.   
\end{align*}
By this, the singular value distribution of $\Tm^g$ can be alternatively studied on the generating function $F(\omegav,\mv)$. 

\subsection{Higher Dimensional Convolution}
According to \cite{Oudin2009}, a block multi-level Toeplitz matrix $\Tm=\{\Tm_{\iv-\jv}\}_{\iv,\jv=\one}^\nv$ with $\iv=(i_1,\dots,i_d)$, $\jv=(j_1,\dots,j_d)$, and $\nv=(n_1,\dots,n_d)$, it can be alternatively represented as 
\begin{align}
    \Tm = \sum_{\abs{k_1}<n_1} \dots \sum_{\abs{k_d}<n_d} [\Jm_{n_1}^{(k_1)} \otimes \dots \otimes \Jm_{n_d}^{(k_d)}] \otimes \Tm_{\kv}
\end{align}
where $\Jm_{n_j}^{(k_j)}$ is a $n_j \times n_j$ binary matrix with $(p,q)$-th entry being 1 of $p-q=k_j$ and 0 elsewhere, and
\begin{align}
    \Tm_{\kv}=\frac{1}{(2\pi)^d} \int_{\Omega} F(\omegav) e^{-\jmath <\kv,\omegav>} d \omegav
\end{align}
with $\Omega=[-\pi,\pi]^d$, $\kv=(k_1,\dots,k_d)$, $\omegav=(\omega_1,\dots,\omega_d)$ and $<\kv,\omegav>=\sum_{j=1}^d k_j \omega_j$. Then it follows that Theorem 1 can be generalized to $d$-dim linear convolutional layers
\begin{align} \label{eq:multi-level-theorem-1}
    \MoveEqLeft  \lim_{\nv \to \infty} \frac{1}{N} \sum_{j=1}^{\min\{r,s\} N} \Phi(\sigma_j(\Tm)) 
    = \frac{1}{(2\pi)^d} \int_{\Omega} \sum_{j=1}^{\min\{r,s\}} \Phi(\sigma_j(F(\omegav))) d\omegav
\end{align}
with $N=\prod_{i=1}^d n_i$,
for which the asymptotic singular value distribution of higher dimensional linear convolutional layers can be studied through $F: [-\pi,\pi]^d \mapsto \CC^{r \times s}$.

\subsection{Multiple Linear Convolutional Layers}
The collective effect of multiple linear convolutional layers without activation function or pooling layers in CNNs can be seen as the product of the linear transformation matrices of multiple convolutional layers.

For convolutional layers, denote by $\Tm(F_i)$ the linear transformation matrix generated from the spectral density matrix $F_i: [-\pi,\pi]^2 \mapsto \CC^{r \times s}$, for $i=1,\dots,M$. It follows from \cite{barbarino2020block}[Theorem 2.46] that
\begin{align}
    \lim_{n \to \infty} \frac{1}{n^2} \norm{\prod_{i=1}^M \Tm(F_i) - \Tm(\prod_{i=1}^M F_i)}_1 = 0
\end{align}
which means that the product of Toeplitz matrices is asymptotically equal to the Toeplitz matrix generated by the product of all generating functions associated to each linear convolutional layer. 

By this, the spectral analysis of $M$ linear convolutional layers can be alternatively studied on the product of generating functions $\prod_{i=1}^M F_i$. %

\section{Additional Experimental Results}
\label{sec:experiments}
\subsection{Singular Value Approximation}
\label{sec:experiments-sva}
To evaluate the accuracy of our singular value approximation method, we consider three different types of weights that are: (1) randomly generated according to uniform and Gaussian distributions (as CNNs are usually initialized), (2) extracted from pre-trained networks on ImageNet dataset (as CNNs finally converge), and (3) extracted from the training process of ResNets on CIFAR-10 dataset (as CNNs are updated with training epochs).

For simplicity, we set $h_1=h_2$ and $w_1=w_2$, and the input size $n \times n$ per channel is set to $10 \times 10$.
In what follows, the plots present the $(i-1)n+1$-th largest singular values ($i \in [n]$) of four methods with different filter sizes. 
It is worth noting that each singular value shown in the figures represents a cluster of singular values with similar behavior. For instance, the first spike shows the largest singular value, and the following $n-1$ large singular values between the first and the second spikes, which have not been shown in the figure, have similar approximation behavior.

\subsubsection{Randomly Generated Weights}
\begin{figure}[t]
\hspace{-12pt}
{\includegraphics[width=0.26\columnwidth]{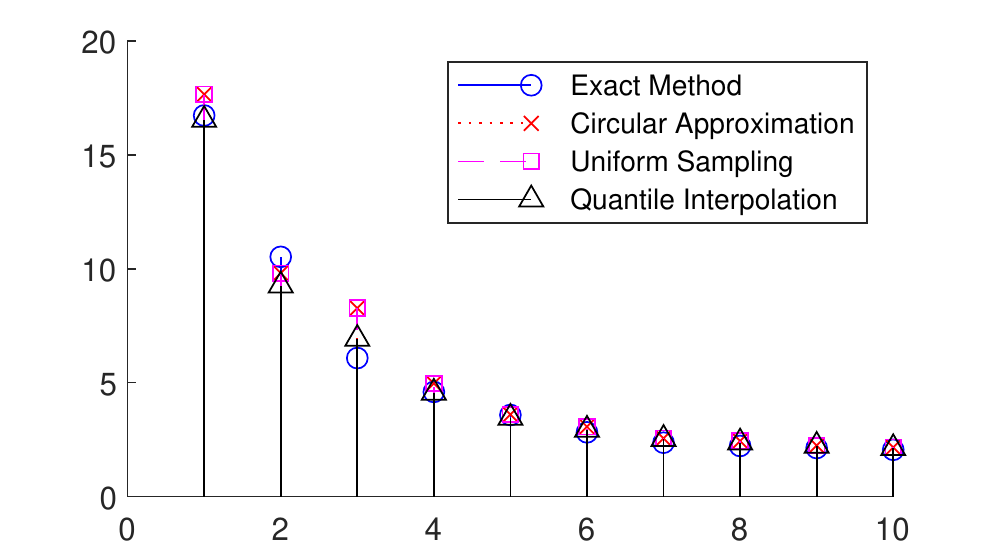}}
\hspace{-12pt}
{\includegraphics[width=0.26\columnwidth]{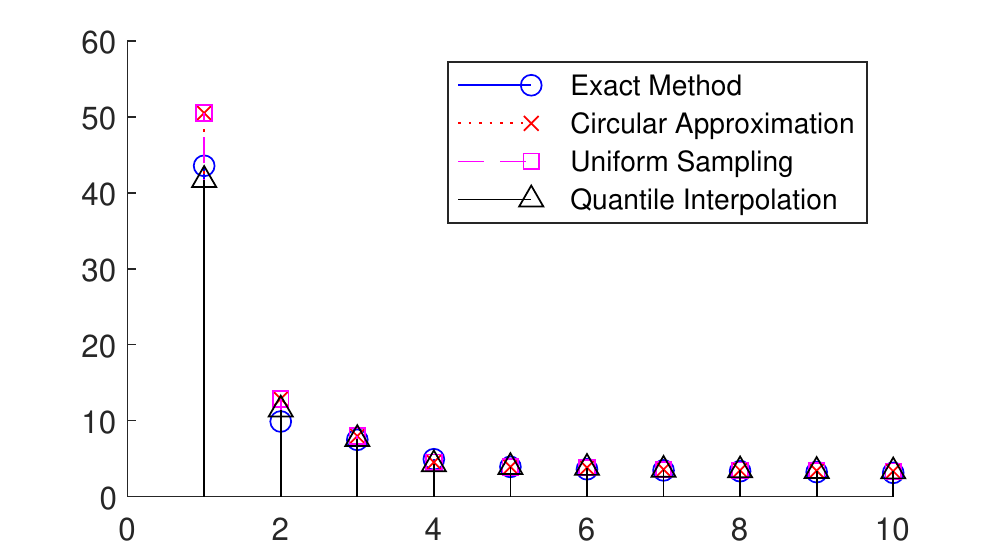}}
\vspace{-12pt}
{\includegraphics[width=0.26\columnwidth]{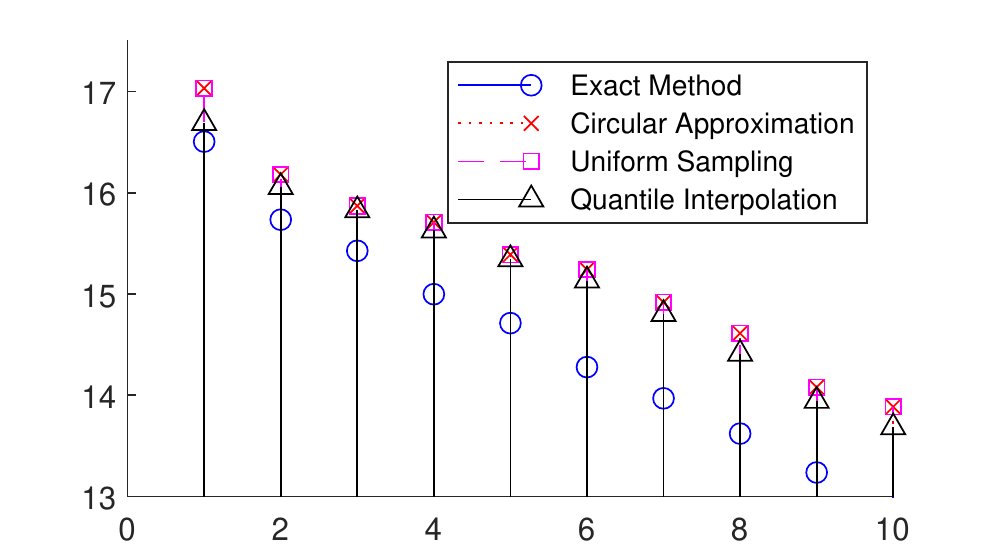}}
\hspace{-12pt}
{\includegraphics[width=0.26\columnwidth]{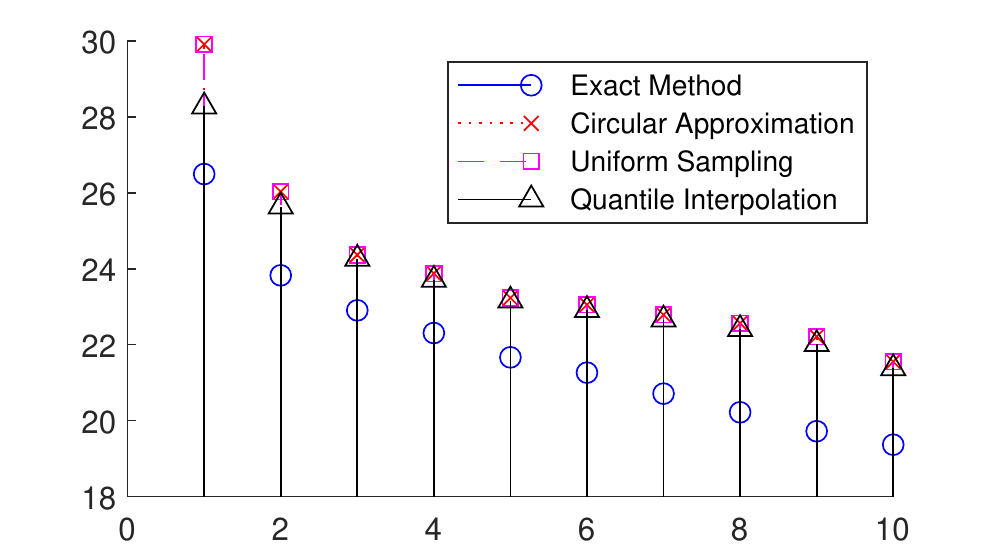}}
\hspace{-12pt}
\begin{center}
\vspace{0.1in}
\caption{Exact and approximated singular values of linear convolutional layers arranged in descending order. With input size per channel $10 \times 10$, only $10$ singular values are shown, each of which represents the behavior of a cluster of singular values. Four types of convolutional filters are considered from left to right with sizes $8 \times 8 \times 3 \times 3$ (uniform distribution),  $8 \times 8 \times 5 \times 5$ (uniform distribution), $8 \times 8 \times 3 \times 3$ (Gaussian distribution), and $8 \times 8 \times 5 \times 5$ (Gaussian distribution), respectively. }
\label{fig:APP-Fig-1}
\end{center}
\vskip -0.35in
\end{figure}
We consider two distributions used for weights initialization.
It has been observed in \cite{Thoma:2017} that weight of CNN layers are located within [-0.5, 0.5]. As such, we randomly generate the weights of convolutional filters following uniform distribution in [-0.5, 0.5]. In addition, the Gaussian distribution initialized weights with zero mean and unit variance are also considered.

First, as in the main text, we illustrate the singular value approximation accuracy among three methods - circular approximation, uniform sampling, and quantile interpolation - against the exact method.
Fig. \ref{fig:APP-Fig-1} presents the $(i-1)n+1$-th largest singular values ($i \in [n]$) of four methods with four different filter sizes. 
Differently from the observations in the main text, we observe that, (1) for uniformly distributed weights, quantile interpolation substantially improve over the circular approximation on the larger singular values (where each spike in the figure represents a number of them with similar behavior), while the smaller singular values obtained from both the circular approximation and quantile interpolation approach the exact values; (2) for Gaussian distributed weights, while quantile interpolation has significant improvement over the circular approximation for the larger singular values (including the largest one and those that are not shown in the figure), the improvement for small singular values is little, because the circular approximation is very inaccurate and the simple adjustment of singular value distribution using linear interpolation does not improve much the accuracy. It calls for more sophisticated nonlinear interpolation methods.  

Besides the singular value illustration as above, we also compute the average accuracy of different approximation methods by Monte-Carlo simulation. We randomly generate 100 different realizations, and calculate the average accuracy over these 100 realizations.
We mainly consider three input sizes $10 \times 10$, $20 \times 20$, and $10 \times 30$ with stride 1 due to limited computing resources (i.e., HP EliteBook with Intel i5 CPU and 8GB RAM). To reduce computational complexity, the inputs with larger size usually have larger stride, which can be roughly seen as a smaller input size with stride 1. Table \ref{tab:accuracy} collects the accuracy performance of different approximation methods (CA=circular approximation, QI=quantile interpolation) compared with the exact solution. As the circular approximation is identical to the uniform sampling method, we only collect the performance of circular approximation for brevity. We mainly consider the approximation error of overall singular values and the largest one, for which the overall error is defined as $\frac{\sum_j\abs{\sigma_j(\Tm)-\hat{\sigma}_j}}{\sum_j\abs{\sigma_j(\Tm)}}$
and the error for the first singular value is $\frac{\abs{\sigma_1(\Tm)-\hat{\sigma}_1}}{\abs{\sigma_1(\Tm)}}$
with $\hat{\sigma}_j$ being the approximated value by different methods.

\begin{table}[]
\caption{Accuracy of approximation methods.}
\vskip 0.1in
    \centering
    \begin{tabular}{c|c|c|c}
\hline
\centering
     Input size & Convolutional filter size &  Overall Error  & Error for 1st Singular Value\\ \hline
     $10 \times 10$ & $8 \times 8 \times 3 \times 3$ & CA=10.4\%, QI=8.3\% & CA=5.6\%, QI=0.9\% \\ \hline
     & $8 \times 8 \times 3 \times 5$ & CA=15.6\%, QI=12.7\% & CA=10.7\%, QI=3.9\% \\ \hline
     & $8 \times 8 \times 5 \times 3$ & CA=14.5\%, QI=11.3\% & CA=10.7\%, QI=3.7\% \\ \hline
     & $8 \times 8 \times 5 \times 5$ & CA=20.4\%, QI=14.8\% & CA=16.1\%, QI=3.9\% \\ \hline
     & $8 \times 8 \times 7 \times 7$ & CA=30.9\%, QI=23.2\% & CA=31.4\%, QI=8.7\% \\ \hline
     & $8 \times 8 \times 9 \times 9$ & CA=46.4\%, QI=31.8\% & CA=51.9\%, QI=11.3\% \\ \hline
     & $8 \times 8 \times 5 \times 9$ & CA=29.9\%, QI=16.4\% & CA=32.8\%, QI=9.9\% \\\hline
     & $8 \times 8 \times 9 \times 5$ & CA=30.1\%, QI=17.0\% & CA=32.7\%, QI=9.3\% \\\hline
     & $16 \times 3 \times 5 \times 9$ & CA=29.4\%, QI=15.3\% & CA=32.8\%, QI=10.1\% \\\hline
     & $16 \times 8 \times 5 \times 9$ & CA=31.1\%, QI=18.5\% & CA=32.8\%, QI=9.8\% \\\hline
     & $16 \times 16 \times 5 \times 9$ & CA=31.9\%, QI=19.9\% & CA=32.9\%, QI=9.7\% \\\hline
     $20 \times 20$ & $8 \times 8 \times 5 \times 5$ & CA=9.1\%, QI=7.7\% & CA=4.3\%, QI=0.6\% \\ \hline
     & $8 \times 8 \times 7 \times 7$ & CA=14.0\%, QI=11.1\% & CA=8.4\%, QI=1.5\% \\ \hline
      & $8 \times 8 \times 7 \times 9$ & CA=12.5\%, QI=9.9\% & CA=11.1\%, QI=0.8\% \\ \hline
      & $8 \times 8 \times 9 \times 9$ & CA=18.4\%, QI=13.2\% & CA=13.7\%, QI=3.0\% \\ \hline
      $10 \times 30$ & $8 \times 8 \times 7 \times 7$ & CA=19.9\%, QI=16.9\% & CA=16.8\%, QI=11.8\% \\ \hline
      & $8 \times 8 \times 5 \times 11$ & CA=13.5\%, QI=10.3\% & CA=12.6\%, QI=1.1\% \\ \hline
      & $8 \times 8 \times 7 \times 11$ & CA=24.0\%, QI=19.1\% & CA=19.9\%, QI=7.6\% \\ \hline
\end{tabular}
    \label{tab:accuracy}
\end{table}

It is observed from Table \ref{tab:accuracy} that for smaller filter size, e.g., $3 \times 3$, the circular approximation looks good enough, and the improvement by quantile interpolation is not much, e.g., by 2.1\% for overall performance, and by 4.7\% for the large singular value. However, as the filter size increases, e.g., $7 \times 7$, the circular approximation can be as large as 30\% away from the exact singular values, and the improvement by quantile interpolation is significant, e.g., by 7.7\% for overall performance and by 22.7\% for the largest singular value. %
As the input size increases, the approximation accuracy for both circular approximation and quantile interpolation is improved, and therefore the improvement of the latter over the former is not substantial compared with those of the smaller input size. It appears in Table \ref{tab:accuracy} that the numbers of channels of the input and the output do not have much influence on the accuracy performance.
It appears that (1) the circular approximation is sufficiently good when the input size is large and the convolutional filter size is small, and it leaves no much room to improve by quantile interpolation; (2) the quantile interpolation approach makes a difference when convolutional filter size is large, yet there is still certain gap to the exact values. This is mainly due to the simple quantile estimation method. It is expected to have larger improvement with more accurate quantile interpolation methods. We leave it to our future work.

\subsubsection{Weights from Pre-trained Networks}
In what follows, we present more experimental results on the accuracy of singular value approximation on pre-trained network models, such as VGG16, VGG19 \cite{vgg16-paper},  AlexNet, DenseNet \cite{huang2017densely}, GoogLeNet \cite{szegedy2015going}, InceptionResNetv2, Inceptionv3, and ResNets \cite{ResNet}, which are trained on ImageNet dataset.

\begin{figure}[t]
{\includegraphics[width=0.26\columnwidth]{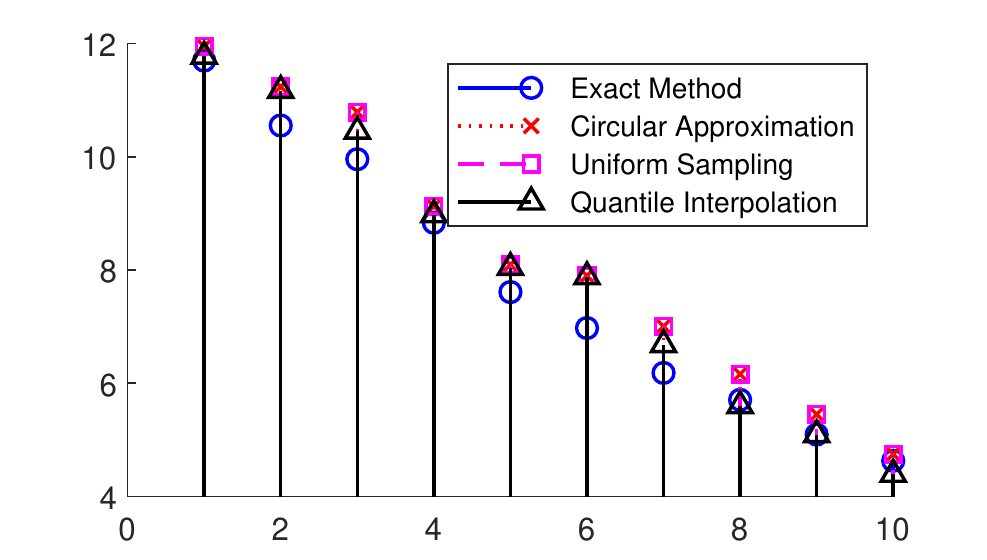}}
\hspace{-12pt}
{\includegraphics[width=0.26\columnwidth]{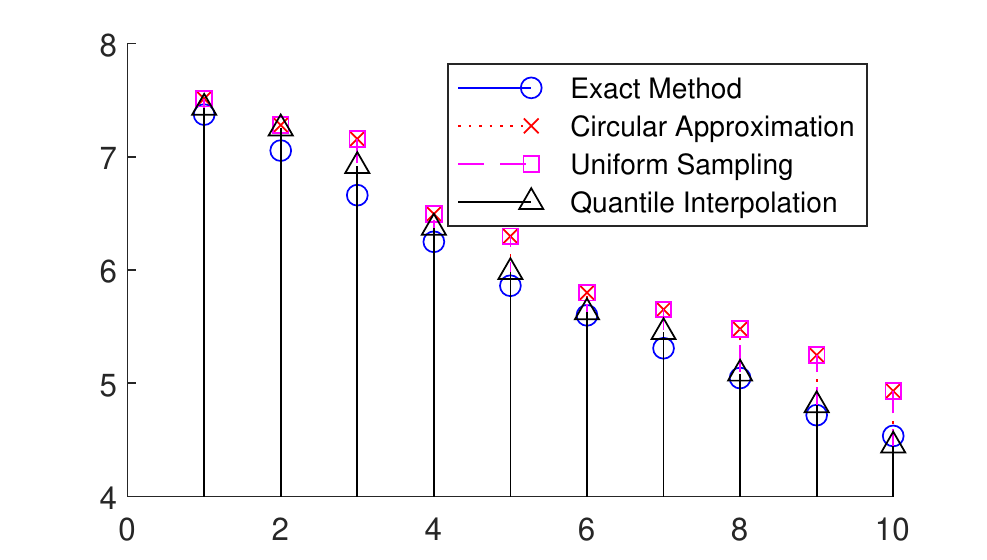}}
\hspace{-12pt}
{\includegraphics[width=0.26\columnwidth]{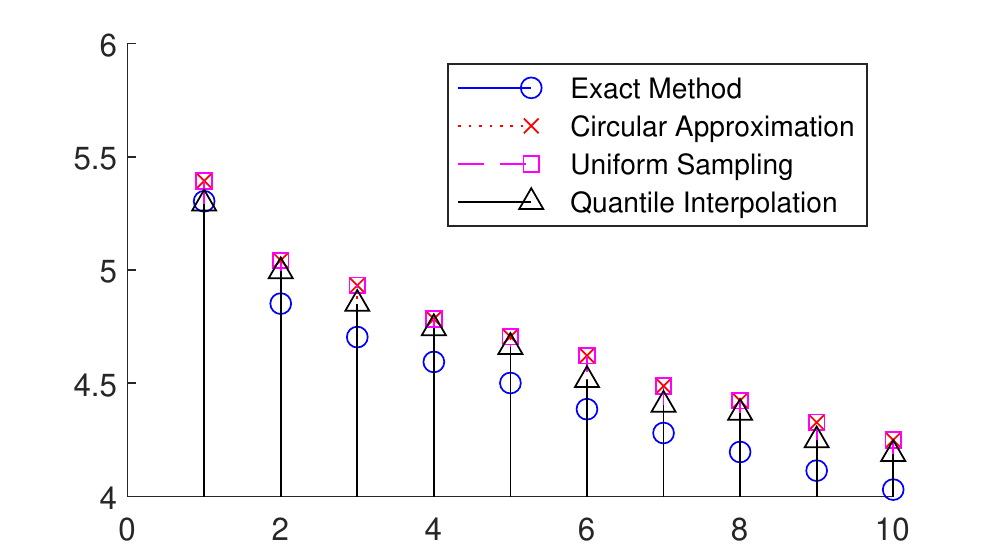}}
\hspace{-12pt}
{\includegraphics[width=0.26\columnwidth]{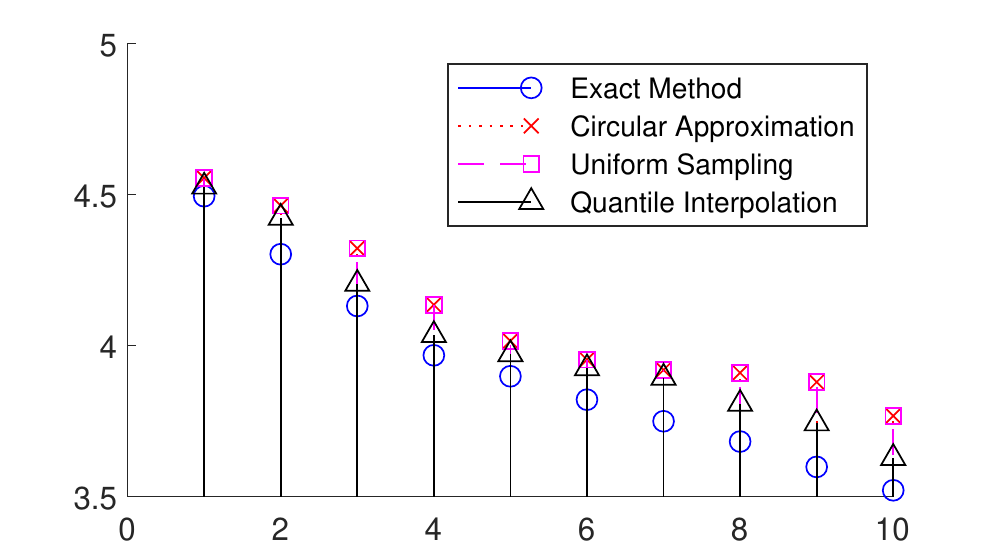}}
\\
\hspace{-12pt}
{\includegraphics[width=0.26\columnwidth]{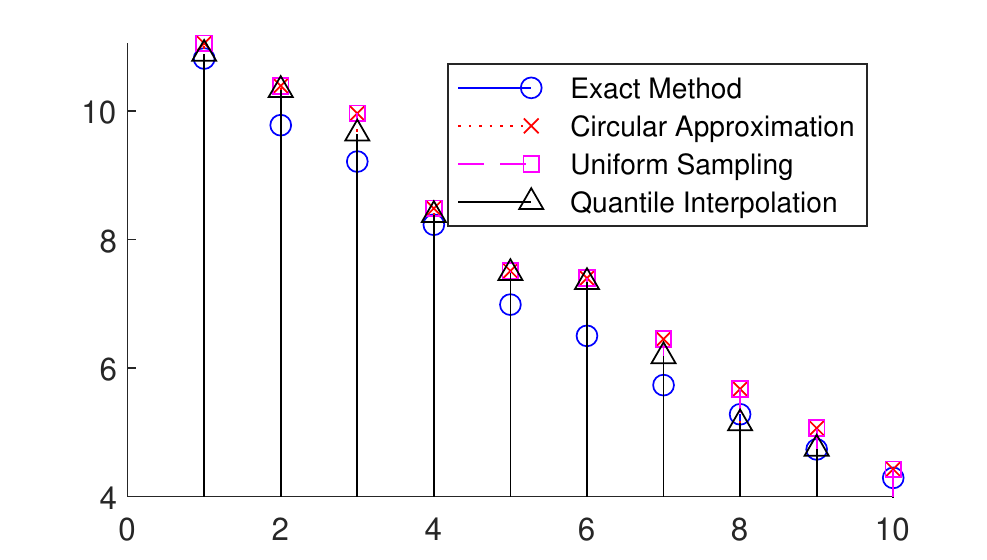}}
\hspace{-12pt}
{\includegraphics[width=0.26\columnwidth]{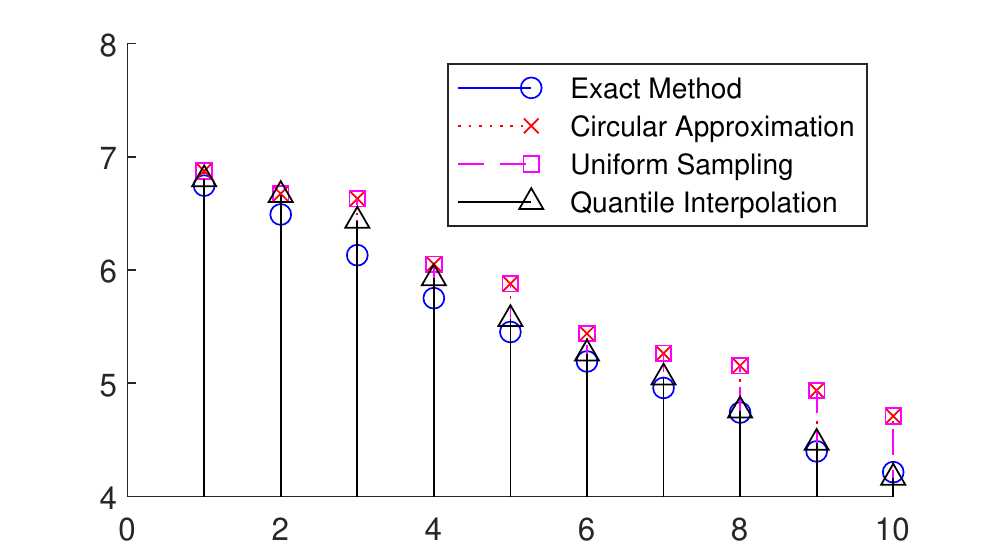}}
\hspace{-12pt}
{\includegraphics[width=0.26\columnwidth]{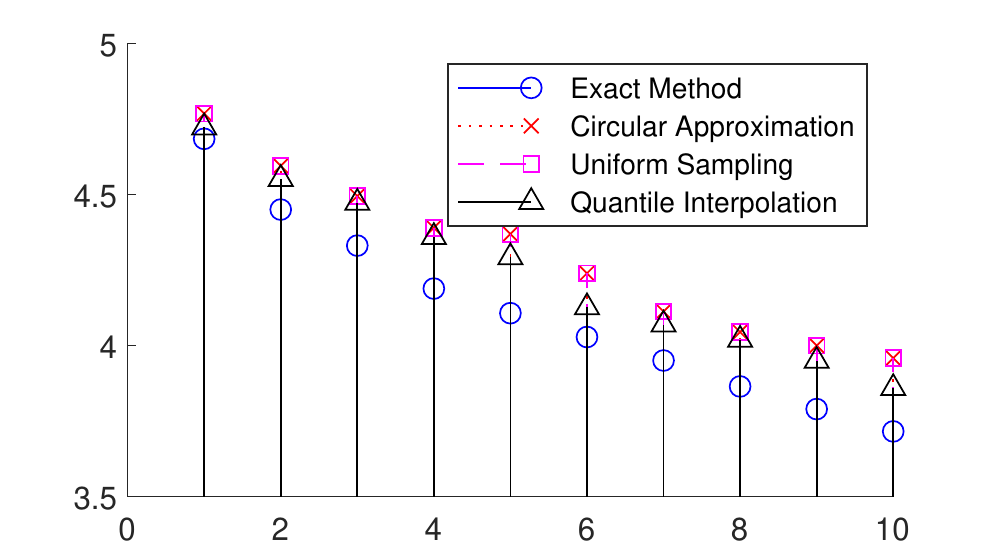}}
\hspace{-12pt}
{\includegraphics[width=0.26\columnwidth]{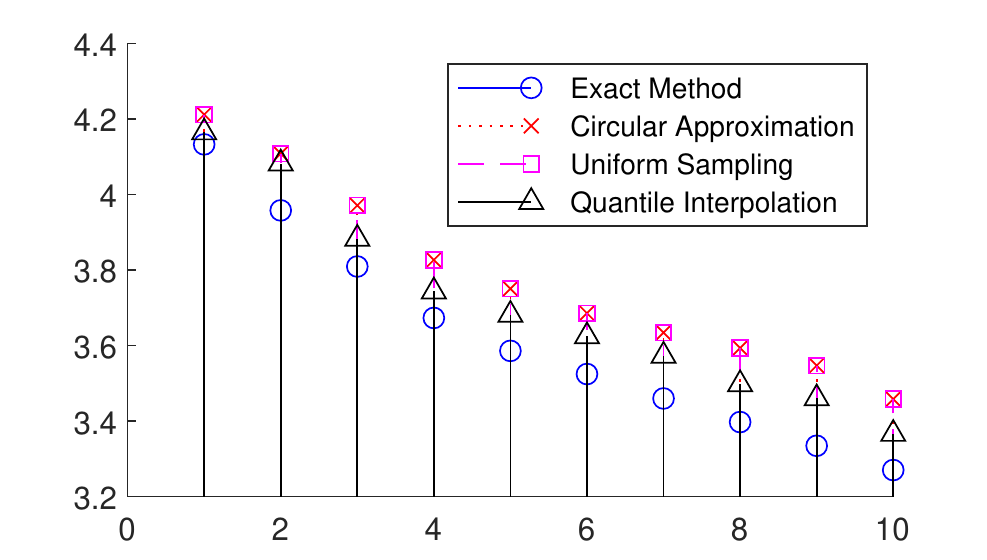}}
\hspace{-12pt}
\begin{center}
\caption{Exact and approximated singular values of linear convolutional layers arranged in descending order. For illustration, only 10 singular values are plotted, each of which represents the behavior of a cluster of singular values. Four types of convolutional filters of pre-trained VGG16 (first row) and VGG19 (second row) networks on ImageNet dataset are considered from first to last column with sizes $64 \times 3 \times 3 \times 3$ (conv1\_1),  $64 \times 64 \times 3 \times 3$ (conv1\_2), $128 \times 64 \times 3 \times 3$ (conv2\_1), and $128 \times 128 \times 3 \times 3$ (conv2\_2), respectively. }
\label{fig:Fig-2}
\end{center}
\vskip -0.35in
\end{figure}

Figure \ref{fig:Fig-2} presents the singular values of pretrained VGG models, where most convolutional layers have size $3  \time 3$ filters. With respect to singular values, VGG16 and VGG19 have similar spectral behavior. The improvement of the quantile interpolation over the circular approximation lies in small singular values, while for the largest singular value the improvement is subtle. %
It is also observed that, as the number of input/output channels increases, the singular values are decreasing.

\begin{figure}[t]
\vskip 0.1in
{\includegraphics[width=0.26\columnwidth]{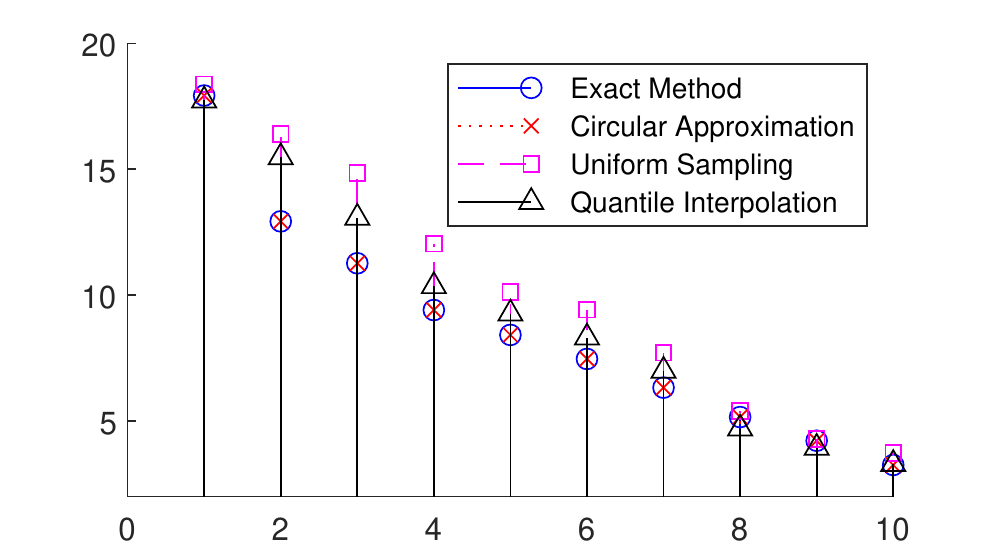}}
\hspace{-12pt}
{\includegraphics[width=0.26\columnwidth]{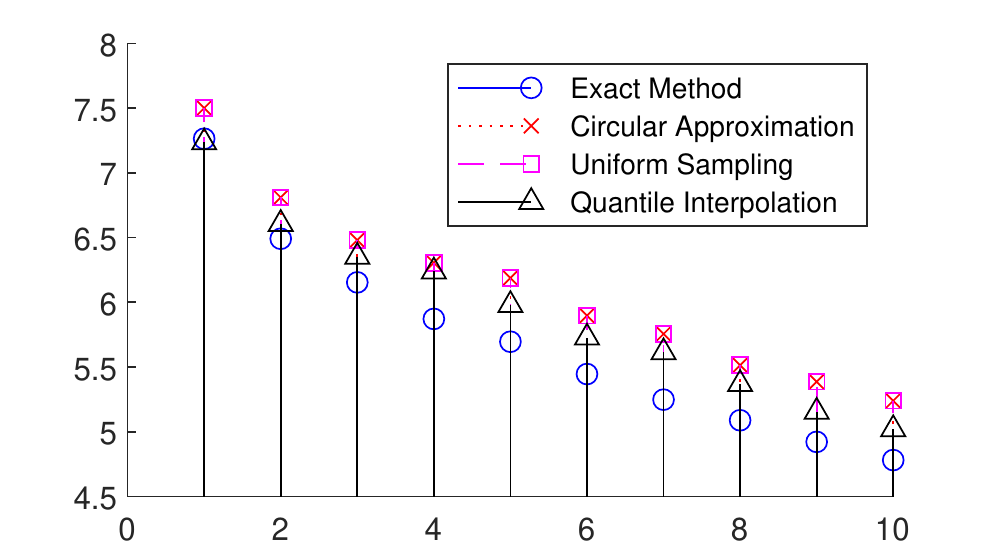}}
\hspace{-12pt}
{\includegraphics[width=0.26\columnwidth]{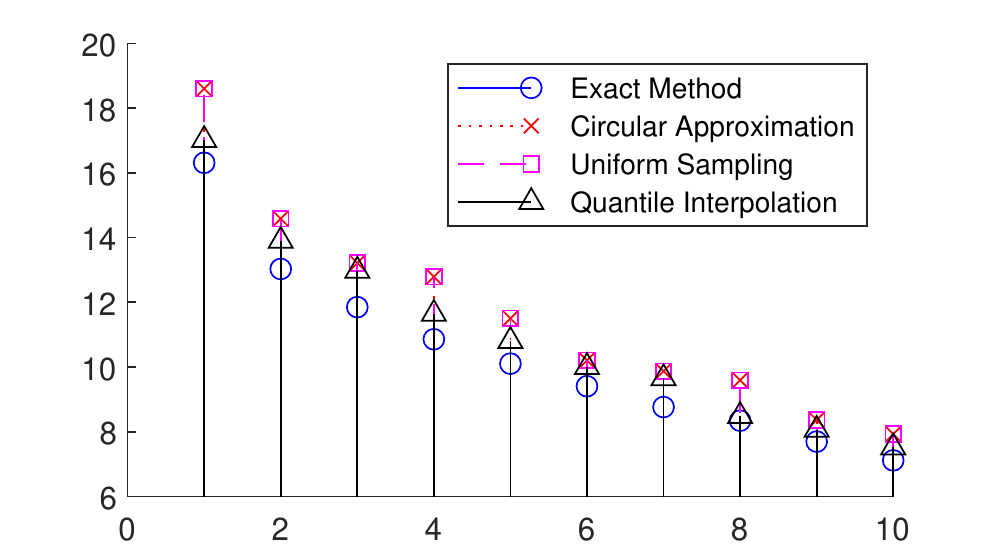}}
\hspace{-12pt}
{\includegraphics[width=0.26\columnwidth]{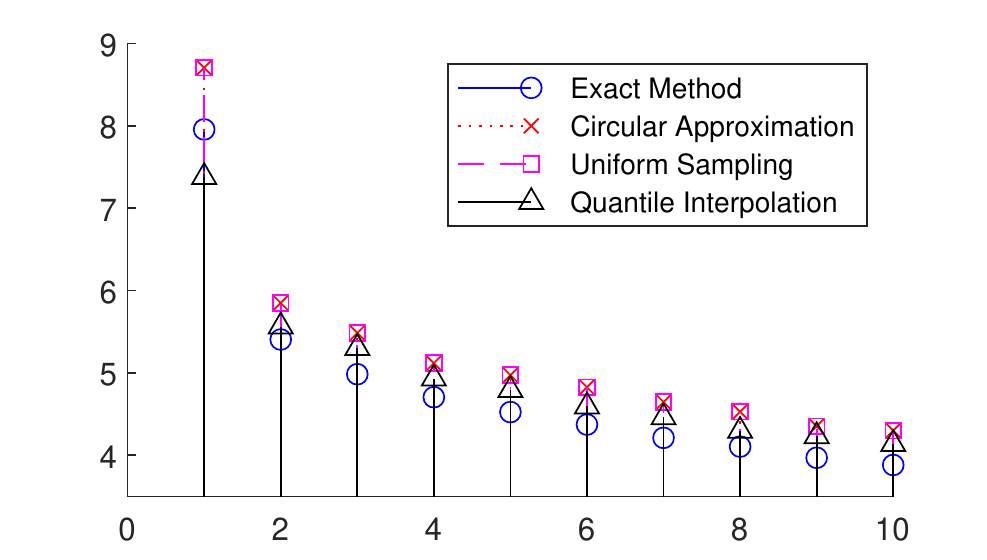}}
\\
\hspace{-12pt}
{\includegraphics[width=0.26\columnwidth]{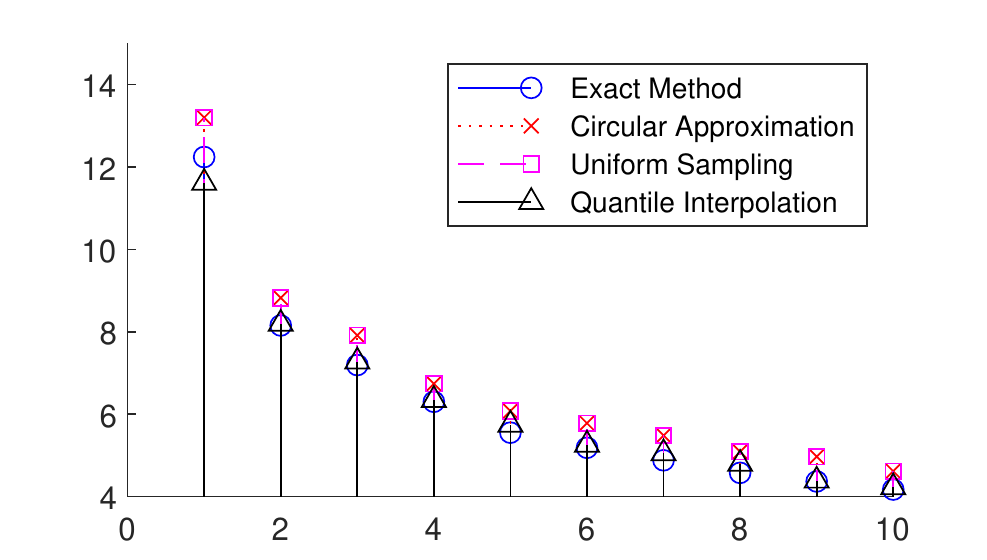}}
\hspace{-12pt}
{\includegraphics[width=0.26\columnwidth]{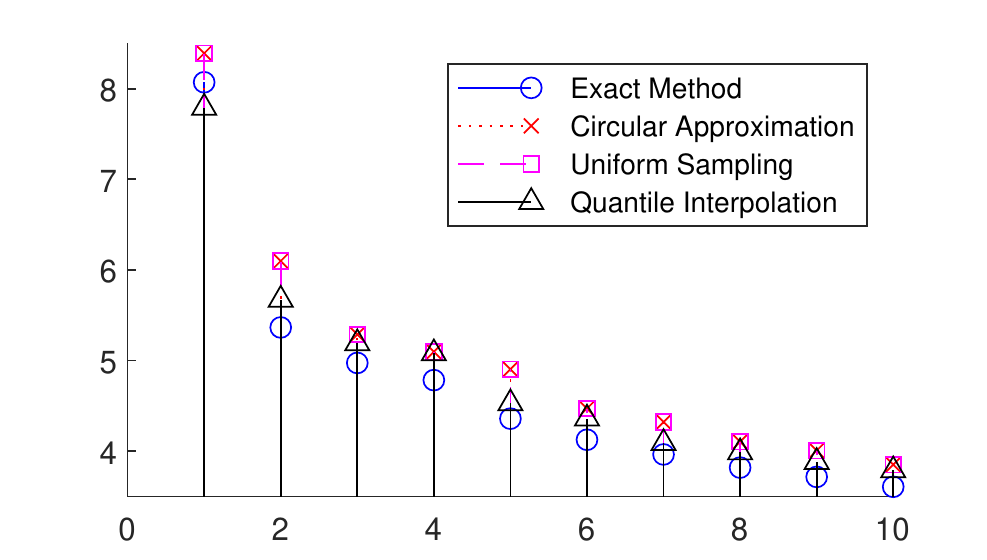}}
\hspace{-12pt}
{\includegraphics[width=0.26\columnwidth]{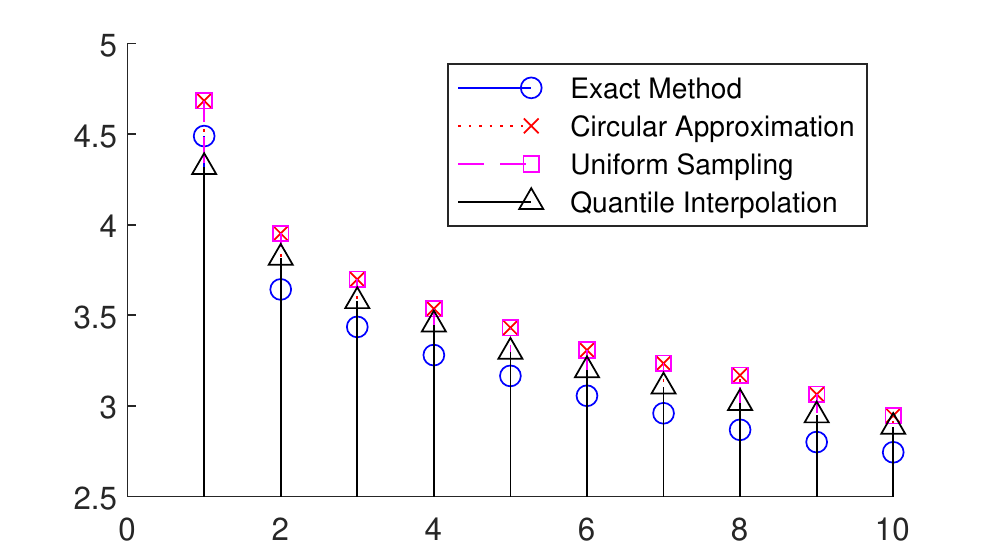}}
\hspace{-12pt}
{\includegraphics[width=0.26\columnwidth]{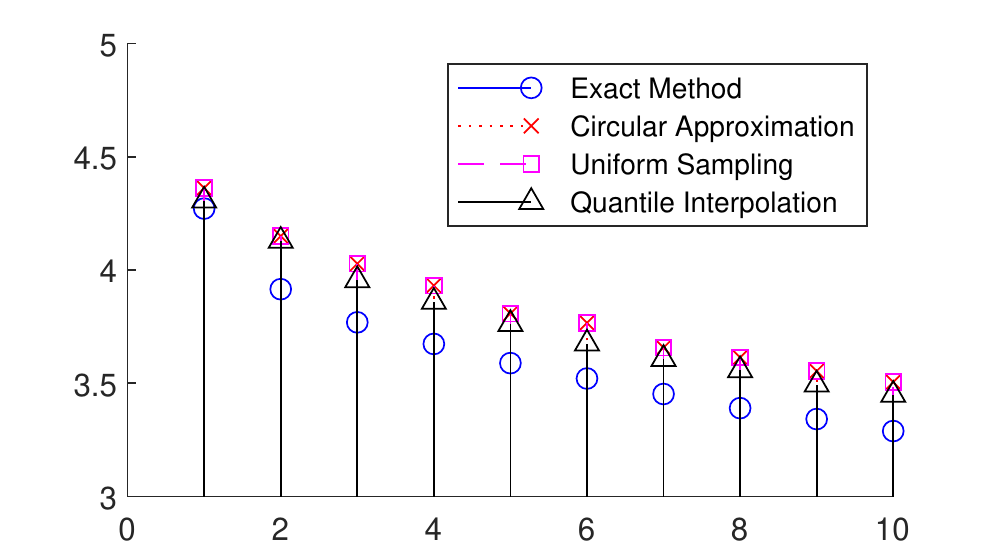}}
\hspace{-12pt}
\begin{center}
\vspace{-0.1in}
\caption{Exact and approximated singular values of linear convolutional layers arranged in descending order. For illustration, only 10 singular values are plotted, each of which represents the behavior of a cluster of singular values. Four types of convolutional filters of pre-trained networks on ImageNet dataset are considered from top left to bottom right with sizes $96 \times 3 \times 11 \times 11$ (AlexNet conv1),  $128 \times 48 \times 5 \times 5$ (AlexNet conv2), $64 \times 3 \times 7 \times 7$ (DenseNet201 conv1), $96 \times 32 \times 5 \times 5$ (GoogLeNet Inception\_3b), $64 \times 48 \times 5 \times 5$ (InceptionResNetv2 conv2d\_8), $64 \times 48 \times 5 \times 5$ (Inceptionv3 conv2d\_8), $64 \times 48 \times 5 \times 5$ (Inceptionv3 conv2d\_15), $64 \times 48 \times 5 \times 5$ (Inceptionv3 conv2d\_22), respectively. }
\label{fig:Fig-3}
\end{center}
\vskip -0.35in
\end{figure}

\begin{figure}[t]
\vskip 0.1in
{\includegraphics[width=0.26\columnwidth]{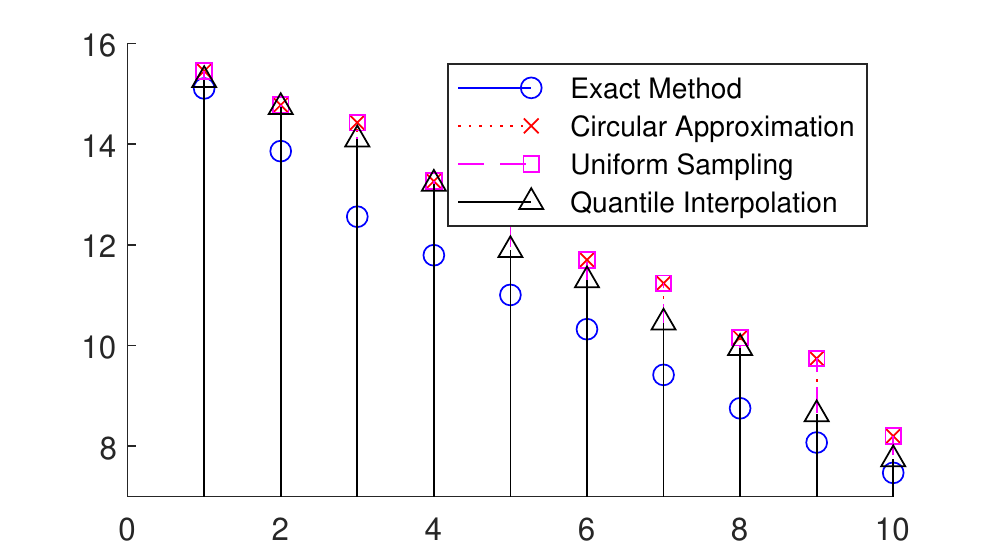}}
\hspace{-12pt}
{\includegraphics[width=0.26\columnwidth]{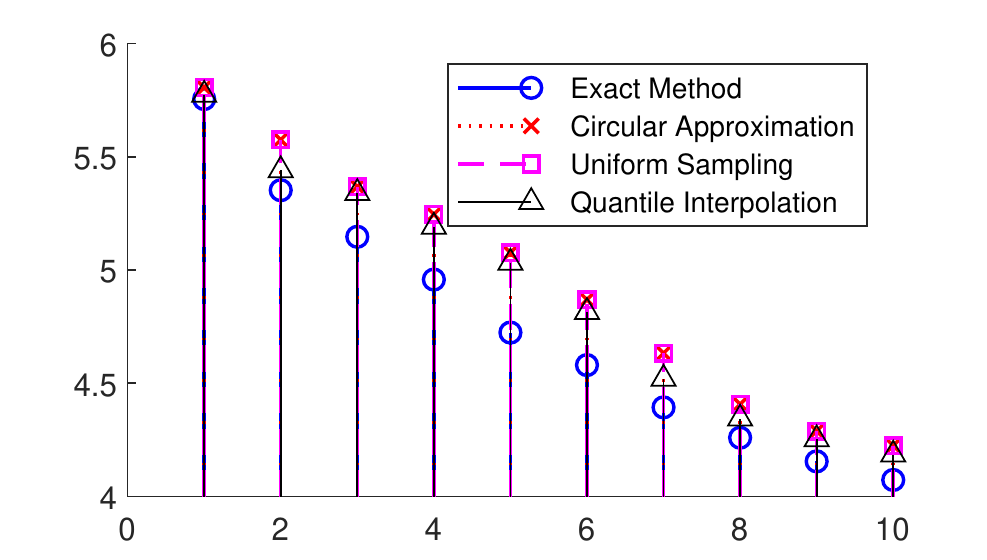}}
\hspace{-12pt}
{\includegraphics[width=0.26\columnwidth]{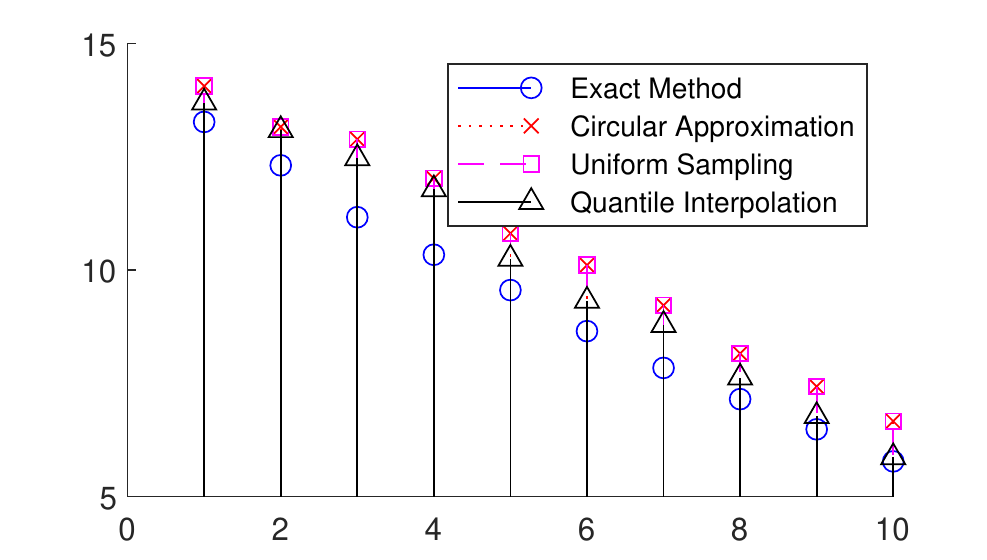}}
\hspace{-12pt}
{\includegraphics[width=0.26\columnwidth]{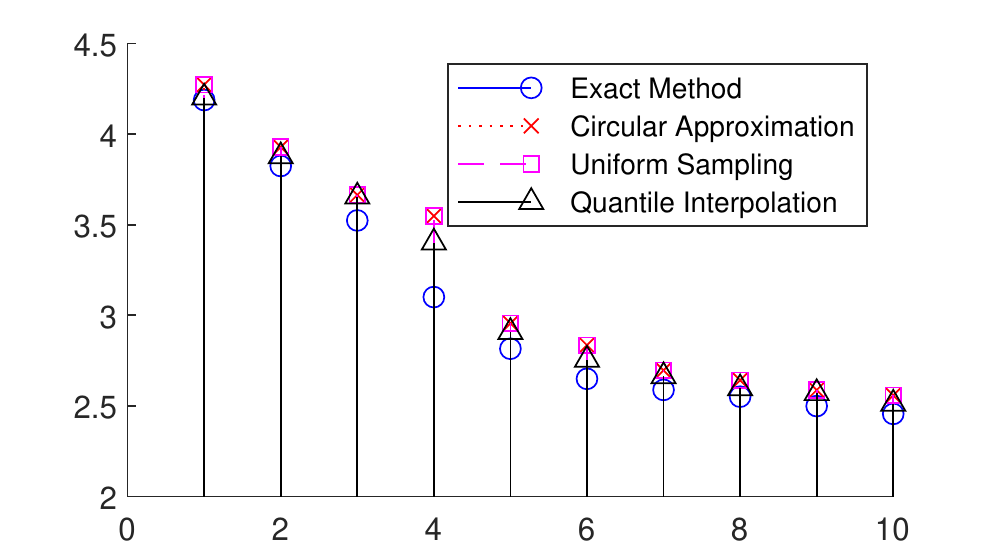}}
\\
\hspace{-12pt}
{\includegraphics[width=0.26\columnwidth]{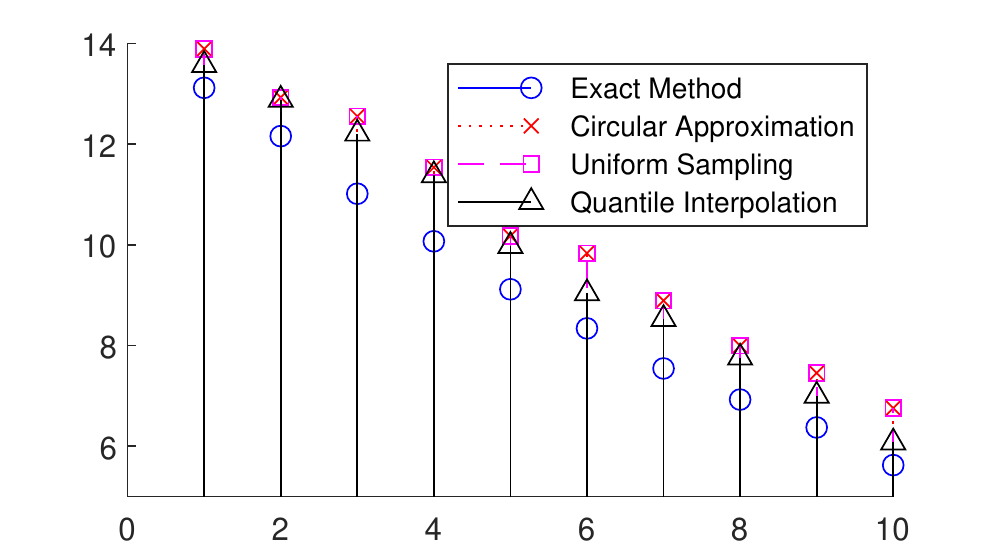}}
\hspace{-12pt}
{\includegraphics[width=0.26\columnwidth]{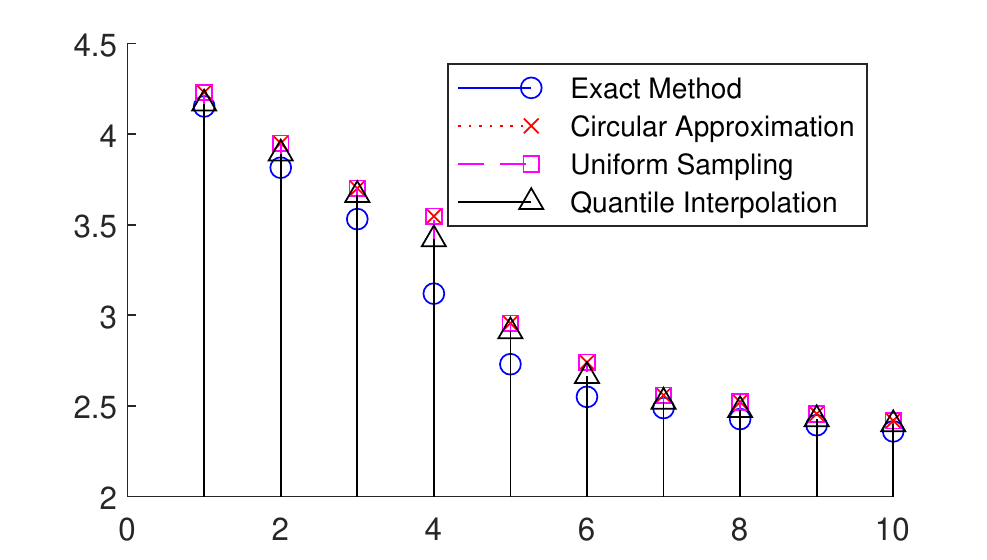}}
\hspace{-12pt}
{\includegraphics[width=0.26\columnwidth]{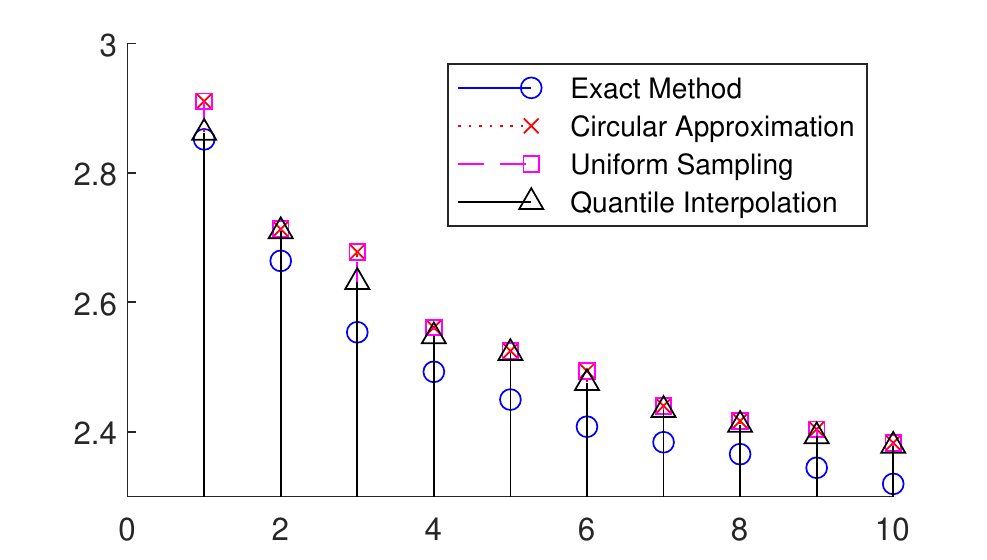}}
\hspace{-12pt}
{\includegraphics[width=0.26\columnwidth]{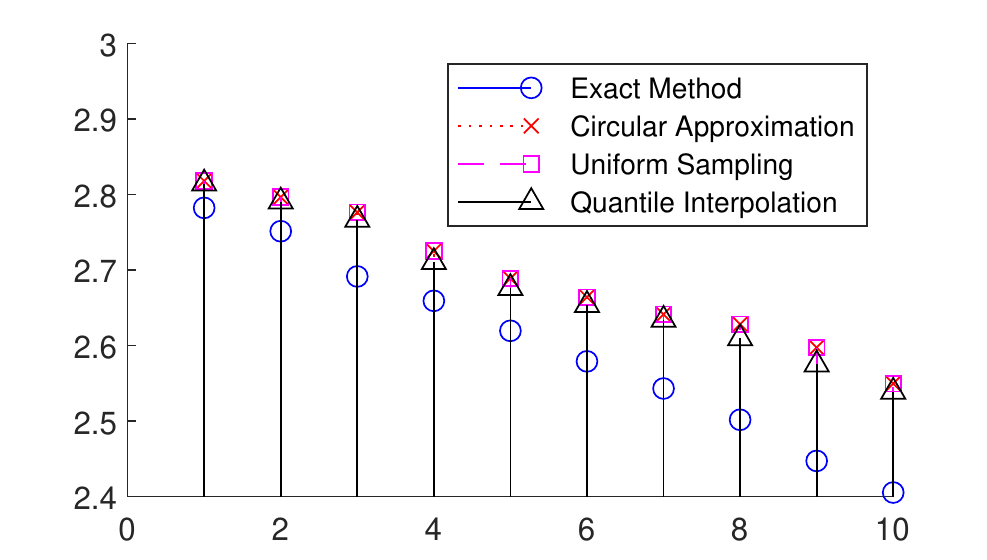}}
\hspace{-12pt}
\begin{center}
\vspace{-0.1in}
\caption{Exact and approximated singular values of linear convolutional layers arranged in descending order. For illustration, only 10 singular values are plotted, each of which represents the behavior of a cluster of singular values. Four types of convolutional filters of pre-trained ResNets are considered from top left to bottom right with sizes $64 \times 3 \times 7 \times 7$ (ResNet-18 conv1),  $64 \times 64 \times 3 \times 3$ (ResNet-18 res2a), $64 \times 3 \times 7 \times 7$ (ResNet-50 conv1), and $64 \times 64 \times 3 \times 3$ (ResNet-50 res2a), $64 \times 3 \times 7 \times 7$ (ResNet-101 conv1), $64 \times 64 \times 3 \times 3$ (ResNet-101 res2a), $64 \times 64 \times 3 \times 3$ (ResNet-101 res2b), $64 \times 64 \times 3 \times 3$ (ResNet-101 res2c), respectively. }
\label{fig:Fig-4}
\end{center}
\vskip -0.35in
\end{figure}

Figure \ref{fig:Fig-3} presents the singular values from another set of pre-trained networks, for which we select the filters with larger size, i.e., $h=w=5,7,11$. It is observed that the improvement of the quantile interpolation over the circular approximation is enhanced for the filters with larger size. The improvement of the largest singular values is more significant than VGG networks.

Figure \ref{fig:Fig-4} is dedicated to ResNets in which we present singular values for the convolutional layers in ResNet-18, ResNet-50, and ResNet-101. For the filters with size $3 \times 3$, the improvement of the largest singular value is subtle, while the smaller singular values contribute much on the improvement, as observed in VGG networks. For the filter with size $7 \times 7$, the major improvement of singular values occurs in the intermediate ones. The negative result is that it seems both circular approximation and quantile interpolation do not work well for ResNet-101 the convolutional layer res2c. The quantile approach with linear interpolation relies much on the circular approximation - if the latter does not work well, so does the former very likely. It may require the nonlinear interpolation.

Table \ref{tab:running} summarizes the accuracy and running time of different singular value computation methods on various convolutional layers of pre-trained networks on ImageNet dataset. The experiments have been conducted in MATLAB on an HP EliteBook (Intel i5 CPU with 8G RAM). The sizes of different filters can be referred as above. The numbers ``$a/b$'' should read as $a\%$ difference from the exact method and the running time is $b$ seconds. Note that for the accuracy we consider the sum of all singular values and use the exact method as the reference. For instance, for the convolutinal layer named ``conv1\_1'' in VGG16 model, the running time of the exact method is 0.1223 seconds, compared with 0.0106 seconds and 0.0245 seconds for uniform sampling and quantile interpolation methods, respectively. For both the circular approximation and uniform sampling, the accuracy of all singular values is 7.51\% larger than the exact value computed by the exact method, while the quantile interpolation reduces such a difference to 1.47\%.

It can be observed from experimental results that quantile approximation always outperforms the circular approximation by more than 5\% in overall accuracy for most cases at the expense of extra running time. The running time is negligible compared with that using SVD in the exact method.
The most significant improvement in approximation accuracy is for AlexNet conv1 with filter size $96\times3\times11\times11$. This confirms our observation earlier that the quantile interpolation has more substantial improvement over the circular approximation for the larger filter size. The least improvement happens for ResNet-101 Res2b/c with filter size $3 \times 3$.

\begin{table}[t]
\caption{Comparison of approximation accuracy and running time.}
\label{tab:running}
\vskip -0.1in
\begin{center}
\begin{small}
\begin{sc}
\begin{tabular}{lcccc}
\toprule
Filter & Exact &   Sampling & Quantile\\
\midrule
VGG16 conv1\_1 ($64\times3\times3\times3$) & -/0.1233 &  7.51\%/0.0106 & 1.47\%/0.0245 \\
VGG16 conv1\_2 ($64\times64\times3\times3$) & -/120.53 &  7.79\%/0.1043 & 1.96\%/0.2269 \\
VGG16 conv2\_1 ($128\times64\times3\times3$)& -/145.10 &  7.31\%/0.3483 & 3.32\%/0.5894 \\
VGG16 conv2\_2 ($128\times128\times3\times3$) & -/958.84 &  7.86\%/0.5402 & 3.60\%/0.9414 \\
VGG19 conv1\_1 ($64\times3\times3\times3$) &-/0.1591 &  7.37\%/0.0079 & 1.36\%/0.0173 \\
VGG19 conv1\_2 ($64\times64\times3\times3$) &-/120.87 &  7.81\%/0.0956 & 2.08\%/0.2482 \\
VGG19 conv2\_1 ($128\times64\times3\times3$) & -/141.39 &  7.32\%/0.2218 & 3.47\%/0.4453 \\
VGG19 conv2\_2 ($128\times128\times3\times3$) & -/964.5 &  7.88\%/0.4840 & 3.90\%/0.7565 \\
AlexNet conv1 ($96\times3\times11\times11$) & -/0.3429 &  22.47\%/0.1218 & 10.56\%/0.1252 \\
AlexNet conv2 ($128\times48\times5\times5$) & -/44.03& 9.65\%/0.2835 & 3.90\%/0.4551\\
DenseNet201 conv1 ($64\times3\times7\times7$) & -/0.1481 &  11.79\%/0.0347 & 5.11\%/0.0429 \\
GoogLeNet conv1 ($64\times3\times7\times7$) & -/0.1352 &  14.13\%/0.0248 & 6.71\%/0.0340 \\
GoogLeNet Inception\_3a ($32\times16\times5\times5$) & -/2.6050 &  10.56\%/0.0341 & 5.30\%/0.0686 \\
GoogLeNet Inception\_3b ($96\times32\times5\times5$) & -/15.869 &  13.53\%/0.1138 & 8.68\%/0.1764 \\
InceptionResNetv2 conv2d\_8 ($64\times48\times5\times5$) & -/22.997 &  10.15\%/0.1218 & 4.81\%/0.2149 \\
Inceptionv3 conv2d\_8 ($64\times48\times5\times5$) & -/18.055 &  9.36\%/0.1105 & 4.99\%/0.2049 \\
Inceptionv3 conv2d\_15 ($64\times48\times5\times5$) & -/18.281 &  10.32\%/0.1143 & 6.40\%/0.2054 \\
Inceptionv3 conv2d\_22 ($64\times48\times5\times5$) & -/18.656 &  10.17\%/0.1808 & 6.50\%/0.2931 \\
ResNet-18 conv1 ($64\times3\times7\times7$) & -/0.1416 &  12.23\%/0.0240 & 5.74\%/0.0358 \\
ResNet-18 res2a ($64\times64\times3\times3$) & -/122.55 &  5.59\%/0.0886 & 2.46\%/0.2086 \\
ResNet-50 conv1 ($64\times3\times7\times7$) & -/0.1463 &  13.39\%/0.0291 & 6.78\%/0.0383 \\
ResNet-50 res2a ($64\times64\times3\times3$) & -/133.45 &  6.26\%/0.0918 & 3.08\%/0.2106 \\
ResNet-101 conv1 ($64\times3\times7\times7$) & -/0.1412  & 13.63\%/0.0274 & 7.25\%/0.0395 \\
ResNet-101 res2a  ($64\times64\times3\times3$) & -/131.003 & 6.04\%/0.0984 & 2.94\%/0.2163 \\
ResNet-101 res2b  ($64\times64\times3\times3$) & -/126.58 & 6.66\%/0.0929 & 4.47\%/0.2147 \\
ResNet-101 res2c  ($64\times64\times3\times3$) & -/120.56 &  7.18\%/0.0946 & 5.49\%/0.2144 \\
\bottomrule
\end{tabular}
\end{sc}
\end{small}
\end{center}
\vskip -0.25in
\end{table}

\subsubsection{Weights from Training Process}

Figure \ref{fig:Fig-5} presents the singular value approximation for the weights extracted from the training process of ResNet-20 on CIFAR-10 dataset. We consider the filters of four convolutional layers after 10 and 100 training epochs. It is observed that, as the larger singular values increase with training epochs, the improvement of quantile approach over circular approximation is enlarged, while the improvement of small singular values is moderate during the training. %
It suggests that for ResNet models, while the accuracy of circular approximation is relatively reasonable for smaller singular values, it calls for more accurate approximation methods for larger singular values.

\begin{figure}[t]
\vskip 0.1in
{\includegraphics[width=0.26\columnwidth]{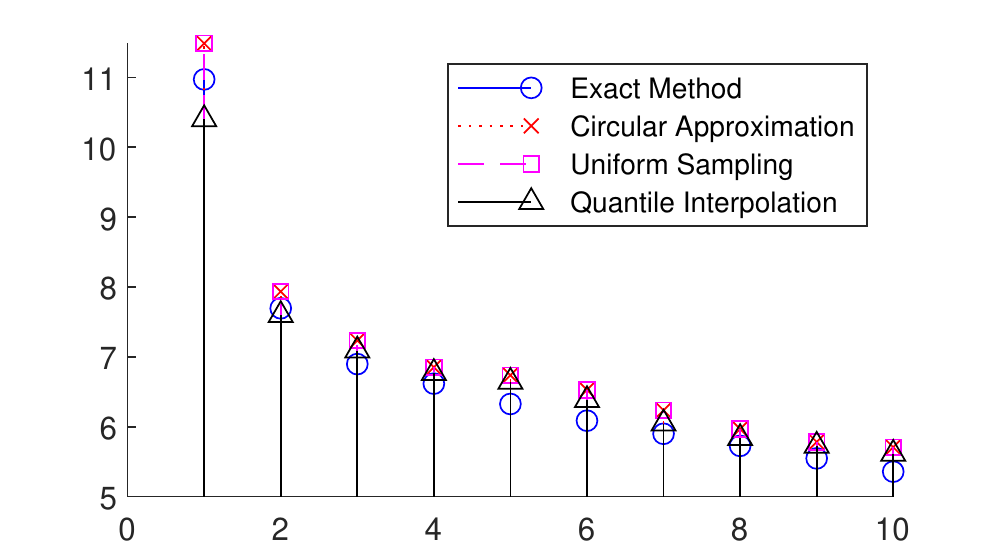}}
\hspace{-12pt}
{\includegraphics[width=0.26\columnwidth]{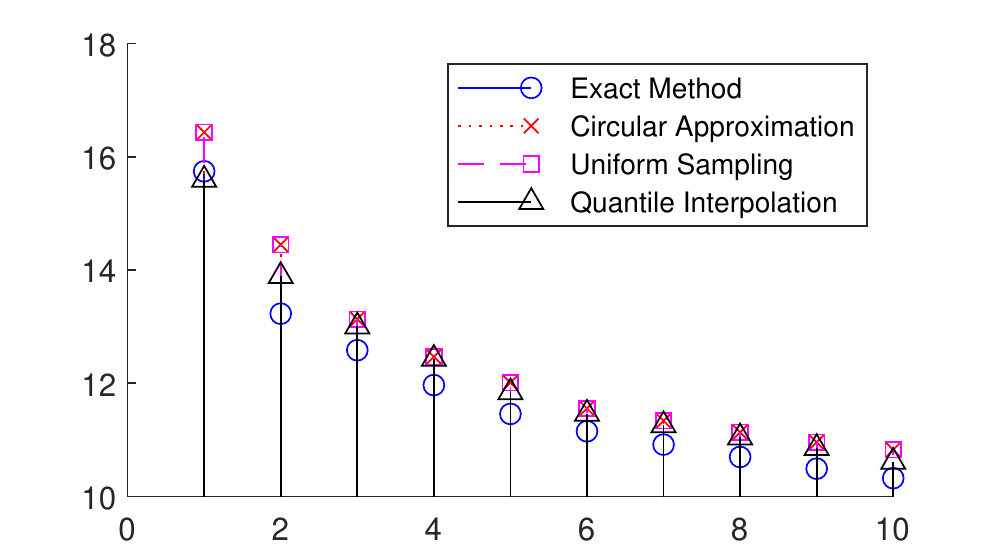}}
\hspace{-12pt}
{\includegraphics[width=0.26\columnwidth]{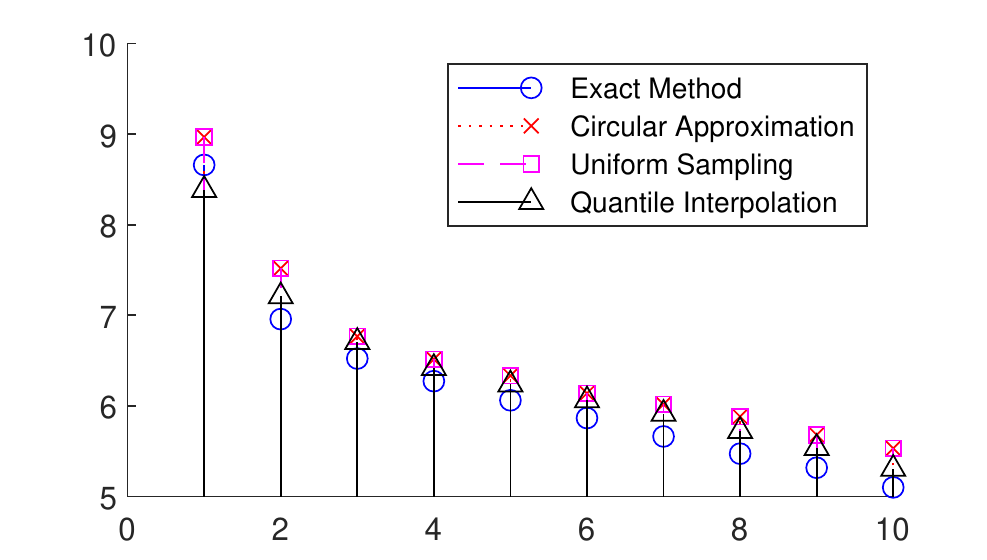}}
\hspace{-12pt}
{\includegraphics[width=0.26\columnwidth]{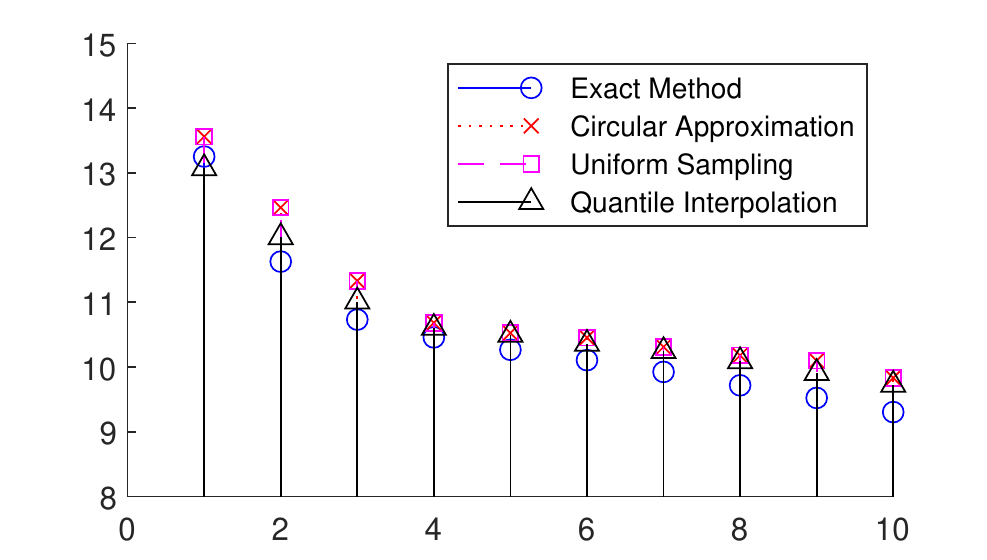}}
\hspace{-12pt}
\hspace{-12pt}
{\includegraphics[width=0.26\columnwidth]{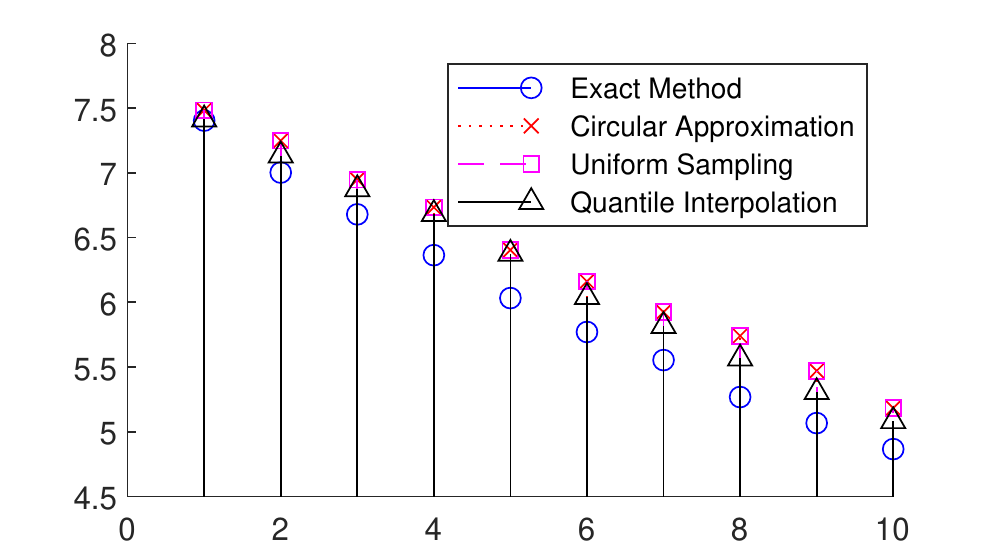}}
\hspace{-12pt}
{\includegraphics[width=0.26\columnwidth]{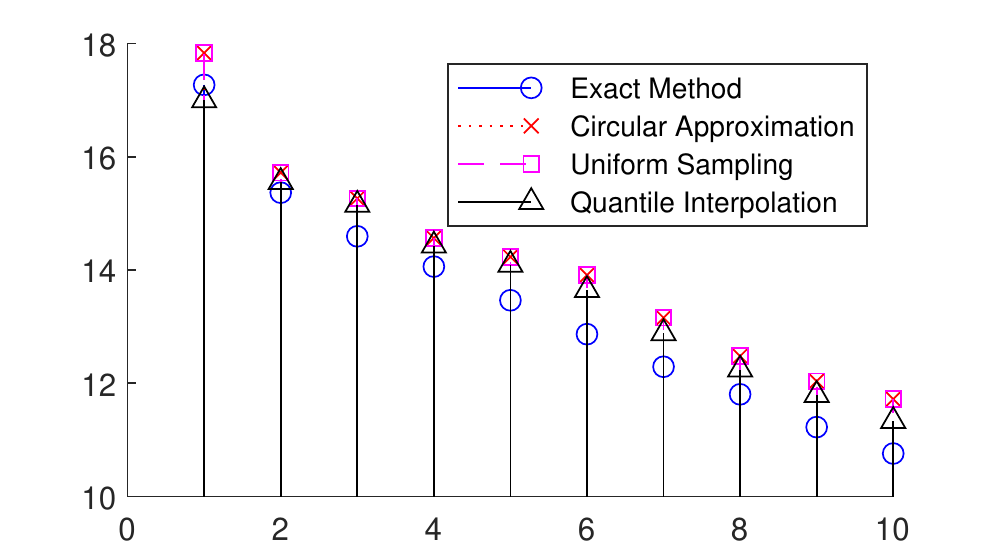}}
\hspace{-12pt}
{\includegraphics[width=0.26\columnwidth]{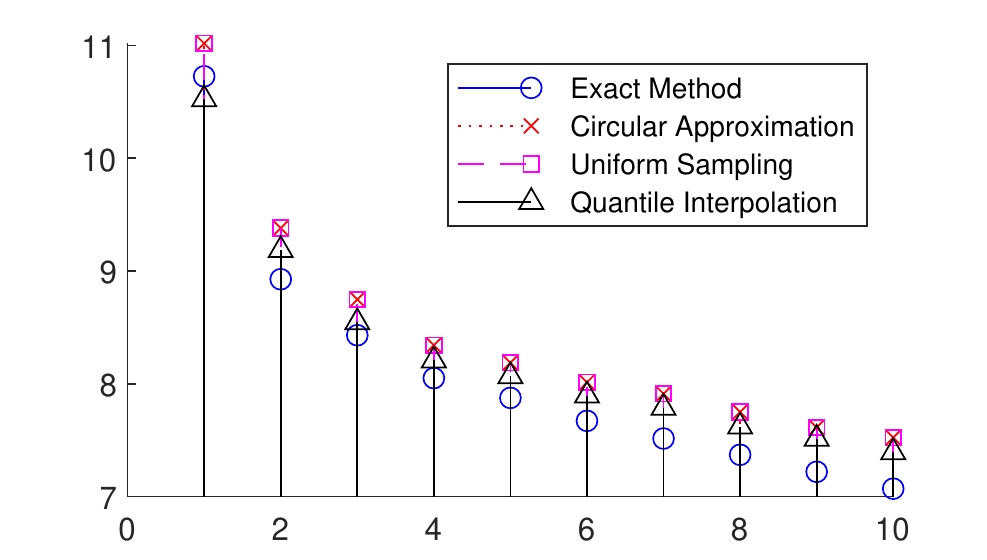}}
\hspace{-12pt}
{\includegraphics[width=0.26\columnwidth]{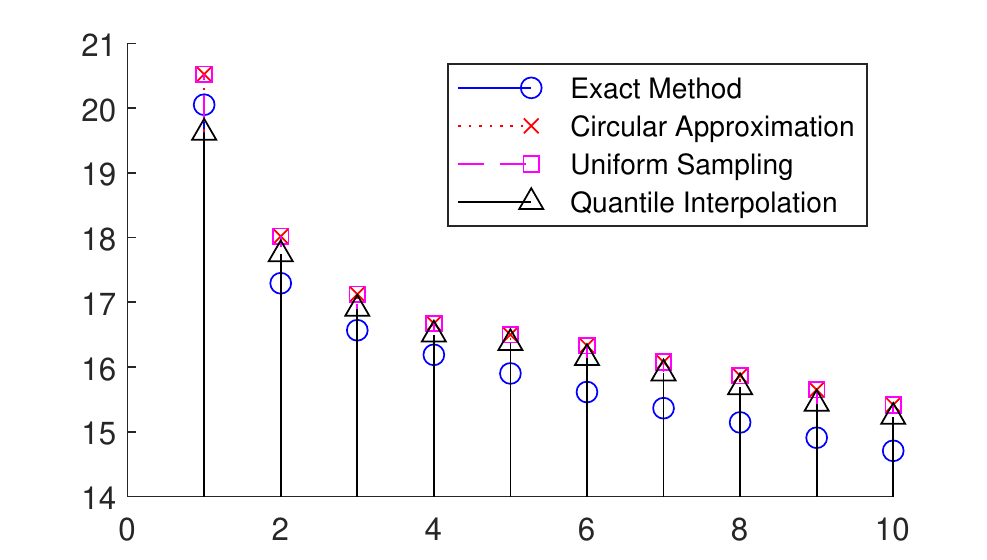}}
\hspace{-12pt}
\begin{center}
\caption{Exact and approximated singular values of linear convolutional layers arranged in descending order. For illustration, only 10 singular values are plotted, each of which represents the behavior of a cluster of singular values. Four types of convolutional filters of the ResNet-20 network trained on CIFAR-10 dataset are considered with sizes $16 \times 16 \times 3 \times 3$ (layer1-conv1),  $16 \times 16 \times 3 \times 3$ (layer1-conv2), $32 \times 16 \times 3 \times 3$ (layer2-conv1), and $64 \times 64 \times 3 \times 3$ (layer3-conv2), respectively. For each filter, the plot after 10 training epochs comes first, followed by the one after 100 training epochs.}
\label{fig:Fig-5}
\end{center}
\vskip -0.35in
\end{figure}

\subsection{Spectral Norm Bounding}
\label{sec:experiments-snr}
\subsubsection{Accuracy vs. Running Time}
As did in the main text, we evaluate the accuracy of spectral norm bound \eqref{eq:upper-bound-F-2norm}, \eqref{eq:upperibound-oneinfnorm}, and \eqref{eq:upperibound-2norm} against the running time for different pre-trained convolutional layers with input size $10 \times 10$. The experiments are conducted on an HP EliteBook with Intel i5 CPU.
Table \ref{tab:running-spectral} presents the accuracy and the running time for different convolutional layers, where $a/b$ reads as the spectral norm bound is $a$ times of the circular approximation and the computation takes $b$ seconds. 
We have the similar observations as those in the main text. In particular, the spectral norm bound \eqref{eq:upperibound-2norm} has comparable accuracy as \eqref{eq:upper-bound-F-2norm} but the computation of the former takes much less time than that of the latter. The computation of matrix norms uses the function ``norm'' in MATLAB.

\begin{table}[t]
\caption{Comparison of spectral norm bounding accuracy and running time.}
\label{tab:running-spectral}
\vskip -0.1in
\begin{center}
\begin{small}
\begin{sc}
\begin{tabular}{lcccc}
\toprule
Filter  & \eqref{eq:upper-bound-F-2norm} &  \eqref{eq:upperibound-oneinfnorm} & \eqref{eq:upperibound-2norm}\\
\midrule
VGG16 conv1\_1 ($64\times3\times3\times3$) &  1.3974/0.0407 & 1.6218/0.0221 & 1.9398/0.0011 \\
VGG16 conv1\_2 ($64\times64\times3\times3$) &  1.3726/0.0954 & 2.5652/0.0600 & 1.6452/0.0037 \\
VGG16 conv2\_1 ($128\times64\times3\times3$) & 1.3146/0.1336 & 3.1937/0.0771 & 2.0145/0.0064 \\
VGG16 conv2\_2 ($128\times128\times3\times3$) & 1.3680/0.2517 & 4.4693/0.1715 & 1.9320/0.0136 \\
VGG19 conv1\_1 ($64\times3\times3\times3$) & 1.4092/0.0200 & 1.6601/0.0194 & 1.9479/0.0012 \\
VGG19 conv1\_2 ($64\times64\times3\times3$)  & 1.3791/0.0601 & 2.6451/0.0520 & 1.6618/0.0026 \\
VGG19 conv2\_1 ($128\times64\times3\times3$) & 1.3995/0.1239 & 3.1582/0.0908 & 2.0804/0.0057 \\
VGG19 conv2\_2 ($128\times128\times3\times3$) & 1.3692/0.2485 & 4.2432/0.1524 & 1.9724/0.0122 \\
AlexNet conv1 ($96\times3\times11\times11$)  & 2.9577/0.0230 & 2.3555/0.1005 & 4.9880/0.0032\\
AlexNet conv2 ($128\times48\times5\times5$) & 2.0185/0.1143 & 3.7714/0.1284 & 2.6927/0.0105\\
DenseNet201 conv1 ($64\times3\times7\times7$) & 2.3323/0.0114 & 2.0559/0.0421 & 3.4654/0.0009 \\
GoogLeNet conv1 ($64\times3\times7\times7$) & 2.4652/0.0136 & 2.3505/0.0328 & 3.8639/0.0007 \\
GoogLeNet Inception\_3a ($32\times16\times5\times5$) & 1.6482/0.0123 & 2.0477/0.0383 & 2.6716/0.0012 \\
GoogLeNet Inception\_3b ($96\times32\times5\times5$)  & 1.3553/0.0546 & 2.5794/0.0806 & 2.1185/0.0035 \\
InceptionResNetv2 conv2d\_8 ($64\times48\times5\times5$) & 1.3767/0.0643 & 2.6342/0.0958 & 1.7666/0.0046 \\
Inceptionv3 conv2d\_8 ($64\times48\times5\times5$) & 1.6534/0.0665 & 2.8253/0.0728 & 2.3093/0.0051 \\
Inceptionv3 conv2d\_15 ($64\times48\times5\times5$) & 2.0562/0.0568 & 3.2000/0.0891 & 2.9470/0.0052 \\
Inceptionv3 conv2d\_22 ($64\times48\times5\times5$) & 2.2887/0.0552 & 4.1094/0.0840 & 3.8964/0.0071 \\
ResNet-18 conv1 ($64\times3\times7\times7$) & 3.0015/0.0130 & 2.1353/0.0251 & 4.3294/0.0009 \\
ResNet-18 res2a ($64\times64\times3\times3$) & 1.6284/0.0905 & 3.2053/0.0543 & 2.1965/0.0035 \\
ResNet-50 conv1 ($64\times3\times7\times7$) & 2.9348/0.0162 & 2.1886/0.0336 & 3.9626/0.0004 \\
ResNet-50 res2a ($64\times64\times3\times3$) & 1.3949/0.0609 & 2.9974/0.0451 & 2.0898/0.0044 \\
ResNet-101 conv1 ($64\times3\times7\times7$) & 2.9633/0.0095 & 1.9349/0.0302 & 3.8502/0.0004 \\
ResNet-101 res2a  ($64\times64\times3\times3$) & 1.4508/0.0615 & 2.9684/0.0500 & 2.1092/0.0032 \\
ResNet-101 res2b  ($64\times64\times3\times3$) & 1.7636/0.0631 & 3.3880/0.0481 & 2.5061/0.0026 \\
ResNet-101 res2c  ($64\times64\times3\times3$) & 1.6225/0.0643 & 3.5631/0.0449 & 2.3516/0.0027 \\
\bottomrule
\end{tabular}
\end{sc}
\end{small}
\end{center}
\vskip -0.15in
\end{table}

\subsubsection{Regularization}
We use spectral norm bounds as regularizers during the training of ResNet-20 model on CIFAR-10 dateset.
According to the accuracy and running time of different spectral norm bounds in Table \ref{tab:running-spectral}, we place our focus on the first \eqref{eq:upper-bound-F-2norm} and the third bounds \eqref{eq:upperibound-2norm} for spectral regularization. 

Given the training data samples $\{(\xv_i,y_i)\}_{i=1}^N$ drown from an unknown distribution of $(\xv,y)$ for training an $L$-layer deep neural network model $y=f_{\Theta}(\xv)$ with parameters $\Theta$, the spectral regularization is to minimize the following objective function
\begin{align}
    \min_{\Theta} \ \E_{(\xv,y)} \ell (f_{\Theta}(\xv),y) + \beta \sum_{j=1}^{L} R^u_j
\end{align}
where $\ell(f)$ is the loss function of the model for training, $R^u_j$ is a regularization term using the spectral norm upper bounds of the $j$-th layer, e.g.,  \eqref{eq:upper-bound-F-2norm}-\eqref{eq:upperibound-2norm}, and $\beta>0$ is a constant to balance between the loss function and the spectral norm regularizer.

In the experiments, the cross entropy function is chosen as the loss function. For the $j$-th convolutional layer, the regularization term $R^u_j$ is the spectral norm upper bounds chosen from \eqref{eq:upper-bound-F-2norm} with $R^u_j=\sqrt{hw}\min\{\norm{\Rm}_2, \norm{\Lm}_2\}$ and \eqref{eq:upperibound-2norm} with $R^u_j=\sum_k \sum_l \norm{\Tm_{k,l}}_2$, respectively. For the fully-connected layers, $R^u_j$ is directly chosen as the exact spectral norm of the weight matrices. As both the upper bounds in \eqref{eq:upper-bound-F-2norm} and \eqref{eq:upperibound-2norm} are in the form of spectral norm, we adopt power method to compute it in the forward propagation. 

As shown in the proof of Theorem \ref{theorem:bounding-norm}, $\Rm$ and $\Lm$ are reshaped matrices of the convolutional filter $\Km$ with sizes $hc_{out} \times wc_{in}$ and $wc_{out} \times hc_{in}$, respectively, in contrast to the set of $hw$ matrices $\{\Tm_{k,l}\}$ with size $c_{out} \times c_{in}$ each rearranged from $\Km$. For a matrix $\Am \in \RR^{m \times n}$, the computational complexity of power method is $O(mn)$. While both bounds \eqref{eq:upper-bound-F-2norm} and \eqref{eq:upperibound-2norm} have the same level of computational complexity $O(hwc_{out}c_{in})$, it turns out computing \eqref{eq:upperibound-2norm} with power method is much faster as the matrices has smaller size.
In the backward propagation, the derivative of spectral norms of a matrix $\Am$ can be computed as
$
    \nabla_{\Am} \norm{\Am}_2 =  \vv_1 \uv_1^\T 
$
where $\uv_1$ and $\vv_1$ are the left and right singular vectors corresponding to the largest singular value, respectively.
Such a derivative is used to update weights for SGD in the backward propagation.

Due to limited computing resource, we focus on the training and testing of the ResNet-20 model on CIFAR-10 dataset. The ResNet-20 model has 20 convolutional layers, most of which have a $3 \times 3$ filter. The CIFAR-10 dataset consists of 50,000 training and 10,000 testing images with size $32 \times 32$ in 10 classes.
The batch size is 128, and the learning rate is initialized as 0.1 and changed to 0.01 after 100 training epochs. The weight decay is set to 0, and the momentum is 0.9. 
The final prediction accuracy is collected after in total 150 training epochs.

\begin{figure}[t]
\vskip 0.1in
\hspace{-12pt}
{\includegraphics[width=0.55\columnwidth]{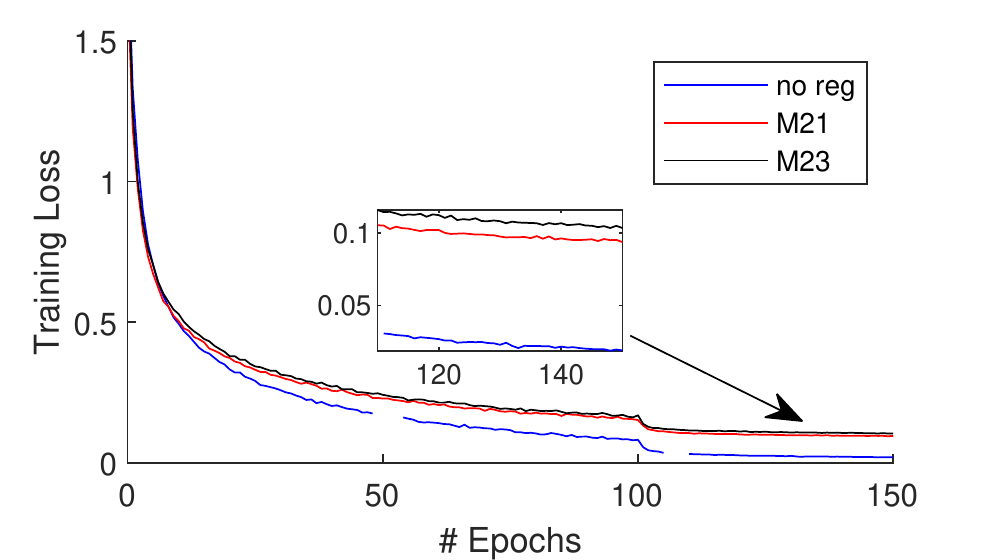}}
\hspace{-12pt}
{\includegraphics[width=0.55\columnwidth]{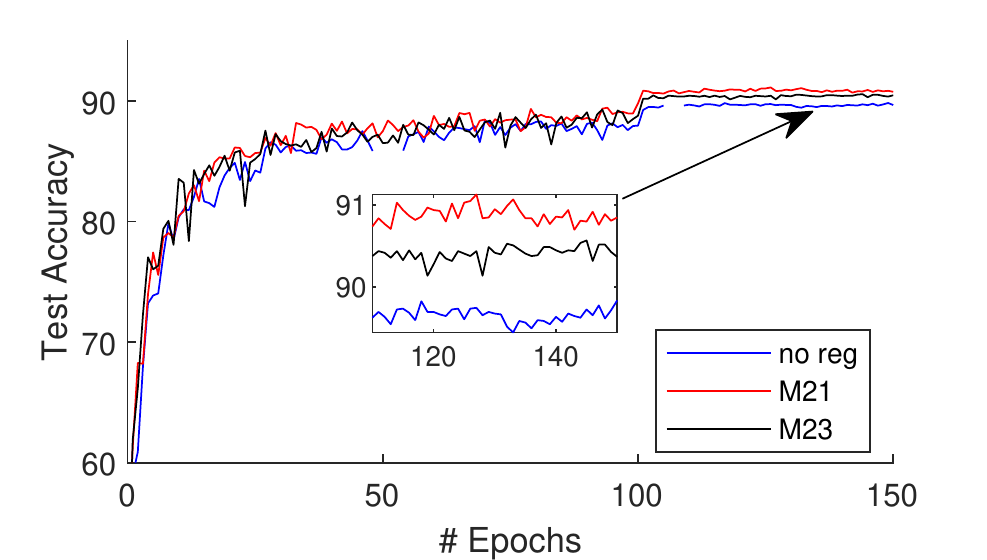}}
\hspace{-12pt}
\begin{center}
\vspace{-0.1in}
\caption{The training loss (left) and test accuracy (right) versus the number of training epochs for ResNet-20 on CIFAR-10 dataset with or without  regularization using spectral norm bounds \eqref{eq:upper-bound-F-2norm} and \eqref{eq:upperibound-2norm} with $\beta=0.0014$.}
\label{fig:Fig-6}
\end{center}
\vskip -0.35in
\end{figure}

For comparison, we use the case with no regularization ($\beta=0$), which has a test accuracy 89.67\%, and the case with regularization ($\beta=0.0014$) using spectral norm bound \eqref{eq:upper-bound-F-2norm}, which has a test accuracy 90.77\%, as references. 
Figure \ref{fig:Fig-6} presents the training loss and test accuracy versus the number of training epochs for ResNet-20 on CIFAR-10 dataset. The training loss keeps decreasing and becomes stable after 120 epochs with a smaller learning rate. Note that the original case with no regularization term has the smallest training loss, and the upper bound \eqref{eq:upperibound-2norm} has a larger training loss because it is less tighter than \eqref{eq:upper-bound-F-2norm}. The test accuracy has a similar behavior, and the regularizer using \eqref{eq:upper-bound-F-2norm} has a higher accuracy (0.3\%) than \eqref{eq:upperibound-2norm}, due to the more tighter upper bound. Both spectral norm regularizers have improvement, 1.1\% with \eqref{eq:upper-bound-F-2norm} as the regularizer and 0.8\% with \eqref{eq:upperibound-2norm} as the regularizer, over the the one with no regularizer, which demonstrates the effective of spectral regularization in enhancing generalization performance. 

\begin{table}[t]
\caption{Comparison of test accuracy with spectral norm regularization.}
\label{tab:prediction}
\vskip -0.1in
\begin{center}
\begin{small}
\begin{sc}
\begin{tabular}{c|c|c|c|c}
\toprule
$\beta$ & 0.0008 & 0.001 &  0.0014 &  0.0018\\
\midrule
Accuracy & 90.40\% & 90.35\%  & {\bf 90.48\%} & 90.24\%\\
\bottomrule
\end{tabular}
\end{sc}
\end{small}
\end{center}
\vskip -0.25in
\end{table}

Table \ref{tab:prediction} collects the test accuracy with regularization using the spectral norm bound \eqref{eq:upperibound-2norm} with different values of $\beta$.
In addition to the observations in the main text, we observe that different values of $\beta$ make different trade-off between loss and spectral regularization, and the choice of $\beta=0.0014$ as that in \cite{Singla2019} for ResNet-34 yields the best generalization performance.

\end{document}